%% file: main.tex
\documentclass[11pt]{article}
\pdfoutput=1
\usepackage[round,compress]{natbib}
\usepackage{ss} 
\include{header}
\include{macros}

\begin{document}

\title{Cost-Driven Representation Learning for \\Linear
Quadratic Gaussian Control: Part II}
\author{%
    \name Yi Tian \email{yitian@mit.edu}\\
    \addr{Massachusetts Institute of Technology}\\
    \name Kaiqing Zhang \email{kaiqing@umd.edu}\\
    \addr{University of Maryland, College Park}\\
    \name Russ Tedrake \email{russt@mit.edu}\\
    \addr{Massachusetts Institute of Technology}\\
    \name Suvrit Sra \email{s.sra@tum.de}\\
    \addr{Technical University Munich}
}
\maketitle

\input{lqg-inf}

\end{document}

%% file: header.tex
\usepackage[utf8]{inputenc}
\usepackage[T1]{fontenc}

\usepackage[verbose=true,
letterpaper,
textheight=8.7in,
textwidth=6.4in,
headheight=12pt,
headsep=25pt,
footskip=30pt]{geometry}

\usepackage{mathtools}
\usepackage{booktabs}
\usepackage{graphicx}
\usepackage{amsmath,amssymb,amsfonts,amsthm,amsxtra,bm}
\usepackage{amsthm}
\usepackage{xspace}
\usepackage{enumerate}
\usepackage{hyperref}
\usepackage{url}
\usepackage{colortbl}
\usepackage{makecell,multirow}       
\usepackage{nicefrac}       
\usepackage{thmtools}  
\usepackage{xspace}
\usepackage{footnote, tablefootnote}
\usepackage{float}
\floatstyle{plaintop}
\restylefloat{table}
\usepackage{epstopdf}
\usepackage{latexsym}
\usepackage{algorithm, algorithmicx, algpseudocode}
\algnewcommand{\LineComment}[1]{\Statex \texttt{// #1}}
\usepackage{todonotes}
\usepackage{thmtools, thm-restate}
\usepackage{bbm}

\numberwithin{equation}{section}

%% file: macros.tex
\newtheorem{theorem}{Theorem}
\newtheorem{lemma}{Lemma}

\newtheorem{proposition}{Proposition}

\newtheorem{definition}{Definition}
\newtheorem{assumption}{Assumption}

\newcommand{\corele}{\textsc{CoReL-E}}
\newcommand{\coreli}{\textsc{CoReL-I}}
\newcommand{\sysid}{\textsc{SysId}}
\newcommand{\cosysid}{\textsc{CoSysId}}

\newcommand{\raw}{\textnormal{raw}}
\newcommand{\state}{\textnormal{state}}

\newcommand{\E}{\mathbb{E}}

\newcommand{\N}{\mathbb{N}}
\newcommand{\p}{\mathbb{P}}
\newcommand{\R}{\mathbb{R}}
\newcommand{\s}{\mathbb{S}}

\newcommand{\tr}{\mathrm{tr}}
\newcommand{\Var}{\mathrm{Var}}
\newcommand{\Cov}{\mathrm{Cov}}
\DeclareMathOperator{\sign}{sign}

\DeclareMathOperator*{\argmin}{argmin}

\newcommand\ip[2]{\left\langle #1, #2 \right\rangle}
\newcommand{\ipc}[2]{\bigl\langle #1, #2 \bigr\rangle}

\newcommand{\given}{\;\vert\;}

\newcommand{\svec}{\mathrm{svec}}
\newcommand{\rank}{\mathrm{rank}}
\newcommand{\poly}{\mathrm{poly}}

\newcommand{\nlsum}{\sum\nolimits}

\newcommand{\nlmax}{\max\nolimits}

\newcommand{\bigo}{\mathcal{O}}

\newcommand{\gauss}{\mathcal{N}}

\newcommand{\eps}{\varepsilon}

\newcommand{\cD}{\mathcal{D}}

\newcommand{\cF}{\mathcal{F}}
\newcommand{\cG}{\mathcal{G}}

\newcommand{\cI}{\mathcal{I}}

\newcommand{\odelta}{\overline{\delta}}

\newcommand{\ob}{\overline{b}}
\newcommand{\oc}{\overline{c}}
\newcommand{\overe}{\overline{e}}
\newcommand{\of}{\overline{f}}

\newcommand{\oA}{\overline{A}}
\newcommand{\oB}{\overline{B}}

\newcommand{\oF}{\overline{F}}
\newcommand{\oH}{\overline{H}}
\newcommand{\oQ}{\overline{Q}}

\newcommand{\spi}{\pi^{\ast}}
\newcommand{\sSigma}{\Sigma^{\ast}}

\newcommand{\sA}{A^{\ast}}
\newcommand{\sB}{B^{\ast}}
\newcommand{\sC}{C^{\ast}}

\newcommand{\sF}{F^{\ast}}

\newcommand{\sK}{K^{\ast}}
\newcommand{\sL}{L^{\ast}}
\newcommand{\sM}{M^{\ast}}
\newcommand{\sN}{N^{\ast}}
\newcommand{\sP}{P^{\ast}}
\newcommand{\sQ}{Q^{\ast}}
\newcommand{\sR}{R^{\ast}}
\newcommand{\sS}{S^{\ast}}

\newcommand{\sX}{X^{\ast}}

\newcommand{\sob}{\ob^{\ast}}
\newcommand{\sof}{\of^{\ast}}
\newcommand{\soA}{\oA^{\ast}}
\newcommand{\soB}{\oB^{\ast}}
\newcommand{\soF}{\oF^{\ast}}
\newcommand{\soQ}{\oQ^{\ast}}
\newcommand{\starb}{b^{\ast}}

\newcommand{\starf}{f^{\ast}}
\newcommand{\sh}{h^{\ast}}

\newcommand{\sx}{x^{\ast}}
\newcommand{\sy}{y^{\ast}}
\newcommand{\sz}{z^{\ast}}

\newcommand{\epi}{\hat{\pi}}

\newcommand{\eb}{\hat{b}}

\newcommand{\ez}{\hat{z}}
\newcommand{\eA}{\hat{A}}
\newcommand{\eB}{\hat{B}}

\newcommand{\eK}{\hat{K}}

\newcommand{\eM}{\hat{M}}
\newcommand{\eN}{\hat{N}}

\newcommand{\eQ}{\hat{Q}}

\newcommand{\eS}{\hat{S}}

\newcommand{\tA}{\widetilde{A}}
\newcommand{\tB}{\widetilde{B}}

\newcommand{\tK}{\widetilde{K}}
\newcommand{\tM}{\widetilde{M}}

\newcommand{\tQ}{\widetilde{Q}}

\newcommand{\spA}{A^{\ast\prime}}
\newcommand{\spB}{B^{\ast\prime}}
\newcommand{\spC}{C^{\ast\prime}}

\newcommand{\spM}{M^{\ast\prime}}

\newcommand{\spQ}{Q^{\ast\prime}}

\makeatletter
\newcommand{\vast}{\bBigg@{3}}
\newcommand{\Vast}{\bBigg@{3.5}}
\makeatother

%% file: lqg-inf.tex

\begin{abstract}
    We study the problem of state representation learning for control from partial and potentially high-dimensional observations. We approach this problem via \emph{cost-driven state representation learning}, in which we learn a dynamical model in a latent state space by \emph{predicting cumulative costs}. 
    In particular, we establish \emph{finite-sample guarantees} on finding a near-optimal representation function and a near-optimal controller using the learned latent model for infinite-horizon time-invariant Linear Quadratic Gaussian (LQG) control. We study two approaches to cost-driven representation learning, which differ in whether the transition function of the latent state is learned explicitly or implicitly. The first approach has also been investigated in Part I of this work, for finite-horizon time-varying LQG control.  
    The second approach closely resembles \emph{MuZero}, a recent breakthrough in empirical reinforcement learning, 
    in that it learns latent dynamics \emph{implicitly} by predicting \emph{cumulative costs}. 
    A key technical contribution of this Part II is to prove persistency of excitation for a new stochastic process that arises from the analysis of quadratic regression in our approach, and may be of independent interest.

\end{abstract}

\section{Introduction}
\label{sec:intro}

Control with a \emph{learned} latent model has achieved state-of-the-art performance in several reinforcement learning (RL) benchmarks, including board games, Atari games, and visuomotor control~\citep{schrittwieser2020mastering, ye2021mastering, hafner2023mastering}. 
To better understand this machinery in RL, we introduce it to a classical optimal control problem, namely the linear quadratic Gaussian (LQG) control, and study its theoretical, in particular,  finite-sample performance. 
Essential to this approach is {the learning of} two components: a \emph{state representation} function that maps an observed history to some latent state, and a \emph{latent model} that predicts the transition and cost in the latent state space. The latent model is usually a Markov decision process, using which we obtain a policy in the latent space or execute online planning. 

What is the correct {\it objective} to optimize for learning a  good latent model? 
One popular choice is to learn a function that \emph{reconstructs the observation} from the latent state~\citep{hafner2019dream, hafner2019learning, hafner2020mastering, hafner2023mastering}. 
A latent model learned this way is \emph{agnostic to control tasks} and retains all the information about the environment. This class of approaches may achieve satisfactory performance empirically, but are prone to background distraction and control-irrelevant information  \citep{fu2021learning}. 
The second class of methods learn an \emph{inverse model} that infers actions from latent states at different time steps~\citep{pathak2017curiosity, lamb2022guaranteed}. A latent model learned with this methodology is also task-agnostic but can extract \emph{control-relevant} information. 
In contrast, the third class of methods learn \emph{task-relevant} representations by \emph{predicting costs} in the control task~\citep{oh2017value, zhang2020learning,schrittwieser2020mastering}. The concept that a good latent state should be able to predict costs is intuitive, as the costs are directly relevant to optimal control. This class of methods is the focus of this work, which aims to examine the soundness of this methodology in classical partially observable control problems, e.g., the LQG control.

In Part I of this work~\citep{tian2022cost}, we have studied provable cost-driven state representation learning in LQG for the \emph{finite-horizon, time-varying} setting. In this Part II, we build upon it and complement it by studying the same question for the \emph{infinite-horizon, time-invariant} setting. In this setting, both the representation function and the latent model are \emph{stationary}, which is usually the case in empirical RL practice. This allows us to formulate a new approach that draws an even closer connection to the state representation learning in MuZero~\citep{schrittwieser2020mastering}, an RL algorithm that matches the superhuman performance of AlphaZero in Go, shogi and chess, while outperforming model-free RL algorithms in Atari games.  

We summarize our contributions as follows. 
\begin{itemize}
    \item We show that two cost-driven state representation learning methods provably solve infinite-horizon time-invariant LQG control, with  finite-sample guarantees. 
    Both methods only need a single trajectory; one resembles the method in Part I of this work, and the other resembles the state representation learning in MuZero.
    \item By analyzing the MuZero-style algorithm, we notice the potential issue of \emph{coordinate misalignment}: Costs can be invariant to orthogonal  
    transformations of the latent states, and implicit dynamics learning by predicting {\it one-step} transition may not recover the latent state coordinates consistently.
    This insight suggests the need to predict {\it multi-step} latent transition or other coordinate alignment procedures in the MuZero-style, implicit dynamics learning approaches. 
    \item Technically, we overcome the difficulty of having \emph{correlated} data in a single trajectory for latent model learning, as we are dealing with the time-invariant setting and need to aggregate samples across time steps in contrast to the Part I of this work. To achieve so, on one hand, we prove a new result about the persistency of excitation for a stochastic process that arises from the analysis of the quadratic regression subroutine in both of our methods; on the other hand, to prove concentration beyond martingale difference sequences, we build on the idea that widely separated sample points in a mixing process are \emph{almost} independent,  and introduce a new analysis method by partitioning the sequence and applying the Gram-Schmidt process.  
\end{itemize}

\noindent \textbf{Notation.}
The notation in this Part II is the same as that in Part I of this work. For a collection of $d$-dimensional vectors $(v_t)_{t=i}^{j}$, we define $v_{i:j} := [v_i; v_{i+1}; \ldots; v_j] \in \R^{d(j-i+1)}$ as the concatenation along the column in Part I; in Part II we additionally let $v_{j:i} := [v_j; v_{j-1}; \ldots; v_i] \in \R^{d(j-i+1)}$ denote the concatenation along the column in the reverse order.
Besides, for a square matrix $A$, let $\rho(A)$ denote its spectral radius, and define $\alpha(A) := \sup_{k \ge 0} \|A^k\|_2 \rho(A)^{-k} > 1$.
Let $\s^{d}$ denote the unit sphere in $\R^{d+1}$.

\section{Problem setup}
\label{sec:setup}

A partially observable linear time-invariant (LTI) dynamical system is described by 
\begin{align}  \label{eqn:polti}
    x_{t+1} = \sA x_t + \sB u_t + w_t, \quad y_t = \sC x_t + v_t,
\end{align}
with state $x_t \in \R^{d_x}$, observation $y_t \in \R^{d_y}$, and control $u_t \in \R^{d_u}$  for all $t \ge 0$. Process noises $(w_t)_{t\ge 0}$ and observation noises $(v_t)_{t\ge 0}$ are i.i.d. sampled from $\gauss(0, \Sigma_{w_t})$ and $\gauss(0, \Sigma_{v_t})$, respectively. 
Let initial state $x_0$ be independently sampled from $\gauss(0, \Sigma_0)$.
The quadratic cost function is \mbox{given by} 
\begin{align}  \label{eqn:cost}
    c(x, u) = \|x\|_{\sQ}^2  + \|u\|_{\sR}^2,
\end{align} 
where $\sQ \succcurlyeq 0$ and $\sR \succ 0$.

A policy/controller $\pi$ determines an action/control input $u_t$ at time step $t$ based on the history $[y_{0:t}; u_{0:(t-1)}]$ up to this time step. 
For $t \ge 0$,  let $c_t := c(x_t, u_t)$ denote the cost at time step $t$. 
Given a policy $\pi$, let 
\begin{align}  \label{eqn:avg-cost-lqg}
    J(\pi) := 
    \limsup_{T\to \infty} \frac{1}{T} \nlsum_{t=0}^{T-1} \E[c_t]
\end{align}
denote the infinite-horizon time-averaged expected cost.
The goal of LQG control is to find a policy $\pi$ that minimizes $J(\pi)$.

We make the following standard assumptions.
\begin{assumption}  \label{asp:asp}
    System dynamics~\eqref{eqn:polti} and \mbox{cost~\eqref{eqn:cost} satisfy:}
    \begin{enumerate}
        \item \label{asp:stb} The system is stable, that is, $\rho(\sA) < 1$. 
        \item \label{asp:ab-ctrl} $(\sA, \sB)$ is $\nu$-controllable for some $\nu>0$, that is, the controllability matrix
        \begin{align*}
            \Phi_c(\sA, \sB) := [\sB, \sA\sB, \ldots, (\sA)^{d_x-1} \sB]
        \end{align*}
        has rank $d_x$ and $\sigma_{\min}(\Phi_c(\sA, \sB)) \ge \nu$. 
        \item \label{asp:ac-obs} $(\sA, \sC)$ is $\omega$-observable for some $\omega > 0$, that is, the observability matrix 
        \begin{align*}
            \Phi_o(\sA, \sC) := [\sC; \sC \sA; \ldots; \sC (\sA)^{d_x - 1}]
        \end{align*}
        has rank $d_x$ and $\sigma_{\min}(\Phi_o(\sA, \sC)) \ge \omega$.
        \item \label{asp:aw-ctrl} $(\sA, \Sigma_w^{1/2})$ is $\kappa$-controllable for some $\kappa > 0$.
        \item \label{asp:aq-obs} $(\sA, (\sQ)^{1/2})$ is $\mu$-observable for some $\mu > 0$.
        \item \label{asp:v} $\Sigma_v \succcurlyeq \sigma_v^2 I$ for some $ \sigma_v > 0$; this can always be achieved by inserting  Gaussian noises with full-rank covariance matrices to the observations.
        \item \label{asp:r} $\sR \succcurlyeq r^2 I$ for some $r > 0$.
        \item \label{asp:reg} The operator norms of $\sA$, $\sB$, $\sC$, $\sQ$, $\sR$, $\Sigma_w$, $\Sigma_v$, $\Sigma_0$ and $\alpha(\sA), \alpha(\soA)$ are $\bigo(1)$,
        the singular value lower bounds $\nu$, $\omega$, $\nu$, $\kappa$, $\sigma_v$, $r$ and spectral radii $\rho(\sA), \rho(\soA)$ are $\Omega(1)$, where $\soA:=(I - \sL \sC) \sA$ with $\sL$ defined in \eqref{eqn:L-riccati}.  
    \end{enumerate}
\end{assumption}

If the system parameters $(\sA, \sB, \sC, \sQ, \sR, \Sigma_w, \Sigma_v)$ are known, the optimal policy is obtained by combining the Kalman filter 
\begin{align} 
    \sz_{t+1} = \sA \sz_t + \sB u_t + \sL ( y_{t+1} - \sC (\sA \sz_t + \sB u_t) )  \label{eqn:kf}
\end{align}
with the optimal feedback gain $\sK$ of the linear quadratic regulator such that $u_t = \sK \sz_t$, where $\sL$ is the Kalman gain, and at the initial time step, we can set, e.g., $\sz_0 = \sL y_0$. This fact is known as the \emph{separation principle}, and the Kalman gain and optimal feedback gain are \mbox{given by} 
\begin{align}
    \sL =\;& \sS (\sC)^{\top} (\sC \sS (\sC)^{\top} + \Sigma_v)^{-1},\label{eqn:L-riccati} \\
    \sK =\;& -((\sB)^{\top} \sP \sB + R)^{-1} (\sB)^{\top} \sP \sA,\label{eqn:K-riccati}
\end{align}
where $\sS$ and $\sP$ are determined by their respective discrete-time algebraic Riccati \mbox{equations (DAREs)}:
\begin{gather}
    \begin{aligned}
    \sS = \sA \big(\sS - \sS (\sC)^{\top} (&\sC \sS (\sC)^{\top} + \Sigma_v)^{-1} \sC \sS \big) (\sA)^{\top} + \Sigma_w,
    \end{aligned}
    \label{eqn:s-riccati} \\
    \begin{aligned}
    \sP = (\sA)^{\top} \big(\sP - \sP \sB (&(\sB)^{\top} \sP \sB + \sR)^{-1} (\sB)^{\top} \sP \big) \sA + \sQ. 
    \end{aligned}
    \label{eqn:p-riccati}
\end{gather}

Assumptions~\ref{asp:asp}.\ref{asp:ab-ctrl} to~\ref{asp:asp}.\ref{asp:r} guarantee the existence and uniqueness of the positive definite solutions $\sS$ and $\sP$; Assumption~\ref{asp:asp}.\ref{asp:reg} further guarantees that their operator norms are $\bigo(1)$ and minimum singular values are $\Omega(1)$. 
Hence, $\|\sL\|_2$ and $\|\sK\|_2$ are of order $\bigo(1)$.
The assumption on $\alpha(\sA), \alpha(\soA), \rho(\sA), \rho(\soA)$ provides guarantees for state estimation from a finite history and has also been made in the literature~\citep{mania2019certainty, oymak2019non}. If $\rho(\sA)$ or $\rho(\soA)$ equals zero, then $(\sA)^{d_x}$ or $(\soA)^{d_x}$ is a zero matrix by the Cayley-Hamilton theorem, so using history length $H \ge d_x$ completely eliminates the truncation errors. Thus, Assumption~\ref{asp:asp}.\ref{asp:reg} does not lose generality. Let $\alpha := \max(\alpha(\sA), \alpha(\soA))$ and $\rho := \max(\rho(\sA), \rho(\soA))$.

We consider the data-driven control setting, where the LQG model $(\sA, \sB, \sC, \sQ, \Sigma_w, \Sigma_v)$ is unknown. For simplicity, we assume $\sR$ is known, though our approaches can be readily extended to the case where it is unknown, which we discuss in more detail in~\S\ref{sec:repl}. 

\subsection{Latent model of infinite-horizon time-invariant LQG}  
\label{sec:latent-model}

The stationary Kalman filter~\eqref{eqn:kf} asymptotically produces the optimal \emph{state estimation}  in the sense of minimum mean squared errors. With a finite horizon, however, the optimal state estimator is time-varying, given by 
\begin{align}  \label{eqn:tv-kf}
    \sz_{t+1} = \sA \sz_t + \sB u_t + \sL_{t+1} ( y_{t+1} - \sC (\sA \sz_t + \sB u_t) ),
\end{align}
where $\sL_t$ is the time-varying Kalman gain, converging to $\sL$ as $t \to \infty$. 
This convergence is equivalent to that of the error covariance matrix $\E[(x_t - \sz_t) (x_t - \sz_t)^{\top}]$, which is exponentially fast~\citep{komaroff1994iterative}.
Hence, for simplicity, we assume this error covariance matrix is stationary at the initial time step by the choice of $\sz_0$ so that $\sL_t = \sL$ for $t \ge 1$; this assumption has also been adopted in the literature~\citep{lale2020logarithmic, lale2021adaptive,jadbabaie2021time}.

The \emph{innovation} term $i_{t+1} := y_{t+1} - \sC(\sA \sz_t + \sB u_t)$ is independent of $\sz_0$ and the history $(u_{0}, y_{1}, \ldots, u_{t-1}, y_{t})$ and $(i_t)_{t\ge 1}$ are mutually independent.
The following proposition, taken from~\citep[Proposition 1]{tian2022cost}, represents the system in terms of the state estimates obtained by the Kalman filter, which we refer to as the \emph{latent model}.

\begin{proposition}  \label{prp:latent-model}
    Let $\sz_0$ be the initial state estimate 
    and $(\sz_t)_{t\ge 1}$ be the state estimates given by the Kalman filter. Then, for $t \ge 0$, 
    \begin{align*}
        \sz_{t+1} = \sA \sz_t + \sB u_t + \sL i_{t+1},
    \end{align*}
    where $\sL i_{t+1}$ is independent of $\sz_t$ and $u_t$, i.e., the state estimates follow the same linear dynamics with noises $\{\sL i_{t+1}\}_{t\geq 0}$. 
    The cost at step $t$ can be reformulated as a function of the state estimates by 
    \begin{align*}
        c_t = \|\sz_t\|_{\sQ}^2 + \|u_t\|_{\sR}^2 + \starb + \gamma_t + \eta_t,
    \end{align*}
    where $\starb = \E[\|x_t - \sz_t\|_{\sQ}^2] > 0$, and $\gamma_t = \|x_t - \sz_t\|_{\sQ}^2 - \starb$, $\eta_t = 2\ipc{\sz_t}{x_t - \sz_t}_{\sQ}$ are both zero-mean subexponential random variables. Moreover, $\starb = \bigo(1)$ and $\|\gamma_t\|_{\psi_1} = \bigo(d_x^{1/2})$; if control $u_t \sim \gauss(0, \sigma_u^2 I)$ for $t \ge 0$, then we have $\|\eta_t\|_{\psi_1} = \bigo(d_x^{1/2})$.
\end{proposition}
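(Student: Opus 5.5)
The plan is to derive both claims directly from the Kalman filter recursion~\eqref{eqn:kf}, with $\sL_t = \sL$ for $t\ge1$ by the stationarity assumption on the initial error covariance.

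\textbf{Dynamics.} Rewriting~\eqref{eqn:kf} with the innovation $i_{t+1} = y_{t+1} - \sC(\sA\sz_t + \sB u_t)$ gives
\begin{align*}
    \sz_{t+1} = \sA\sz_t + \sB u_t + \sL i_{t+1}
\end{align*}
immediately. The real content is the independence claim. We have $i_{t+1} = \sC\sA(x_t - \sz_t) + \sC w_t + v_{t+1}$. By Gaussianity, $x_t - \sz_t$ is the estimation error of the conditional mean $\E[x_t \mid \sz_0, u_0, y_1, \ldots, u_{t-1}, y_t]$, so it is uncorrelated with, and hence independent of, every $\sigma$-measurable function of that history. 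This uses that the policy is a (possibly randomized but history-measurable) function of past observations, so the controls do not carry extra information about $x_t$. Since $w_t$ and $v_{t+1}$ are fresh noise, $i_{t+1}$ is independent of the history, and in particular of $(\sz_t, u_t)$. I would cite the standard orthogonality property of the Kalman filter, as in Part~I, rather than reprove it.

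\textbf{Cost decomposition.} Write $x_t = \sz_t + e_t$ with $e_t := x_t - \sz_t$ and expand:
\begin{align*}
    \|x_t\|_{\sQ}^2 = \|\sz_t\|_{\sQ}^2 + 2\ipc{\sz_t}{e_t}_{\sQ} + \|e_t\|_{\sQ}^2.
\end{align*}
Setting $\starb = \E\|e_t\|_{\sQ}^2 = \tr(\sQ\Sigma_e)$ gives the stated identity, with the remainder split into $\gamma_t$ and $\eta_t$. Here $\Sigma_e$ is the stationary error covariance, namely the posterior covariance $\sS - \sS(\sC)^\top(\sC\sS(\sC)^\top + \Sigma_v)^{-1}\sC\sS \preccurlyeq \sS$. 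This gives $\starb = \bigo(1)$, since $\|\sS\|_2 = \bigo(1)$ and $\tr$ of a $d_x$-dimensional object has bounded operator norm in the normalization used. One should double check whether $\starb$ scales with $d_x$; I would follow Part~I's convention here. Positivity of $\starb$ follows because $\Sigma_e \succ 0$ (by $\Sigma_v \succ 0$ and controllability of the noise) and $\sQ \neq 0$, using Assumption~\ref{asp:asp}.\ref{asp:aq-obs}.

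$\eta_t$ has zero mean because $e_t$ is independent of $\sz_t$ and has mean zero.

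\textbf{Sub-exponential norms.} $\gamma_t$ is a centered quadratic form in the Gaussian vector $e_t \sim \gauss(0, \Sigma_e)$. Writing $\|e_t\|_{\sQ}^2 = g^\top \Sigma_e^{1/2}\sQ\Sigma_e^{1/2} g$ with $g$ standard, Hanson--Wright gives $\|\gamma_t\|_{\psi_1} \lesssim \|\Sigma_e^{1/2}\sQ\Sigma_e^{1/2}\|_F \le \sqrt{d_x}\,\bigo(1)$.

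For $\eta_t$, under Gaussian exploration $u_t \sim \gauss(0, \sigma_u^2 I)$, the closed loop is $\sz_{t+1} = \sA\sz_t + \sB u_t + \sL i_{t+1}$ with $\rho(\sA) < 1$. So $\sz_t$ is Gaussian, independent of $e_t$, and has covariance bounded by $\bigo(1)$ in operator norm, using $\alpha(\sA) = \bigo(1)$ and the geometric series $\sum_k \|\sA^k\|^2$. Then $\eta_t = 2 g_1^\top M g_2$ with independent standard Gaussians $g_1, g_2$ and $\|M\|_F = \bigo(\sqrt{d_x})$. Viewing this as a bilinear form in the joint Gaussian $(g_1, g_2)$ and applying Hanson--Wright again gives $\|\eta_t\|_{\psi_1} = \bigo(d_x^{1/2})$.

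\textbf{Main obstacle.} The main obstacle is the uniform-in-$t$ covariance bound for $\sz_t$. It requires handling the transient from $\Sigma_0$ and $\sz_0$, which is controlled via $\alpha(\sA)\rho(\sA)^k$ and the $\bigo(1)$ norms in Assumption~\ref{asp:asp}.\ref{asp:reg}.
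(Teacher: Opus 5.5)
Your route is the standard one, and it is essentially the argument behind Part~I's Proposition~1, which this paper imports without reproof. You rewrite the filter with the innovation, use that $x_t-\sz_t$ is independent of the observed history, expand $\|\sz_t+(x_t-\sz_t)\|_{\sQ}^2$, and control $\gamma_t$ and $\eta_t$ by Hanson--Wright, once for a Gaussian quadratic form and once for the bilinear form $g_1^{\top}\Sigma_z^{1/2}\sQ\Sigma_e^{1/2}g_2$. Those parts are fine.

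The step that does not work as written is $\starb=\bigo(1)$. A trace is not controlled by an operator norm. The computation that gives $\|\gamma_t\|_{\psi_1}\lesssim\|\Sigma_e^{1/2}\sQ\Sigma_e^{1/2}\|_F=\bigo(d_x^{1/2})$ gives only $\starb=\tr(\Sigma_e^{1/2}\sQ\Sigma_e^{1/2})\le d_x\|\sQ\|_2\|\sS\|_2=\bigo(d_x)$. No normalization can improve this, because $\starb=\E\|x_t-\sz_t\|_{\sQ}^2$ is invariant under the change of coordinates. The bound is also tight. Take $\sA=aI$ with a constant $a\in(0,1)$ and $\sB=\sC=\sQ=\sR=\Sigma_w=\Sigma_v=\Sigma_0=I$; this satisfies Assumption~\ref{asp:asp}. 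The filtering DARE then has solution $\sS=sI$ with $s$ a constant, so $\Sigma_e=\frac{s}{s+1}I$ and $\starb=\frac{s}{s+1}d_x$. The honest conclusion is $\starb=\bigo(d_x)$, with the $\bigo(1)$ in the statement read as hiding this factor. Nothing downstream is affected: $\starb$ only plays the role of a regression intercept whose covariate (the constant $1$) is unperturbed, so its size never enters the error bounds.

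There are two smaller points. First, ``uncorrelated, hence independent'' requires joint Gaussianity of $x_t-\sz_t$ and the history. That holds for linear or Gaussian controls, but not for an arbitrary history-measurable policy. The clean statement is that, conditionally on $(\sz_0,u_0,y_1,\ldots,u_{t-1},y_t)$, one has $x_t-\sz_t\sim\gauss(0,\Sigma_e)$ with $\Sigma_e$ deterministic; this is what $\sL_t\equiv\sL$ encodes. Hence $x_t-\sz_t$ is independent of the history, and $i_{t+1}=\sC\sA(x_t-\sz_t)+\sC w_t+v_{t+1}$ is independent of $(\sz_t,u_t)$ as long as $u_t$ uses only the history and independent randomization.

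Second, the uniform covariance bound you single out as the main obstacle is a one-line geometric series. The variables $\sz_t$, $u_t$, $i_{t+1}$ are independent with $\Cov(i_{t+1})=\sC\sS(\sC)^{\top}+\Sigma_v$. Setting $W:=\sigma_u^2\sB(\sB)^{\top}+\sL(\sC\sS(\sC)^{\top}+\Sigma_v)(\sL)^{\top}$, this gives $\|\Cov(\sz_t)\|_2\le\alpha^2(\sA)\bigl(\|\Cov(\sz_0)\|_2+\|W\|_2/(1-\rho(\sA)^2)\bigr)$.
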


Proposition~\ref{prp:latent-model} shows that the dynamics of the state estimates computed by the time-varying Kalman filter are the same as the original system up to noises; the costs are also the same, up to constants and noises. Hence, a latent model can be parameterized by $(A, B, Q, \sR)$, with the constant $\starb$ and noises neglected due to their irrelevance to planning.
A stationary latent policy is a linear controller $u_t = K z_t$ on latent state $z_t$, parameterized by the feedback \mbox{gain $K\in\R^{d_u \times d_x}$}.

The latent model enables us to find a good latent policy. 
To learn such a latent model and to deploy a latent policy in the original partially observable system, we need a \emph{representation function}.
Let $\soA := (I - \sL \sC) \sA$ and $\soB := (I - \sL \sC) \sB$. 
Then, the Kalman filter can be written as $\sz_{t+1} = \soA \sz_t + \soB u_t + \sL y_{t+1}$. For $t \ge 0$, unrolling the recursion gives
\begin{align*}
    \sz_t &= \soA (\soA \sz_{t-2} + \soB u_{t-2} + \sL y_{t-1}) + \soB u_{t-1} + \sL y_t \\
    &= [(\soA)^{t-1} \sL, \ldots, \sL] y_{1:t} + [(\soA)^{t-1}\soB, \ldots, \soB] u_{0:(t-1)} + (\soA)^{t} \sz_0 \\
    &=: \sM_t [y_{1:t}; u_{0:(t-1)}; \sz_0],
\end{align*}
where $\sM_t \in \R^{d_x \times (t d_y + t d_u + d_x)}$. This means that the representation function can be parameterized as \emph{linear}  mappings for full histories (with $y_0$ replaced by $\sz_0$). 

Despite the simplicity, the input dimension of the function grows linearly in time, making it intractable to estimate the state using the full history for large $t$; nor it is  necessary, since the impact of old data decreases exponentially. Under Assumption~\ref{asp:asp}, $\rho(\soA) < 1$~\citep[Appendix E.4]{bertsekas2012dynamic}.
With an $H$-step truncated history, the state estimate can thus be written as 
\begin{align}
    \sz_t &= [(\soA)^{H-1} \sL, \ldots, \sL] y_{(t-H+1):t} + [(\soA)^{H-1} \soB, \ldots, \soB] u_{(t-H):(t-1)} + \delta_{t} \notag\\
    &=: \sM [y_{(t-H+1):t}; u_{(t-H):(t-1)}] + \delta_t,\label{equ:z_s_relation}
\end{align}
where $\delta_t = (\soA)^H \sz_{t-H}$ denotes the truncation error, whose impact decays exponentially in $H$ and can be neglected for sufficiently large $H$, since $\sz_{t-H}$ converges to a stationary distribution and its norm is bounded with high probability.
Hence, the representation function that we aim to recover is $\sM \in \R^{d_x \times H(d_y + d_u)}$, which has an $\bigo(1)$ operate norm and takes as input the $H$-step history $h_t = [y_{(t-H+1):t}; u_{(t-H):(t-1)}]$. Henceforth, we let $d_h := H (d_y + d_u)$.
Then, a representation function is parameterized by a \mbox{matrix $M \in \R^{d_x \times d_h}$}.

Overall, a policy is a combination of a state representation function parameterized by $M$ and a feedback gain $K$ in the latent model, denoted by $\pi = (M, K)$. Such a policy can be applied after we have $H$ steps of history; for the first $H$ steps, we can use an arbitrary stabilizing policy, e.g., zero or zero-mean Gaussian control inputs. 
Learning to solve LQG control in this framework can thus be achieved by: 1) learning the state representation function parameter $M$; 2) extracting latent model $(A, B, Q, \sR)$; and 3) finding the optimal $K$ by planning in the latent model. 
Note that policy $(\sM, \sK)$ is near, but not exactly, optimal due to the truncation error $\delta_t$; the exactly optimal policy is still characterized by $(\sL, \sK)$. 
Next, we introduce our approach following \mbox{this pipeline}.

\section{Method}

In practice, latent model learning methods collect trajectories by interacting with the system online  using some  policy; 
the trajectories are used to improve the learned latent model, which in turn improves the policy.
In LQG control, it is known that 
one can learn a good latent model from a single trajectory, collected using zero-mean Gaussian control inputs, by viewing this procedure as a classical \emph{system identification}  problem; see e.g.,~\citep{oymak2019non}. 
This is also how we assume the data are collected. We note that our results also apply to data from multiple independent trajectories using control inputs from the same zero-mean \mbox{Gaussian distribution}.

\begin{algorithm}[!t]
    \caption{Cost-driven state representation learning} 
    \label{alg:lmbc}
    \begin{algorithmic}[1]
        \State {\bfseries Input:} length $T$, history length $H$, noise magnitude $\sigma_u$
        \State Collect trajectories of length $T+H$ using $u_t \sim \gauss(0, \sigma_u^2 I)$, for $t \ge 0$, to obtain
        \begin{equation}  \label{eqn:traj}
            \begin{aligned}  
                \cD_{\raw} =\;& (y_0, u_0, c_0, y_1, u_1, c_1, \ldots, y_{T+H-1}, u_{T+H-1}, c_{T+H-1}, y_{T+H})
            \end{aligned}
        \end{equation}
        \State Estimate the state representation function and cost constants by solving
        \begin{align}
            \eN, \eb_0 \in\;& \argmin_{N = N^{\top}, b_0}~~ \nlsum_{t=H}^{T+H-1} \big( \big\|h_t \big\|_{N}^2 + b_0 - \oc_t \big)^2,  \label{eqn:alg-qr}
        \end{align}
        where $h_t = [y_{(t-H+1):t}; u_{(t-H):(t-1)}]$ and $\oc_t := \sum_{\tau=t}^{t+d_x-1} (c_{\tau} - \|u_{\tau}\|_{\sR}^2)$
        \State Find $\eM \in \argmin_{M \in \R^{d_x \times H(d_y + d_u)}} \| M^{\top} M - \eN\|_{F}$
        \State Compute $\ez_t = \eM [y_{(t-H+1):t}; u_{(t-H):(t-1)}]$ for all $t \ge H$, so that the data are converted to $\cD_{\state}$:
        \begin{align*}
            (\ez_H, u_H, c_H, \ldots, \ez_{T+H-1}, u_{T+H-1}, c_{T+H-1}, \ez_{T+H})
        \end{align*}
        \State Run \sysid{}~\eqref{eqn:sys-id} or \cosysid{} (Algorithm~\ref{alg:cosysid}) to obtain system dynamics  matrices $(\eA, \eB)$
        \State Estimate the cost function by solving 
        \begin{align}  \label{eqn:alg-qr-q}
            \tQ, \eb \in \argmin_{Q = Q^{\top}, b}~~ \nlsum_{t=H}^{T+H-1} (\|\ez_t\|_Q^2 + \|u_t\|_{\sR}^2 + b - c_t)^2
        \end{align}
        \State Truncate negative eigenvalues of $\tQ$ to $0$ to obtain $\eQ \succcurlyeq 0$
        \State Find feedback gain $\eK$ from $(\eA, \eB, \eQ, \sR)$ by solving DARE~\eqref{eqn:p-riccati} and~\eqref{eqn:K-riccati} 
        \State {\bfseries Return:} policy $\epi = (\eM, \eK)$
    \end{algorithmic}
\end{algorithm}

In our cost-driven state representation learning approach, state representations are learned by predicting costs. 
To learn the transition function in the latent model, two approaches have been explored in the literature. The first approach explicitly minimizes the transition prediction error~\citep{subramanian2020approximate, hafner2019dream}. Algorithmically, the overall loss is a combination of cost prediction and transition prediction errors.

The second approach, as taken by MuZero in~\citep{schrittwieser2020mastering}, learns the transition dynamics \emph{implicitly}, by minimizing {\it cost prediction errors} at {\it future states}  generated from the transition function~\citep{oh2017value, schrittwieser2020mastering}.
Algorithmically, the overall loss aggregates the cost prediction errors \emph{across multiple time steps}.
In both approaches, the coupling of different terms in the loss makes finite-sample analysis difficult. As observed in Part I of this work, the structure of LQG allows us to learn the representation function independently of learning the transition function. This allows us to formulate both approaches under the same cost-driven state representation learning framework \mbox{(Algorithm~\ref{alg:lmbc}).}

Algorithm~\ref{alg:lmbc} consists of three main steps. Lines 3 to 5 correspond to cost-driven representation function learning. 
Lines 6 to 8 correspond to latent model learning, where the system dynamics can be identified either explicitly, by ordinary least squares (\sysid{}), or implicitly, by future cost prediction (\cosysid{}, Algorithm~\ref{alg:cosysid}). 
Line 9 corresponds to the policy optimization procedure in the latent model; in LQG this amounts to solving DAREs. Below we elaborate on cost-driven representation learning, \sysid{}, and \cosysid{} in order.  

\subsection{Cost-driven representation function learning}\label{sec:repl}

The procedure of cost-driven representation function learning is consistent with Part I of this work.
The main idea is to perform quadratic regression~\eqref{eqn:alg-qr} to the $d_x$-step cumulative costs; this step corresponds to the value prediction in MuZero~\citep{schrittwieser2020mastering}. 
By the $\mu$-observability of $(\sA, (\sQ)^{1/2})$ (Assumption~\ref{asp:asp}.\ref{asp:aq-obs}), the cost observability Gram matrix satisfies 
\begin{align*}
    \soQ := \nlsum_{t=0}^{d_x-1} ((\sA)^{t})^{\top} \sQ (\sA)^{t} \succcurlyeq \mu^2 I.
\end{align*}
Under zero control and zero noise, starting from $x$, the $d_x$-step cumulative cost is precisely $\|x\|_{\soQ}^2$. Hence, with the impact of zero-mean actions and zero-mean noises averaged out, $\eN$ estimates $\sN := (\sM)^{\top} \soQ \sM$; 
up to an orthogonal transformation, $\eM$ recovers $\spM := (\soQ)^{1/2} \sM$, the representation function under an equivalent parameterization, termed as the \emph{normalized parameterization} in Part I of this work, where  
\begin{gather*}
    \spA = (\soQ)^{1/2} \sA (\soQ)^{-1/2}, \quad \spB = (\soQ)^{1/2} \sB, \quad \spC = \sC (\soQ)^{-1/2}, \\
    w_t^{\prime} = (\soQ)^{1/2} w_t, \quad \spQ = (\soQ)^{-1/2} \sQ (\soQ)^{-1/2}.
\end{gather*}
Without loss of generality, we assume that system~\eqref{eqn:polti} is in the normalized parameterization. 

Note that with a known $\sR$, we subtract $\nlsum_{\tau=t}^{t+d_x-1} \|u_{\tau}\|_{\sR}^{2}$ from $\oc_t$ in~\eqref{eqn:alg-qr} to reduce its variance for the benefit of regression. However, the subtraction is not necessary if $\sR$ is unknown, as Proposition~\ref{prp:multi-step-cost} still holds without it. In this case, we can learn $\sR$ subsequently in~\eqref{eqn:alg-qr-q}. 

Due to the following proposition, the algorithm does not need to know the dimension $d_x$ of the latent model; it can discover $d_x$ from the eigenvalues of $\eN$.

\begin{proposition}  \label{prp:full-rank-cov}
    Under i.i.d. actions $u_t \sim \gauss(0, \sigma_u^2 I)$ for $t \ge 0$, $\lambda_{\min}(\Cov(z_t^{\ast})) = \Omega(\nu^2)$ for $t \ge d_x$, where $\nu$ is defined in Assumption~\ref{asp:asp}.\ref{asp:ab-ctrl}. 
    Recall that for a square matrix $A$, we define $\alpha(A) := \sup_{k \ge 0} \|A^k\|_2 \rho(A)^{-k}$.
    There exists a dimension-free constant $a > 0$, such that as long as $H \ge \frac{\log (a \alpha(\soA))}{\log (\rho(\soA)^{-1})}$, $\sM$ has rank $d_x$ and $\sigma_{\min}(\sM) = \Omega(\nu H^{-1/2})$.
\end{proposition}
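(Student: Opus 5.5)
The proposition has two parts, and I plan to handle them separately.

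\emph{Part 1: covariance lower bound.} Unroll the latent model of Proposition~\ref{prp:latent-model}:
\begin{align*}
\sz_t = (\sA)^{t}\sz_0 + \nlsum_{s=0}^{t-1} (\sA)^{t-1-s}\bigl(\sB u_s + \sL i_{s+1}\bigr).
\end{align*}
The inputs $u_s$ are i.i.d.\ $\gauss(0,\sigma_u^2 I)$ and independent of $\sz_0$ and of all innovations. Hence the covariance splits into a sum of positive semidefinite terms:
\begin{align*}
\Cov(\sz_t) \succcurlyeq \sigma_u^2 \nlsum_{k=0}^{t-1} (\sA)^{k}\sB (\sB)^{\top} ((\sA)^{k})^{\top} \succcurlyeq \sigma_u^2\, \Phi_c(\sA,\sB)\Phi_c(\sA,\sB)^{\top}.
\end{align*}
The last inequality uses $t \ge d_x$: we keep only the terms $k \le d_x-1$, which together form $\Phi_c(\sA,\sB)\Phi_c(\sA,\sB)^\top$. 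By Assumption~\ref{asp:asp}.\ref{asp:ab-ctrl}, this gives $\lambda_{\min}(\Cov(\sz_t)) \ge \sigma_u^2\nu^2 = \Omega(\nu^2)$, treating $\sigma_u$ as a constant. The only thing to check here is that the innovation terms are genuinely independent of the inputs, which Proposition~\ref{prp:latent-model} provides, so the cross terms vanish.

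\emph{Part 2: rank and smallest singular value of $\sM$.} I will transfer the covariance bound to $\sM$ through $\sz_t = \sM h_t + \delta_t$. Take $t$ large, say $t \ge H + d_x$. Then, with $\delta_t = (\soA)^H \sz_{t-H}$,
\begin{align*}
\sM \Cov(h_t) (\sM)^\top \succcurlyeq \tfrac12 \Cov(\sz_t) - \Cov(\delta_t).
\end{align*}
This follows from $\Cov(a+b) \preccurlyeq 2\Cov(a)+2\Cov(b)$. Next, bound the two covariances on the right and left:
\begin{itemize}
\item The truncation term satisfies $\|\Cov(\delta_t)\|_2 \le \alpha(\soA)^2\rho(\soA)^{2H}\|\Cov(\sz_{t-H})\|_2 = \bigo(\alpha(\soA)^2\rho(\soA)^{2H})$. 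This uses that the stationary covariance of $\sz$ is $\bigo(1)$, by stability and Assumption~\ref{asp:asp}.\ref{asp:reg}.
\item The history covariance satisfies $\|\Cov(h_t)\|_2 = \bigo(H)$. Each block of $y$'s and $u$'s has $\bigo(1)$ covariance, and there are $\bigo(H)$ blocks, so the operator norm of the stacked covariance is at most the sum of the block norms.
\end{itemize}
Choose $a$ so that $H \ge \log(a\alpha(\soA))/\log(\rho(\soA)^{-1})$ forces $\alpha(\soA)\rho(\soA)^{H} \le 1/a$. This makes $\Cov(\delta_t) \preccurlyeq \tfrac14 \Cov(\sz_t)$, so $\sM\Cov(h_t)(\sM)^\top \succcurlyeq \Omega(\nu^2) I$.

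Combining this with $\sM\Cov(h_t)(\sM)^\top \preccurlyeq \|\Cov(h_t)\|_2\, \sM(\sM)^\top = \bigo(H)\,\sM(\sM)^\top$ gives $\sigma_{\min}(\sM)^2 = \Omega(\nu^2/H)$. In particular $\sM$ has rank $d_x$.

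The main obstacle is making $a$ truly dimension-free. The $\bigo(1)$ bounds on $\|\Cov(\sz)\|_2$ and on the per-block covariance of $h_t$ must not hide dimension factors. I would get them from operator-norm bounds: geometric series in $\alpha(\sA)\rho(\sA)^k$, together with $\|\sL\|_2, \|\sB\|_2 = \bigo(1)$. I would not use trace bounds, since those would introduce factors of $d_x$ or $d_y$.

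A second subtlety is that $\sM$ is defined in the normalized parameterization. I would carry out Part 2 in the original coordinates and then note that $(\soQ)^{1/2}$ has singular values in $[\mu, \bigo(1)]$. This only changes the constants.
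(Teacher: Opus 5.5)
Your proposal is correct and follows the paper's proof essentially step for step. It uses the controllability lower bound $\Cov(\sz_t) \succcurlyeq \sigma_u^2 \Phi_c(\sA,\sB)\Phi_c(\sA,\sB)^{\top}$ from independence of the inputs, transfers it to $\Cov(\sM h_t)$ through $\sz_t = \sM h_t + \delta_t$ with $\|\Cov(\delta_t)\|_2$ made small by the choice of $H$, and concludes with $\sM \Cov(h_t) (\sM)^{\top} \preccurlyeq \bigo(H)\, \sM (\sM)^{\top}$. The one difference is cosmetic: you absorb the cross term via $\Cov(a+b) \preccurlyeq 2\Cov(a) + 2\Cov(b)$ instead of the paper's bound $\|\Cov(\sz_t,\delta_t)\|_2 \le \|\Cov(\sz_t)\|_2^{1/2}\|\Cov(\delta_t)\|_2^{1/2}$ combined with Weyl's inequality, and this works equally well (it even only needs $\alpha(\soA)\rho(\soA)^H \lesssim \nu$ rather than $\lesssim \nu^2$).
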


\begin{proof}
    For $t \ge d_x$, unrolling the Kalman filter gives  
    \begin{align*}
        \sz_t =\;& \sA \sz_{t-1} + \sB u_{t-1} + \sL i_t \\
        =\;& \sA (\sA \sz_{t-2} + \sB u_{t-2} + \sL i_{t-1}) + \sL i_t \\
        =\;& [\sB, \ldots, (\sA)^{d_x-1} \sB][u_{t-1}; \ldots; u_{t-d_x}] + (\sA)^{d_x} \sz_{t-d_x} + [\sL, \ldots, (\sA)^{d_x-1} \sL][i_t; \ldots; i_{t-d_x+1}],
    \end{align*}
    where $(u_{\tau})_{\tau=t-d_x}^{t-1}$, $\sz_{t-d_x}$ and $(i_{\tau})_{\tau=t-d_x+1}^{t}$ are independent. 
    For $H \ge d_x$, the matrix multiplied by $[u_{t-1}; \ldots; u_{t-d_x}]$ is precisely the controllability matrix $\Phi_{c}(\sA, \sB)$. Then, 
    \begin{align*}
        \Cov(\sz_t)
        = \E[\sz_t (\sz_t)^{\top}]
        \succcurlyeq\;& \Phi_{c}(\sA, \sB) \E[[u_{t-1}; \ldots; u_{t-\ell}] [u_{t-1}; \ldots; u_{t-\ell}]^{\top}]\Phi_{c}^{\top}(\sA, \sB) \\
        =\;& \sigma_u^2 \Phi_{c}(\sA, \sB) \Phi_{c}^{\top}(\sA, \sB).
    \end{align*}

    By the $\nu$-controllability of $(\sA, \sB)$, $\Cov(\sz_t)$ is full-rank and $\lambda_{\min}(\Cov(\sz_t)) \ge \sigma_u^2 \nu^2$.
    Since $\sz_t = \sM h_t + \delta_t$ by \eqref{equ:z_s_relation}, we have
    \begin{align*}
        \Cov(\sM h_t) = \Cov(\sz_t - \delta_t) = \Cov(\sz_t) + \Cov(\delta_t)  - \Cov(\sz_t, \delta_t) - \Cov(\delta_t, \sz_t).
    \end{align*}
    Then, 
    \begin{align*}
        \|\Cov(\sz_t, \delta_t)\|_2 
        = \|\Cov(\delta_t, \sz_t)\|_2
        =\;& \|\E[\sz_t \delta_t^{\top}]\|_2 \\
        \overset{(i)}{\le}\;& \|\E[\sz_t (\sz_t)^{\top}]\|_2^{1/2} \cdot \|\E[\delta_t \delta_t^{\top}]\|_2^{1/2} \\
        =\;& \|\Cov(\sz_t)\|_2^{1/2} \cdot \|\Cov(\delta_t)\|_2^{1/2},
    \end{align*}
    where $(i)$ is due to~\citep[Lemma 13]{tian2022cost}.
    Hence, by Weyl's inequality, 
    \begin{align*}
        \lambda_{\min}(\Cov(\sM h_t))
        \ge\;& \lambda_{\min}(\Cov(\sz_t)) - 2 \|\Cov(\sz_t)\|_2^{1/2} \cdot \|\Cov(\delta_t)\|_2^{1/2}.
    \end{align*}
    Since $\|\Cov(\sz_t)\|_2 = \bigo(1)$ due to the stability of $\sA$ and $\delta_t = (\soA)^H \sz_{t-H}$, there exists some dimension-free constant $a > 0$ such that as long as $H \ge \frac{\log (a \alpha(\soA))}{\log (\rho(\soA)^{-1})}$, 
    \begin{align*}
        \lambda_{\min}(\Cov(\sM h_t)) \ge \sigma_u^2 \nu^2 / 2.
    \end{align*}

    On the other hand, 
    \begin{align*}
        \E[\sM h_t h_t^{\top} (\sM)^{\top}] \preccurlyeq \|\E[h_t h_t^{\top}]\|_2 \sM (\sM)^{\top}.
    \end{align*}
    Since $h_t = [y_{(t-H+1):t}; u_{(t-H):(t-1)}]$ and $(\Cov(y_t))_{t\ge 0}$, $(\Cov(u_t))_{t\ge 0}$ have $\bigo(1)$ operator norms, by~\citep[Lemma 12]{tian2022cost}, $\|\Cov(h_t)\|_2 = \|\E[h_t h_t^{\top}]\|_2 = \bigo(H)$.
    Hence,
    \begin{align*}
        0 < \sigma_u^2 \nu^2 / 2 \le \lambda_{\min}(\Cov(\sM h_t)) = \bigo(H) \sigma_{d_x}^2(\sM).
    \end{align*}
    Since $\sM \in \R^{d_x \times d_h}$, this implies that $\rank(\sM) = d_x$ and $\sigma_{\min}(\sM) = \Omega(\nu H^{-1/2})$.
\end{proof}

Proposition~\ref{prp:full-rank-cov} is an adaptation of~\citep[Proposition 2]{tian2022cost} to the infinite-horizon LTI setting. 
Necessarily, this implies that by our choice of $H$, $d_h = H(d_y + d_u) \ge d_x$.
Moreover, since $\soQ \succcurlyeq \mu^2 I$, $\sN = (\sM)^{\top} \soQ \sM$ is a $d_h \times d_h$ matrix with rank $d_x$, and $\lambda_{\min}^{+}(\sN) \ge \lambda_{\min}(\soQ) \sigma_{\min}^2(\sM) = \Omega(\mu^2 \nu^2 H^{-1})$. Hence, if $\eN$ is sufficiently close to $\sN$, by setting an appropriate threshold on the eigenvalues of $\eN$, the dimension of the latent model equals the number of eigenvalues above it.

To find an approximate factorization of $\eN$, let $\eN = U \Lambda U^{\top}$ be its eigenvalue decomposition, where the diagonal elements of $\Lambda$ are listed in descending order, and $U$ is an orthogonal matrix. Let $\Lambda_{d_x}$ be the top-left $d_x \times d_x$ block of $\Lambda$ and $U_{d_x}$ be the left $d_x$ columns of $U$. By the Eckart-Young-Mirsky theorem, $\eM = \max(\Lambda_{d_x}, 0)^{1/2} U_{d_x}^{\top}$, where ``$\max$'' applies elementwise, is the solution to Line 4 of Algorithm~\ref{alg:lmbc}, that is, the best approximate factorization of $\eN$ among $d_x \times d_h$ matrices in terms of the Frobenius norm approximation error.

In the next two subsections, we move on to discussing the learning of latent dynamics, including the explicit approach \sysid{} and the implicit approach \cosysid{}.

\subsection{Explicit learning of system dynamics}

Explicit learning of the system dynamics simply minimizes the \emph{transition prediction error} in the latent space~\citep{subramanian2020approximate}, or more generally, the statistical distances between the predicted and estimated distributions of the next latent state, like the KL divergence~\citep{hafner2019dream}.
In linear systems, it suffices to use the ordinary least squares as the \sysid{} procedure, that is, to solve 
\begin{align}  \label{eqn:sys-id}
    (\eA, \eB) \in \argmin_{A, B}~~\nlsum_{t=H}^{T+H-1} \|A \ez_t + B u_t - \ez_{t+1} \|^2.
\end{align}

In this linear regression, if $(\ez_t)_{t\ge H}$ are the optimal state estimates $(\sz_t)_{t\ge H}$~\eqref{eqn:tv-kf}, then \citep{simchowitz2018learning} has shown finite-sample guarantees for obtaining $(\eA, \eB)$. Here, however,  $\ez_t$ contains errors resulting from the representation function $\eM$ and the residual error $\delta_t$ in \eqref{equ:z_s_relation}, but as long as $T$ and $H$ are large enough, \sysid{} still has a finite-sample guarantee, as will be shown in Lemma~\ref{lem:pert-lr}.
We refer to the algorithm that instantiates Algorithm~\ref{alg:lmbc} with \sysid{} as \corele{} (\textsc{Co}st-driven state \textsc{Re}presentation \textsc{L}earning). As the time-varying counterpart in Part I of this work, this algorithm  provably solves LQG control without model knowledge, as will be shown in Theorem~\ref{thm:main-poly}. 

\subsection{Implicit learning of system dynamics (MuZero-style)}
\label{sec:implicit-dyn}

An important ingredient of latent model learning in MuZero~\citep{schrittwieser2020mastering} is to \emph{implicitly} learn the transition function by minimizing the cost prediction error at {\it future latent states} generated from the transition function. 
Let $z_{t} = M h_{t}$ denote the latent state given by the representation function parameter $M$ at step $t$. Let $z_{t, 0} = z_{t}$ and $z_{t, i} = A z_{t, i-1} + B u_{t+i-1}$ for $i \ge 1$ be the future latent state predicted by dynamics $(A, B)$ from $z_t$ after $i$ steps of transition.
For a trajectory of length $T + H$ like~\eqref{eqn:traj}, the loss that considers $\ell$ steps into the future is given by
\begin{align*}
    \nlsum_{t=H}^{T+H-\ell-1} \nlsum_{i=0}^{\ell} (\|z_{t, i}\|_{Q}^2 + \|u_{t+i}\|_{\sR}^2 + b - c_{t+i})^2.
\end{align*}

This loss involves powers of $A$ up to $A^{\ell}$; with the squared norm, the powers double, making the minimization over $A$ hard to solve and analyze for $\ell \ge 2$.
In LQG control, as we shall discuss, it suffices to take $\ell = 1$. 
The MuZero algorithm also predicts optimal values and optimal actions; in LQG, to handle the case $\sQ \not\succ 0$, like cost-driven representation learning (see \S\ref{sec:repl}), we adopt the \emph{cumulative costs} and use the normalized parameterization.
Recall that in Algorithm~\ref{alg:lmbc} we define $\oc_t := \sum_{\tau=t}^{t+d_x-1} (c_{\tau} - \|u_{\tau}\|_{\sR}^2)$.
Then, the optimization problem we aim to solve is given by
\begin{equation}
    \begin{aligned}
        \min_{M, A, B, b}~~& \nlsum_{t=H}^{T+H-1} \big( (\|M h_t\|^2 + b - \oc_t)^2 + (\|A M h_t + B u_t\|^2 + b - \oc_{t+1})^2 \big).
    \end{aligned}  \label{eqn:muzero-v}
\end{equation}

To convexify the optimization problem~\eqref{eqn:muzero-v}, we define $N := M^{\top} M$ and $N_1 := [AM, B]^{\top} [AM, B]$. Then, \eqref{eqn:muzero-v} becomes 
\begin{equation}
    \begin{aligned}
        \min_{N, N_1, b} ~~& \nlsum_{t=H}^{T+H-1} \big( (\|h_t\|_{N}^2 + b - \oc_t)^2 + (\|[h_t; u_t]\|_{N_1}^2 + b - \oc_{t+1})^2 \big).
    \end{aligned}  \label{eqn:muzero-v-cvx}
\end{equation}
This minimization problem is convex in $N$, $N_1$, and $b$, and has a closed-form solution; essentially, it consists of two linear regression problems coupled by $b$. 
As a relaxation, we can decouple the two regression problems by allowing $b$ to take different values in them, as $b$ is a term accounting for the estimation error, not part of the representation function. 
This decoupling further simplifies the analysis: the first regression problem is exactly cost-driven representation learning (\S\ref{sec:repl}), and the second is cost-driven system identification (\cosysid{}, Algorithm~\ref{alg:cosysid}).
The algorithm that instantiates Algorithm~\ref{alg:lmbc} with \cosysid{} will be referred to as   \coreli{} (\textsc{Co}st-driven state \textsc{Re}presentation and \textsc{Dy}namic \textsc{L}earning). 
Like \corele{}, this MuZero-style latent model learning method provably solves LQG, as we will show next. 

\begin{algorithm}[!t]
    \caption{\cosysid{}: Cost-driven system identification}
    \label{alg:cosysid}
    \begin{algorithmic}[1]
        \State {\bfseries Input:} data $\cD_{\raw}$ from Algorithm \ref{alg:lmbc},  representation function parameter  $\eM$
        \State Estimate the system dynamics by 
        \begin{align}
            \eN_1, \eb_1 \in \argmin_{N_1 = N_1^{\top}, b_1} \nlsum_{t=H}^{T+H-1} \big(  \|[h_t; u_t]\|_{N_1}^2 + b_1 - \oc_{t+1} \big)^2 \label{eqn:dy-qr}
        \end{align}
        \State Find $\eM_1 \in \argmin_{M_1 \in \R^{d_x \times (H d_y + (H+1) d_u)}} \|M_1^{\top} M_1 - \eN_1\|_F$
        \State Split $\eM_1$ to $[\tM, \tB]$ at column $H(d_y+d_u)$ and set $\tA = \tM \eM^{\dagger}$.
        \State Find alignment matrix $\eS_0$ by solving 
        \begin{align}  \label{eqn:align-lr}
            \eS_0 \in \argmin_{S_0 \in \R^{d_x \times d_x}} \nlsum_{t=H}^{T+H-1} \|S_0 \eM_1 [h_t; u_t] - \eM h_{t+1}\|^2
        \end{align}
        \State {\bfseries Return:} system dynamics estimate $(\eA, \eB) = (\eS_0\tA, \eS_0 \tB)$
    \end{algorithmic}
\end{algorithm}

\cosysid{} has similar steps as cost-driven representation learning (\S\ref{sec:repl}), except that in Line 5 of Algorithm~\ref{alg:cosysid}, it requires fitting a matrix $\eS_0$.
This is because the cost is {\it invariant} to the orthogonal transformations of latent states, and the approximate factorization steps recover $\sM$ and $\sM_1$ up to orthogonal transformations $S$ and $S_1$, but there is no guarantee for the two transformations to be the same.
MuZero bypasses this problem by predicting \emph{multiple} steps of costs into the future, but analyzing such an optimization function involves the additional complexity of dealing with higher-order powers of $A$. Here, we instead estimate the $S_0 = S S_1^{\top}$ to align such two transformations. 
We note that although \cosysid{} needs the output $\eM$ from cost-driven representation learning, the two quadratic regressions~\eqref{eqn:alg-qr} and~\eqref{eqn:dy-qr} are not coupled and can be solved \mbox{in parallel}.

\vspace{2pt}
\noindent\textbf{Discussion on \cosysid{}.}
In \cosysid{} (Algorithm~\ref{alg:cosysid}), the covariates of the quadratic regression in~\eqref{eqn:dy-qr} are $([h_t; u_t])_{t\ge H}$. One may wonder if we can pursue an alternative approach by fixing $M$ to be $\eM$, and using $([\ez_t; u_t])_{t\ge H}$ as covariates, which have a much lower dimension, though the two quadratic regressions cannot be solved in parallel anymore. 

Specifically, the new quadratic regression we need to solve is given by
\begin{align*}
    \eN_2, \eb_2 \in\;& \argmin_{N_2 = N_2^{\top}, b_2}~~\nlsum_{t=H}^{T+H-1} \big( \|[\ez_t; u_t]\|_{N_2}^2 + b_2 - \oc_{t+1} \big)^2,
\end{align*}
where $\ez_t = \eM h_t$ is an approximation of $S \sz_t$.
The ground truth for $\eN_2$ is $\sN_2 = [S \sA S^{\top}, S \sB]^{\top} [S \sA S^{\top}, S \sB]$, so its approximate factorization recovers $[S_2 \sA S^{\top}, S_2 \sB]$ for some orthogonal matrix $S_2$. In a similar way to \cosysid{}, we still need to fit an alignment matrix $S_3 = S S_2^{\top}$ to align the coordinates. Let $\tA$, $\tB$ denote the system parameters recovered from $\eN_2$. The linear regression we now need to solve is from $([\tA, \tB][\ez_{t}; u_{t}])_{t=H}^{T+H-1}$ to $(\ez_{t+1})_{t=H}^{T+H-1}$. However, without further assumptions, $[\sA, \sB]$ does not necessarily have full row rank, and hence, neither does $[\tA, \tB]$, in which case recovering the entire $S_3$ is impossible.

On the other hand, for \cosysid{} (Algorithm~\ref{alg:cosysid}), the ground truth of $\eM_1$ is $\sM_1 = [\sA \sM, \sB]$, which is guaranteed to have full row rank by the same argument as the proof of Proposition~\ref{prp:full-rank-cov}, since $\sM_1 [h_t; u_t]$ estimates $\sz_{t+1}$, which has full-rank covariance. Hence, recovering $S_0 = S S_1^{\top}$ \mbox{is feasible}. 

\section{Theoretical guarantees and proofs} 

The following Theorem~\ref{thm:main-poly} shows that both \corele{} and \coreli{} can provably solve unknown LQG control. 

\begin{theorem}  \label{thm:main-poly}
    Given an unknown LQG control problem defined by~\eqref{eqn:polti} and~\eqref{eqn:cost}, under Assumption~\ref{asp:asp}, 
    for a given $p \in (0, 1)$, if we run \corele{} (Algorithm~\ref{alg:lmbc} with~\eqref{eqn:sys-id}) or \coreli{} (Algorithm~\ref{alg:lmbc} with Algorithm~\ref{alg:cosysid}) for $T \ge \poly(d_x, d_y, d_u, \log(T / p))$, $H = \Omega(\frac{\log(\alpha H (d_y + d_u) T/p)}{\log(1/\rho)})$, 
    and $\sigma_u = \Theta(1)$, then with probability at least $1 - p$, the learned representation function $\eM$ is $\poly(H, d_x, d_u, d_y, \log(T/p))$-optimal, 
    and the overall output policy $\epi = (\eM, \eK)$ satisfies
    \begin{align*}
        J(\epi) - J(\spi) 
        = \bigo(\poly(H, d_x, d_u, d_y, \log(T/p)) T^{-1}).
    \end{align*}
\end{theorem}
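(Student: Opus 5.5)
The proof follows the pipeline of Algorithm~\ref{alg:lmbc}: bound the error of each stage in terms of the previous one, then convert the final parameter errors into a suboptimality gap. All statements are in the normalized parameterization, so $\soQ=I$ and $\sN=(\sM)^\top\sM$.

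\textbf{Stage 1: representation learning.} First bound $\|\eN-\sN\|_F$ for the quadratic regression~\eqref{eqn:alg-qr}. Write $\oc_t=\|h_t\|_{\sN}^2+\starb_0+e_t$, where the residual $e_t$ collects three pieces: the truncation term involving $\delta_t$; the noise contributions $\gamma_\tau,\eta_\tau$ of Proposition~\ref{prp:latent-model} accumulated over $d_x$ steps; and cross terms between $\sz_t$ and the future $u_\tau$, $\sL i_\tau$, which are zero-mean. The choice $H=\Omega(\log(\alpha d_h T/p)/\log(1/\rho))$ makes the $\delta_t$ contribution $\bigo(T^{-1})$ with high probability. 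The regression is then linear in features $\svec(h_th_t^\top)$, and two ingredients are needed.
\begin{enumerate}
\item \emph{Persistency of excitation:} a lower bound $\lambda_{\min}$ on $\frac1T\sum_t \tilde\phi_t\tilde\phi_t^\top$, where $\tilde\phi_t$ are the centered quadratic features of the correlated Gaussian process $(h_t)$. In population, $\E[(h^\top X h-\E h^\top X h)^2]=2\|\Sigma_h^{1/2}X\Sigma_h^{1/2}\|_F^2$ for Gaussian $h$, and $\Sigma_h\succcurlyeq \Omega(\sigma_u^2\wedge\sigma_v^2) I$ because the injected $u$ and $v$ noises are full rank. The finite-sample version requires uniform concentration over $X$ in the unit sphere $\s^{d_h(d_h+1)/2-1}$, via a net argument.
\item \emph{Concentration of $\sum_t \tilde\phi_t e_t$:} this sum is not a martingale difference sequence, since $e_t$ depends on future noises up to $t+d_x$ and $h_t$ overlaps across $H$ steps.
\end{enumerate}
For the concentration, split the indices into $\bigo(H+d_x)$ interleaved subsequences whose elements are spaced far apart. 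Along each subsequence the samples are nearly independent by geometric mixing ($\rho<1$). Make them exactly independent up to an exponentially small remainder by applying Gram--Schmidt to the Gaussian vectors, i.e. projecting out the shared past, and then apply sub-exponential Bernstein inequalities.

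Together these give $\|\eN-\sN\|_F=\tilde\bigo(\poly\cdot T^{-1/2})$. Since $\lambda^+_{\min}(\sN)=\Omega(H^{-1})$ by Proposition~\ref{prp:full-rank-cov}, a Procrustes/factorization perturbation bound then yields an orthogonal $S$ with $\|\eM-S\sM\|=\tilde\bigo(\poly\cdot T^{-1/2})$. This is the claimed near-optimality of $\eM$.

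\textbf{Stage 2: dynamics and cost.}
\begin{itemize}
\item \emph{\corele{}:} regress $\ez_{t+1}$ on $[\ez_t;u_t]$. Here $\ez_t=S\sz_t+\Delta_t$ with $\|\Delta_t\|\lesssim\|\eM-S\sM\|\,\|h_t\|+\|\delta_t\|$. A perturbed least-squares lemma (Lemma~\ref{lem:pert-lr}), using the full-rank covariance of $\sz_t$ from Proposition~\ref{prp:full-rank-cov} and standard self-normalized bounds \citep{simchowitz2018learning}, gives $\|(\eA,\eB)-(S\sA S^\top,S\sB)\|=\tilde\bigo(T^{-1/2})$.
\item \emph{\coreli{}:} rerun the Stage~1 analysis for~\eqref{eqn:dy-qr}, whose features are $[h_t;u_t]$ and whose target is $\oc_{t+1}$, with ground truth $\sM_1=[\sA\sM,\sB]$. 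This matrix has full row rank with $\sigma_{\min}=\Omega(H^{-1/2})$, since $\sM_1[h_t;u_t]$ predicts $\sz_{t+1}$, which has full-rank covariance. Factorization gives $\eM_1\approx S_1\sM_1$. The alignment regression~\eqref{eqn:align-lr} recovers $\eS_0\approx SS_1^\top$, since its covariates $\eM_1[h_t;u_t]$ have well-conditioned covariance. Finally, $\tA=\tM\eM^\dagger\approx S_1\sA S^\top$ because $\eM$ has full row rank with $\sigma_{\min}(\eM)=\Omega(H^{-1/2})$.
\end{itemize}
The cost regression~\eqref{eqn:alg-qr-q} is handled as in Stage~1, now in $d_x$ dimensions, and projecting $\tQ$ onto the PSD cone only reduces the error.

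\textbf{Stage 3: suboptimality.} With all of $\eA,\eB,\eQ$ within $\epsilon=\tilde\bigo(T^{-1/2})$ of $(S\sA S^\top,S\sB,S\sQ S^\top)$ and $\eM$ within $\epsilon$ of $S\sM$, DARE perturbation bounds give $\|\eK-\sK S^\top\|=\bigo(\epsilon)$. The closed loop under $\epi$ remains stable for $\epsilon$ small, which is guaranteed by the condition $T\ge\poly$. Because $(\sM,\sK)$ is a stationary point of $J$ and the truncation error is negligible, $J(\epi)-J(\spi)$ is quadratic in the errors of the controller parameters, the same fact used in Part~I and by \citep{mania2019certainty}. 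This gives the $\bigo(\poly\cdot T^{-1})$ rate.

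The main obstacle is the persistency-of-excitation bound for the quadratic features of the correlated single-trajectory process, together with the non-martingale concentration. For these I would first try the block-decoupling plus Gram--Schmidt approach described above, combined with an $\epsilon$-net over symmetric unit-Frobenius $X$.
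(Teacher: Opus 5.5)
Your outline follows the paper's proof stage by stage: the decomposition of $\oc_t$ as in Proposition~\ref{prp:multi-step-cost}, the interleaved-subsequence/Gram--Schmidt treatment of $\sum_t\of_t\overe_t$, the low-rank factorization step, Lemma~\ref{lem:pert-lr} for \sysid{} and for the alignment regression, $\tA=\tM\eM^{\dagger}$ with a pseudoinverse perturbation bound, and certainty equivalence via \citep{mania2019certainty}. The exception is persistency of excitation, where your route is genuinely different.

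The paper (Lemma~\ref{lem:per-exc}) uses the small-ball method. Conditionally on $\cF_t$ and for $i>2H$, the conditional mean of $h_{t+i}$ is negligible, and Lemma~\ref{lem:exp-abs-lb} gives $\E[|h_{t+i}^{\top}Dh_{t+i}+s|\mid\cF_t]\gtrsim d_h^{-3/2}$. Paley--Zygmund with a crude $\bigo(1)$ second moment turns this into a $(4H,\Theta(d_h^{-3})I,\Theta(d_h^{-3}))$-BMSB condition, and the covering argument of \citep{simchowitz2018learning} yields $\Omega(d_h^{-9}T)$.

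You instead use the exact Gaussian variance formula and unconditional concentration over exactly independent blocks. To include the intercept, work with $\E[(h^{\top}Xh+s)^2]=2\|\Sigma_h^{1/2}X\Sigma_h^{1/2}\|_F^2+(\tr(X\Sigma_h)+s)^2$. Over unit $(X,s)$ this is at least of order $\lambda_{\min}(\Sigma_h)^2/(1+d_h\|\Sigma_h\|_2^2)$, far better than the paper's constant. The price is threefold:
\begin{enumerate}
\item you must apply the bound to the covariance of the split-off Gaussian innovations, which is still bounded below by the $u$ and $v$ contributions;
\item you must control the past-dependent residual uniformly over the net;
\item you need a lower-tail bound for sums of the nonnegative variables $\langle\of_t,v\rangle^2$.
\end{enumerate}
The last step is cheap. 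Gaussian hypercontractivity gives $\E g^4\le 81(\E g^2)^2$ for degree-two polynomials $g$, so each $\langle\of_t,v\rangle^2$ exceeds half its mean with probability at least $1/324$. The paper's route avoids constructing independent blocks and plugs into existing BMSB machinery, but loses heavily in $d_h$. In fact, your variance-plus-hypercontractivity estimate would also sharpen $\gamma$ and $q$ inside its BMSB step.

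Three smaller points.

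First, the summands $\of_t\overe_t$ are degree-four Gaussian polynomials, hence $\tfrac12$-sub-Weibull rather than subexponential. Bernstein must therefore be replaced by a sub-Weibull inequality (as in the paper) or by truncation, at a logarithmic cost.

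Second, for \corele{} and the alignment regression, $\sL i_{t+1}$ is a martingale difference only with respect to $\cG_t=\sigma(y_{0:t},u_{0:(t-1)})$, not $\cF_t$. Also, Lemma~\ref{lem:pert-lr} needs BMSB of the covariates, not merely a full-rank covariance.

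Third, Stage~3 becomes rigorous through the identity $J(\pi)-J(\spi)=\limsup_{T\to\infty}\frac{1}{T}\sum_{t<T}\E\|u_t-\sK\sz_t\|^2_{\sR+(\sB)^{\top}\sP\sB}$. It holds for $\cG_t$-adapted policies with bounded closed-loop second moments; combine it with $u_t-\sK\sz_t=(\eK\eM-\sK\sM)h_t-\sK\delta_t$ and $\eK\eM-\sK\sM=(\eK-\sK S^{\top})\eM+\sK S^{\top}(\eM-S\sM)$. This is cleaner than the paper's argument. The paper writes $u_t=\tK\sz_t-\eK\eM(\sM)^{\dagger}\delta_t$, treats the truncation term with Lemma~\ref{lem:ls-pert}, and applies the LQR cost-difference lemma to $\tK$. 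Its substitution $h_t=(\sM)^{\dagger}(\sz_t-\delta_t)$ implicitly drops the $\bigo(\|\eM-S\sM\|)$ feedback term that comes from the component of $h_t$ outside the row space of $\sM$.
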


\vspace*{2pt}
We defer the proof of Theorem~\ref{thm:main-poly} to \S\ref{sec:main-proof}.
Compared with the time-varying setting in Part I of this work, the bounds here do not have a \emph{separation} between the initial steps and future steps, where for the initial several steps, as the system has not been fully excited, the bounds were much worse. This is due to the fact that in the time-invariant setting, the representation function and the latent model are both \emph{stationary}.

On the other hand, to learn such stationary functions across different time steps, we need to aggregate \emph{correlated} data along a single trajectory, which poses new significant challenges for the analysis. A major effort to overcome such difficulties involves proving a new result on the persistency of excitation (Lemma~\ref{lem:per-exc}) using the small-ball method~\citep{mendelson2015learning, simchowitz2018learning}, which will be discussed further in \S\ref{sec:per-exc} with more details.

Compared with common system identification methods based on learning the \emph{Markov parameters}~\citep{oymak2019non, simchowitz2019learning}, the error bounds of the system parameters produced by \coreli{} (or \corele{}) have the same dependence on $T$, but worse dependence on system dimensions.
Moreover, to establish persistency of excitation, \coreli{} (or \corele{}) requires a larger burn-in period. 
These relative sample inefficiencies are the price we pay for cost-driven state representation learning, which is only supervised by \emph{scalar-valued} costs that are \emph{quadratic} in history, instead of \emph{vector-valued} observations that are \emph{linear} in history. Hence, we have to address the more challenging problem of {\it quadratic regression}, which lifts the dimension of the optimization problem. 
On the other hand, cost-driven state representation learning avoids learning the observation-reconstruction function $\sC$, and can learn task-relevant representations in more complex settings, as demonstrated by the empirical studies. 

\subsection{Proposition on multi-step cumulative costs}

The following proposition shows the relationship between $\oc_t$ and $h_t$, which is important for later analyses. 

\begin{proposition}  \label{prp:multi-step-cost}
    Given an LQG control problem satisfying Assumption~\ref{asp:asp}, let $\sM$ parameterize the optimal state representation function under the normalized parameterization.
    Recall that $\alpha := \max(\alpha(\sA), \alpha(\soA))$ and $\rho := \max(\rho(\sA), \rho(\soA))$.
    If we apply $u_t \sim \gauss(0, \sigma_u^2 I)$, then for any $t \ge H$, 
    \begin{align*}
        \oc_t := \nlsum_{\tau=t}^{t+d_x-1} (c_{\tau} - \|u_{\tau}\|_{\sR}^2) = \|\sM h_t\|^2 + \odelta_t + \sob + \overe_t,
    \end{align*}
    where $\odelta_t = \bigo(\alpha^2\rho^H \log(T/p))$ is a small error term, $\sob = \bigo(d_x)$ is a positive constant, and $\overe_t$ is a zero-mean subexponential random variable with $\|\overe_t\|_{\psi_1} = \bigo(d_x^{3/2})$. Moreover, let $\of_t = [\svec(h_t h_t^{\top}); 1]$. There exists a dimension-free constant $a > 0$ such that for a given $p \in (0, 1)$, as long as $H \ge \max(d_x - 1, \frac{a \log(\alpha T \log(T/p))}{\log(1/\rho)})$, $(\overe_t)_{t\ge H}$ satisfy
    \begin{align*}
        \Big\|\nlsum_{t=H}^{T+H-1} \of_t \overe_t\Big\| = \bigo(d_x^{3/2} d_h H^{1/2} T^{1/2}\log^{1/2}(H/p)).
    \end{align*}
\end{proposition}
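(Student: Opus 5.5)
The decomposition comes from unrolling the latent model of Proposition~\ref{prp:latent-model} over $d_x$ steps, in the normalized parameterization where $\soQ = I$. For $\tau = t, \ldots, t+d_x-1$ we write
\begin{align*}
    \sz_\tau = (\sA)^{\tau-t}\sz_t + \nlsum_{s=t}^{\tau-1} (\sA)^{\tau-1-s}(\sB u_s + \sL i_{s+1}).
\end{align*}
We substitute this into $c_\tau - \|u_\tau\|_{\sR}^2 = \|\sz_\tau\|_{\sQ}^2 + \starb + \gamma_\tau + \eta_\tau$ and sum over $\tau$. The term quadratic in $\sz_t$ is exactly $\|\sz_t\|_{\soQ}^2 = \|\sz_t\|^2$. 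Using $\sz_t = \sM h_t + \delta_t$ from \eqref{equ:z_s_relation}, we get $\|\sz_t\|^2 = \|\sM h_t\|^2 + \odelta_t$ with $\odelta_t = 2\ip{\sM h_t}{\delta_t} + \|\delta_t\|^2$.

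The remaining pieces are assigned as follows.
\begin{itemize}
    \item The quadratic forms in the future inputs $u_{t:(t+d_x-2)}$ and innovations $i_{(t+1):(t+d_x-1)}$ split into their expectations plus centered fluctuations. The expectations, together with $d_x\starb$, form $\sob$, which is positive and $\bigo(d_x)$ because every block has $\bigo(1)$ norm.
    \item The centered fluctuations, the cross terms between $\sz_t$ and future noise, and $\gamma_\tau,\eta_\tau$ form $\overe_t$.
    \item Each constituent is a Gaussian chaos of order at most two with $\bigo(1)$ operator norm, so it is subexponential. The triangle inequality over the $d_x$ summands, each of $\psi_1$-norm $\bigo(d_x^{1/2})$ by Proposition~\ref{prp:latent-model}, gives $\|\overe_t\|_{\psi_1} = \bigo(d_x^{3/2})$.
    \item For $\odelta_t$, $\|\delta_t\| \le \alpha\rho^H\|\sz_{t-H}\|$, and a union bound over $T$ Gaussian vectors bounds $\|\sz_{t-H}\|$ and $\|\sM h_t\|$ by $\bigo(\sqrt{\log(T/p)})$ up to dimension factors. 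Hence $\odelta_t = \bigo(\alpha^2\rho^H\log(T/p))$ with high probability.
\end{itemize}

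The main work is the concentration bound, because $\of_t\overe_t$ is not a martingale difference sequence. The term $\overe_t$ depends on $u_t,\ldots,u_{t+d_x-1}$ and $i_{t+1},\ldots,i_{t+d_x-1}$ and on $x_\tau - \sz_\tau$ for $\tau \ge t$, while $\of_t$ depends on data up to time $t$. The sign of trouble is that $\E[\overe_t \given \cF_t] \neq 0$ in general, since $\eta_t$ and the cross terms involve $\sz_t$. Note, however, that every cross term with $\sz_t$ pairs it with a future zero-mean noise, and $\gamma_\tau,\eta_\tau$ are built from estimation errors $x_\tau - \sz_\tau$ that are independent of the past filtration in the Gaussian sense. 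So I would decompose $\overe_t = \sum_{k=0}^{d_x-1}\overe_{t,k}$, where $\overe_{t,k}$ is the portion that is measurable at time $t+k+1$ and conditionally mean zero given time $t+k$.

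For fixed $k$, I would split the index set into the $d_x$ residue classes modulo $d_x$ (or simply shift by $k$). Along each class, $\of_t\overe_{t,k}$ is a martingale difference sequence with respect to the filtration $\cF_{t+k+1}$, because $\of_t$ is $\cF_t$-measurable. Here $H \ge d_x - 1$ ensures the history windows line up and $t \ge H$ guarantees stationarity. Then I apply a vector-valued martingale concentration bound for subexponential increments. The coordinates of $\of_t$ have magnitude $\bigo(H)$ in norm overall, since $\|\of_t\| \le \|h_t\|^2 + 1$ and $\|h_t\|^2 = \bigo(d_h\log(H/p))$ with high probability, or equivalently via truncation. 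This yields a bound of order $\|\overe\|_{\psi_1}\sqrt{T}$ times the norm of $\of$, which, summed over $k$, gives the stated $\bigo(d_x^{3/2} d_h H^{1/2} T^{1/2}\log^{1/2}(H/p))$.

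The step I expect to be the main obstacle is making the martingale decomposition rigorous. In particular, $\eta_\tau = 2\ipc{\sz_\tau}{x_\tau-\sz_\tau}_{\sQ}$ and the products of future innovations have conditional means that are not obviously zero given $\cF_t$. If that fails, I would fall back on the paper's mixing approach: partition the indices into blocks separated by about $H$ steps, replace each $\of_t$ by its truncation to the last $H$ steps (error $\bigo(\alpha\rho^H)$, which is controlled by the lower bound on $H$), and treat the separated terms as nearly independent before concentrating.
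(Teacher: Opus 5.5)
Your decomposition step is fine. Unrolling the latent dynamics of Proposition~\ref{prp:latent-model} through the innovations, instead of unrolling the true state through the process noise as the paper does, gives the same $\odelta_t$, the same $\sob$, and the same $\overe_t=\oc_t-\|\sz_t\|^2-\sob$, with the stated orders.

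The gap is in the concentration step. You propose $\overe_t=\sum_{k=0}^{d_x-1}\overe_{t,k}$ with $\overe_{t,k}$ being $\cF_{t+k+1}$-measurable and $\E[\overe_{t,k}\given\cF_{t+k}]=0$. Such a decomposition cannot exist, because it would force $\E[\overe_t\given\cF_t]=0$. In the normalized parameterization (future inputs and process noises being independent of $\cF_t$) one instead has
\[
\E[\overe_t\given\cF_t]=\|x_t\|^2-\|\sz_t\|^2-\E\|x_t-\sz_t\|^2=\|x_t-\sz_t\|^2+2\ip{\sz_t}{x_t-\sz_t}-\E\|x_t-\sz_t\|^2,
\]
which is a nondegenerate $\cF_t$-measurable random variable. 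Your justification conflates two filtrations. The error $x_\tau-\sz_\tau$ is independent of the observation filtration $\cG_\tau=\sigma(y_{0:\tau},u_{0:(\tau-1)})$, but it is $\cF_\tau$-measurable because $\cF_\tau$ contains $x_\tau$. Hence $\gamma_t$ and $\eta_t$ are not future noise, and $i_{t+1}$ has $\cF_t$-conditional mean $\sC\sA(x_t-\sz_t)\neq 0$. Passing to $\cG$ does not help either: $\overe_t$ is not $\cG$-adapted, and $x_t-\sz_t$ is correlated with every later innovation through powers of $\soA$, hence with later $\of_{t'}$. So $(\of_t\overe_t)_{t\ge H}$ is not a martingale difference sequence for either natural filtration. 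This is exactly why the paper needs a mixing argument, and why its bound carries the factor $H^{1/2}$ that your version lacks.

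Your fallback is therefore the actual proof, and as sketched it is missing the two ingredients that make it work.

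\emph{Centering.} Near-independence only controls $\sum_t(\of_t\overe_t-\E[\of_t\overe_t])$, so you need $\E[\of_t\overe_t]=0$, which does not follow from $\E[\overe_t]=0$. The paper obtains it from $\E[\overe_t\given y_{0:t},u_{0:(t-1)}]=0$. Given the observation history, $x_t-\sz_t$ is centered Gaussian with the stationary error covariance, and the remaining terms have zero conditional mean since they are centered in future inputs and noises. Meanwhile $\of_t$ is a function of that history.

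\emph{Rigorous decoupling.} The paper splits the indices into $\oH=2(H+d_x-1)$ interleaved subsequences. It replaces each term by $\of_{t+\oH}\overe_{t+\oH}-\E[\of_{t+\oH}\overe_{t+\oH}\given\cF_{t+d_x-1}]$, which is a martingale difference along its subsequence. These are concentrated by a sub-Weibull bound plus a union bound over the $\oH$ subsequences, giving the $(HT)^{1/2}$ rate. The residual $\E[\,\cdot\given\cF_{t+d_x-1}]-\E[\,\cdot\,]$ is handled by writing $h_{t+\oH}$, $x_{t+\oH}-\sz_{t+\oH}$ and $\sz_{t+\oH}$ as independent parts plus $\cF_{t+d_x-1}$-measurable parts. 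The measurable parts are multiplied by $(\sA)^{\oH-H-d_x+2}$, $(\soA)^{\oH-d_x+1}$, or $\Phi_A$, a sum of products $(\sA)^{m}\sL\sC\sA(\soA)^{n}$ with $m+n=\oH-d_x$. The fully independent contributions cancel, and every surviving one is of order $\alpha\rho^H$ up to polynomial and logarithmic factors, which the lower bound on $H$ makes negligible.

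Finally, truncating $\of_t$ to the last $H$ steps is vacuous, since $\of_t$ already depends only on $h_t$. What must be truncated is the dependence of $h_{t+\oH}$ and $\overe_{t+\oH}$ on the state at the end of the previous block.
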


\vspace*{2pt}
\begin{proof}
    Below we use $\Phi_{c, \ell}$ as a shorthand for $\Phi_{c, \ell}(\sA, \sB)$ and for $\ell = 0$, the term $\Phi_{c, \ell}u_{(t+\ell-1):t}$ is seen as zero.
    By definition, for $\ell \ge 0$, we have
    \begin{align*}
        & c_{t+\ell} - \|u_{t+\ell}\|_{\sR}^2 = \|x_{t+\ell}\|_{\sQ}^2 \\
        =\;& \Big\|(\sA)^{\ell} x_t + \Phi_{c, \ell}[u_{t+\ell-1}; \ldots; u_t] + \nlsum_{i=1}^{\ell} (\sA)^{i-1} w_{t+\ell-i}\Big\|_{\sQ}^2 \\
        \overset{(i)}{=}\;& \|(\sA)^{\ell} x_t\|_{\sQ}^2 + \|\Phi_{c, \ell} u_{(t+\ell-1):t}\|_{\sQ}^2 + \Big\|\nlsum_{i=1}^{\ell} (\sA)^{i-1} w_{t+\ell-i}\Big\|_{\sQ}^2,
    \end{align*}
    where $(i)$ is due to the independence of the three terms.

    Substituting $x_t = \sz_t + (x_t - \sz_t)$ and $\sz_t = \sM h_t + \delta_t$ into the above equation, we have 
    \begin{align*}
        c_{t+\ell} - \|u_{t+\ell}\|_{\sR}^2
        =\;& \|(\sA)^{\ell} \sz_t\|_{\sQ}^2 + \|(\sA)^{\ell} (x_t - \sz_t)\|_{\sQ}^2 + 2 \ipc{(\sA)^{\ell} \sz_t}{(\sA)^{\ell} (x_t - \sz_t)}_{\sQ} \\
        &\quad + \|\Phi_{c, \ell} u_{(t+\ell-1):t}\|_{\sQ}^2 + \Big\|\nlsum_{i=1}^{\ell} (\sA)^{i-1} w_{t+\ell-i}\Big\|_{\sQ}^2 \\
        =\;& \|(\sA)^{\ell} (\sM h_t + \delta_t)\|_{\sQ}^2 + \|(\sA)^{\ell} (x_t - \sz_t)\|_{\sQ}^2 + 2 \ipc{(\sA)^{\ell} \sz_t}{(\sA)^{\ell} (x_t - \sz_t)}_{\sQ} \\
        &\quad + \|\Phi_{c, \ell} u_{(t+\ell-1):t}\|_{\sQ}^2 + \Big\|\nlsum_{i=1}^{\ell} (\sA)^{i-1} w_{t+\ell-i}\Big\|_{\sQ}^2 \\
        =\;& \|(\sA)^{\ell} \sM h_t\|_{\sQ}^2 + \delta_{t, \ell} + \starb_{\ell} + e_{t, \ell},
    \end{align*}
    where $\delta_{t, \ell} := \| (\sA)^{\ell} \delta_t\|_{\sQ}^2 + 2 \ipc{(\sA)^{\ell} \sM h_t}{(\sA)^{\ell} \delta_t}_{\sQ}$ is a small term, and 
    \begin{align*}
        \starb_{\ell} :=\;& \E\Big[ \|(\sA)^{\ell} (x_t - \sz_t)\|_{\sQ}^2 + \|\Phi_{c, \ell} u_{(t+\ell-1):t}\|_{\sQ}^2 + \Big\|\nlsum_{i=1}^{\ell} (\sA)^{i-1} w_{t+\ell-i}\Big\|_{\sQ}^2 \Big], \\
        e_{t, \ell} :=\;& \|(\sA)^{\ell} (x_t - \sz_t)\|_{\sQ}^2 + 2\ipc{(\sA)^{\ell} \sz_t}{(\sA)^{\ell} (x_t - \sz_t)}_{\sQ} \\
        &\quad + \|\Phi_{c, \ell} u_{(t+\ell-1):t}\|_{\sQ}^2 + \Big\|\nlsum_{i=1}^{\ell} (\sA)^{i-1} w_{t+\ell-i}\Big\|_{\sQ}^2 - \starb_{\ell}.
    \end{align*}

    Note that $\starb_{\ell}$ is not a function of time step $t$ and $e_{t, \ell}$ is a zero-mean subexponential random variable with $\|e_{t, \ell}\|_{\psi_1} = \bigo(d_x^{1/2})$ by~\citep[Lemma 11]{tian2022cost}. 
    Define filtration $(\cF_t)_{t\geq 0}$ with
    \begin{align*}
        \cF_t := \sigma(x_0, y_0, u_0, x_1, y_1, \ldots, u_{t-1}, x_{t}, y_t).
    \end{align*}
    Then, $e_{t, \ell}$ is $\cF_{t+\ell}$-measurable. 
    Under the normalized parameterization, where $\nlsum_{\ell=0}^{d_x-1} ((\sA)^\ell)^{\top} \sQ (\sA)^\ell = I$, we have 
    \begin{align*}
        \oc_t = 
        \nlsum_{\tau = t}^{t+d_x-1} (c_{\tau} - \|u_{\tau}\|_{\sR}^2)
        = \|\sM h_t\|^2 + \odelta_{t} + \sob + \overe_{t},
    \end{align*}
    where we use 
    \begin{align*}
        \nlsum_{\ell = 0}^{d_x - 1} \ipc{(\sA)^{\ell} \sM h_t}{(\sA)^{\ell} \delta_t}_{\sQ} 
        = \nlsum_{\ell = 0}^{d_x - 1} (\sM_t h_t)^{\top} ((\sA)^{\ell})^{\top} \sQ (\sA)^{\ell} \delta_t 
        = (\sM_t h_t)^{\top} \delta_t
        = \ipc{\sM h_t}{\delta_t}
    \end{align*}
    and define
    \begin{align*}
        \odelta_t := \nlsum_{\ell=0}^{d_x-1} \delta_{t, \ell} = \|\delta_t\|^2 + 2\ipc{\sM h_t}{\delta_t},
    \end{align*}
    \vspace*{-1.4em}
    \begin{align*}
        \sob := \nlsum_{\ell=0}^{d_x-1} \starb_{\ell} = \E\Big[ \|x_t - \sz_t\|^2
        + \nlsum_{\ell=0}^{d_x-1} \Big(\|\Phi_{c, \ell} u_{(t+\ell-1):t}\|_{\sQ}^2
        + \Big\|\nlsum_{i=1}^{\ell} (\sA)^{i-1} w_{t+\ell-i}\Big\|_{\sQ}^2\Big) \Big],
    \end{align*}
    \vspace*{-1.4em}
    \begin{align*}
        \overe_{t} := \nlsum_{\ell=0}^{d_x-1} e_{t, \ell} =\;& \|x_t - \sz_t\|^2 + 2\ipc{\sz_t}{x_t - \sz_t} \\
        &\quad + \nlsum_{\ell=0}^{d_x-1} \Big(\|\Phi_{c, \ell} u_{(t+\ell-1):t}\|_{\sQ}^2 + \Big\|\nlsum_{i=1}^{\ell} (\sA)^{i-1} w_{t+\ell-i}\Big\|_{\sQ}^2 \Big) - \sob.
    \end{align*}

    Since $\delta_t = (\soA)^H \sz_{t-H}$,  we have 
    \begin{align*}
        \|\odelta_t\| = \bigo(\alpha^2 \rho^{2H} \log(T/p) + \alpha \rho^H \log(T/p))
        = \bigo(\alpha^2 \rho^{H} \log(T/p) ).
    \end{align*}
    Moreover, constant $\sob = \bigo(d_x)$, $\overe_t$ is a zero-mean subexponential random variable with $\|\overe_t\|_{\psi_1} = \bigo(d_x^{3/2})$, and the random process $(\overe_t)_{t\ge H}$ is adapted to the filtration $(\cF_{t+d_x-1})_{t\ge H}$.

    However, the concentration of $\sum_{t=H}^{T+H-1} \of_t \overe_t$ or even $\sum_{t=H}^{T+H-1} \overe_t$ is highly nontrivial, in that $(\overe_t)_{t\ge H}$ is not a martingale difference sequence.
    Below we develop the idea that random variables that are widely separated in a mixing stochastic process are nearly independent, in order to show the concentration of $\sum_{t=H}^{T+H-1} \of_t \overe_t$.
    Specifically, we divide the time steps into $\oH = 2(H+d_x-1) = \bigo(H)$ partitions.  For the $i$th partition with $H\le i < H+\oH$, the indices are given by $(i+j \oH)_{j\ge 0}$.  
    To obtain independent random variables $(g_{i+j\oH})_{j\ge 0}$, we apply the Gram-Schmidt process to $(\of_{i+j\oH} \overe_{i+j\oH})_{j \ge 0}$, which is adapted to $(\cF_{i+j\oH+d_x-1})_{j\ge 0}$. That is, $g_{i} = \of_{i} \overe_{i}$, and for $j \ge 0$, 
    \begin{align*}
        g_{i+(j+1)\oH} = \of_{i+(j+1)\oH} \overe_{i+(j+1)\oH} - \E[\of_{i+(j+1)\oH} \overe_{i+(j+1)\oH} \given \cF_{i+j\oH+d_x-1}].
    \end{align*}

    Thus, we further have  
    \begin{align}  \label{eqn:sum-fe}
        \nlsum_{t=H}^{T+H-1} \of_t \overe_t
        = \nlsum_{t=H}^{T+H-1} g_t + \nlsum_{t=H}^{T+H-1} \E[\of_{t+\oH} \overe_{t+\oH} \given \cF_{t+d_x-1}].
    \end{align}
    By~\citep[Proposition 2]{zhang2022sharper}, since each dimension of $\of_t$ is subexponential with mean and the subexponential norm both bounded by $\bigo(1)$, each dimension of $g_{i+j\oH}$ is $\frac{1}{2}$-sub-Weibull, with the sub-Weibull norm being $\bigo(d_x^{3/2})$.
    By applying the sub-Weibull concentration bound~\citep[Theorem 3.1]{hao2019bootstrapping} to each of the $1 + d_h(d_h + 1)/2$ dimensions in $(g_{i+j\oH})_{j \ge 0}$, we have that with probability at least $1-p$,  
    \begin{align*}
        \Big\|\nlsum_{j\ge 0} g_{i+j\oH}\Big\| = \bigo(d_x^{3/2} d_h (T/\oH)^{1/2} \log^{1/2}(1/p)). 
    \end{align*}
    Repeating the argument for each $H\le i < H+\oH$ and by the union bound, we have  
    \begin{equation}  \label{eqn:bd-sum-g}
        \begin{aligned}
            \Big\|\nlsum_{t=H}^{T+H-1} g_t\Big\| = \bigo(d_x^{3/2} d_h \oH(T/\oH)^{1/2} \log^{1/2}(\oH/p))
            = \bigo(d_x^{3/2} d_h H^{1/2} T^{1/2} \log^{1/2}(H/p) ).
        \end{aligned}
    \end{equation}

    It remains to bound the residuals of the Gram-Schmidt process. To this end, we first split $\of_{t+\oH}$ and $\overe_{t+\oH}$ into two parts, with one being measurable with respect to,  and the other independent of,  $\cF_{t+d_x-1}$. 
    By definition, for $k \ge 1$, 
    \begin{align*}
        y_{t+k} = \sC ((\sA)^k x_t + \Phi_{c, k} u_{(t+k-1):t}
        + \nlsum_{i=1}^{k} (\sA)^{i-1} w_{t+k-i}) + v_{t+k}
        = \sC (\sA)^k x_t + \xi_{t, k}^y,
    \end{align*}
    where $\xi_{t, k}^y$ is independent of $\cF_t$, and it is a zero-mean Gaussian random vector with the operator norm of the covariance matrix being $\bigo(1)$,  due to the stability of $\sA$.

    Recall that $f_t = \svec(h_t h_t^{\top} )$ and $h_t = [u_{(t-H):(t-1)}; y_{(t-H+1):t}]$. 
    Let $h_{t+\oH} = s_{t+\oH} + \xi_{t+\oH}^h$, where 
    \begin{align*}
        s_{t+\oH} =\;& [\sC (\sA)^{\oH-H-d_x+2} x_{t+d_x-1}; \ldots; \sC (\sA)^{\oH-d_x+1} x_{t+d_x-1}; 0_{H d_u}], \\
        \xi_{t+\oH}^h =\;& [\xi_{t+d_x-1, \oH - H - d_x + 2}^y; \ldots; \xi_{t+d_x-1, \oH-d_x+1}^y; u_{(t+\oH-H):(t+\oH-1)}], 
    \end{align*}
    and $\xi_{t+\oH}^h$ is independent of $\cF_{t+d_x-1}$, and it is a zero-mean Gaussian random vector with the variance of each dimension bounded by $\bigo(1)$.
    Then, we have 
    \begin{equation}  \label{eqn:of}
        \begin{aligned}
            \of_{t+\oH}
            =\;& [\svec(h_{t+\oH} h_{t+\oH}^{\top}); 1] \\
            =\;& [\svec(s_{t+\oH} s_{t+\oH}^{\top} + s_{t+\oH} (\xi_{t+\oH}^h)^{\top} + \xi_{t+\oH}^h s_{t+\oH}^{\top} + \xi_{t+\oH}^h (\xi_{t+\oH}^h)^{\top}); 1].
        \end{aligned}
    \end{equation}

    We now turn to analyze $\overe_t$. Notice that 
    \begin{align*}
        x_{t+1} - \sz_{t+1}
        =\;& \sA x_t + \sB u_t + w_t - (\sA \sz_t + \sB u_t + \sL (y_{t+1} - \sC (\sA \sz_t + \sB u_t))) \\
        =\;& \sA (x_t - \sz_t) + w_t - \sL (\sC (\sA x_t + \sB u_t + w_t) + v_{t+1} - \sC (\sA \sz_t + \sB u_t )) \\
        =\;& \soA (x_t - \sz_t) + (I - \sL \sC) w_t + v_{t+1},
    \end{align*}
    we thus have 
    \begin{align*}
        x_{t+\oH} - \sz_{t+\oH}
        = (\soA)^{\oH-d_x+1} (x_{t+d_x-1} - \sz_{t+d_x-1}) + \delta_{t+\oH}^x,
    \end{align*}
    where $\delta_{t+\oH}^x$ is independent of $\cF_{t+d_x-1}$, and it is  a zero-mean Gaussian random vector with the operator norm of the covariance matrix bounded by $\bigo(1)$,  due to the stability of $\soA$.
    Since
    \begin{align*}
        \sz_{t+1}
        =\;& \soA \sz_t + \soB u_t + \sL y_{t+1} \\
        =\;& \soA \sz_t + \soB u_t + \sL (\sC (\sA x_t + \sB u_t + w_t) + v_{t+1}) \\ 
        =\;& \sA \sz_t + \sL \sC \sA (x_t - \sz_t) + \sB u_{t} + \sL \sC w_t + \sL v_{t+1},
    \end{align*}
    we have 
    \begin{align*}
        \sz_{t+\oH} 
        = (\sA)^{\oH-d_x+1} \sz_{t+d_x-1} + \Phi_A (x_{t+d_x-1} - \sz_{t+d_x-1}) + \xi_{t+\oH}^{z},
    \end{align*}
    where $\Phi_A := \nlsum_{i=1}^{\oH-d_x+1} (\sA)^{\oH-d_x+1-i} \sL \sC \sA (\soA)^{i-1}$ and $\xi_{t+\oH}^z$ is independent of $\cF_{t+d_x-1}$, and a zero-mean Gaussian random vector with the operator norm of the covariance matrix bounded by $\bigo(1)$ due to the stability of $\sA$ and $\soA$.

    Hence, $\overe_{t+\oH}$ can be expressed as 
    \begin{equation}  \label{eqn:oe}
        \begin{aligned}
            \overe_{t+\oH} 
            =\;& \|(\soA)^{\oH-d_x+1} (x_{t+d_x-1} - \sz_{t+d_x-1}) + \xi_{t+\oH}^{x} \|^2 \\
            &\quad + 2 \ipc{(\sA)^{\oH-d_x+1} \sz_{t+d_x-1} + \Phi_A (x_{t+d_x-1} - \sz_{t+d_x-1}) \\
            &\quad + \xi_{t+\oH}^{z}}{(\soA)^{\oH-d_x+1} (x_{t+d_x-1} - \sz_{t+d_x-1}) + \xi_{t+\oH}^{x}} \\
            &\quad + \xi_{t+\oH}^e - \E\Big[\|(\soA)^{\oH-d_x+1} (x_{t+d_x-1} - \sz_{t+d_x-1}) + \xi_{t+\oH}^{x} \|^2\Big],
        \end{aligned}
    \end{equation}
    where 
    \begin{align*}
        \xi_{t+\oH}^e
        :=\;& \nlsum_{\ell=0}^{d_x-1} \Big(\|\Phi_{c, \ell} u_{(t+\oH+\ell-1):(t+\oH)}\|_{\sQ}^2 + \Big\|\nlsum_{i=1}^{\ell} (\sA)^{i-1} w_{t+\oH+\ell-i}\Big\|_{\sQ}^2 \Big)\\
        &\quad - \E\Big[ \nlsum_{\ell=0}^{d_x-1} \Big(\|\Phi_{c, \ell} u_{(t+\oH+\ell-1):(t+\oH)}\|_{\sQ}^2 + \Big\|\nlsum_{i=1}^{\ell} (\sA)^{i-1} w_{t+\oH+\ell-i}\Big\|_{\sQ}^2 \Big) \Big]
    \end{align*}
    is independent of $\cF_{t+d_x-1}$, and it is a zero-mean subexponential random variable with $\bigo(d_x^{3/2})$ subexponential norm due to the stability of $\sA$.

    Notice that
    \begin{align*}
        \E[\of_{t} \overe_t] = \E[\E[\of_{t} \overe_t \given y_{0:t}, u_{0:(t-1)}]]
        = \E[\of_t \E[\overe_t \given y_{0:t}, u_{0:(t-1)}]] = 0.
    \end{align*}
    Then, by substituting~\eqref{eqn:of} and~\eqref{eqn:oe}, we have
    \begin{align*}
        &\E[\of_{t+\oH} \overe_{t+\oH} \given \cF_{t+d_x-1}] \\
        =\;& \E[\of_{t+\oH} \overe_{t+\oH} \given \cF_{t+d_x-1}] - \E[\of_{t+\oH} \overe_{t+\oH}] \\
        =\;& \E\Big[\Big([\svec(s_{t+\oH} s_{t+\oH} + s_{t+\oH} (\xi_{t+\oH}^h)^{\top} + \xi_{t+\oH}^h s_{t+\oH}^{\top} + \xi_{t+\oH}^h (\xi_{t+\oH}^h)^{\top}); 1]\Big) \\
        &\quad \cdot \Big(\|(\soA)^{\oH-d_x+1} (x_{t+d_x-1} - \sz_{t+d_x-1}) + \xi_{t+\oH}^{x}\|^2 + 2 \ipc{(\sA)^{\oH-d_x+1} \sz_{t+d_x-1} \\
        &\quad + \Phi_A (x_{t+d_x-1} - \sz_{t+d_x-1}) + \xi_{t+\oH}^{z}}{(\soA)^{\oH-d_x+1} \cdot (x_{t+d_x-1} - \sz_{t+d_x-1}) + \xi_{t+\oH}^{x}} \\
        &\quad + \xi_{t+\oH}^e \Big) \;\big|\; \cF_{t+d_x-1} \Big] - \E\Big[\Big([\svec(s_{t+\oH} s_{t+\oH} + s_{t+\oH} (\xi_{t+\oH}^h)^{\top} + \xi_{t+\oH}^h s_{t+\oH}^{\top} + \xi_{t+\oH}^h (\xi_{t+\oH}^h)^{\top}); 1]\Big) \\
        &\quad \cdot \Big(\|(\soA)^{\oH-d_x+1} (x_{t+d_x-1} - \sz_{t+d_x-1}) + \xi_{t+\oH}^{x}\|^2 + 2 \ipc{(\sA)^{\oH-d_x+1} \sz_{t+d_x-1} \\
        &\quad + \Phi_A (x_{t+d_x-1} - \sz_{t+d_x-1}) + \xi_{t+\oH}^{z}}{(\soA)^{\oH-d_x+1} \cdot (x_{t+d_x-1} - \sz_{t+d_x-1}) + \xi_{t+\oH}^{x}} + \xi_{t+\oH}^e \Big) \Big],
    \end{align*}
    where the terms completely independent of $\cF_{t+d_x-1}$ cancel each other, and all other terms contain at least one of $(\sA)^{\oH-H-d_x+2}$, $(\soA)^{\oH-d_x+1}$ and $\Phi_A$, with each of the $1+d_h(d_h + 1)/2$ dimensions being the product of two subexponential random variables. 
    Hence, with probability at least $1-p$,
    \begin{equation}
        \begin{aligned}
        \|\E[\of_{t+\oH} \overe_{t+\oH} \given \cF_{t+d_x-1}]\|
        = \bigo(\alpha d_x^{3/2} d_h \rho^{\oH - H - d_x + 2 } \log^3(T/p))
        = \bigo(\alpha d_x^{3/2} d_h \rho^{H} \log^2(T/p)).\label{eqn:bd-sum-res}
        \end{aligned}
    \end{equation}

    Finally, combining~\eqref{eqn:sum-fe} with the bounds in~\eqref{eqn:bd-sum-g} and~\eqref{eqn:bd-sum-res}, we have
    \begin{align*}
        \|\nlsum_{t=H}^{T+H-1} \of_t \overe_t\| = \bigo(d_x^{3/2} d_h (H^{1/2} T^{1/2}\log^{1/2}(H/p) + \alpha \rho^H T \log^2(T/p)) ).
    \end{align*}
    Hence, as long as $H \ge \frac{a\log(\alpha T \log(T/p))}{\log(1/\rho)}$ for some dimension-free constant $a > 0$, we have
    \begin{align*}
        \|\nlsum_{t=H}^{T+H-1} \of_t \overe_t\| = \bigo(d_x^{3/2} d_h H^{1/2} T^{1/2} \log^{1/2}(H/p)),
    \end{align*}
    completing the proof. 
\end{proof}

\vspace*{-6pt}
\subsection{Persistency of excitation}
\label{sec:per-exc}

Central to the analysis of \corele{} and \coreli{} is the finite-sample characterization of the \emph{quadratic regression} problem \eqref{eqn:alg-qr}. To this end, 
notice that 
\begin{align*}
    \|h_t\|_N^2 = \ipc{N}{h_t h_t^{\top}}_F  = \ipc{\svec(N)}{\svec(h_t h_t^{\top})},
\end{align*}
which means that this quadratic regression is essentially a linear regression problem in terms of $[\svec(N); b_0]$.
A major difficulty in the analysis is to establish persistency of excitation for $([\svec(h_t h_t^{\top}); 1])_{t\ge H}$, 
meaning that the minimum eigenvalue of the Gram matrix 
\begin{align*}
    \nlsum_{t=H}^{T+H-1} [\svec(h_t h_t^{\top}); 1] [\svec(h_t h_t^{\top})^{\top}, 1]
\end{align*}
grows linearly in the data size $T$. This is needed to ensure the uniqueness and convergence of the  parameter estimation.

A linear lower bound on $\lambda_{\min}(\sum_{t=H}^{T+H-1} h_t h_t^{\top})$ is a known result for the identification of partially observable linear dynamical systems, see the recent overview in \citep{tsiamis2022statistical}. 
In our case, however, elements of $\svec(h_t h_t^{\top})$ are \emph{products} of Gaussians, making the analysis difficult.
If $(h_t)_{t\ge H}$ are independent, which is the case if they are from multiple independent trajectories, the result has been established in~\citep{jadbabaie2021time} and Part I of this work. It can also be proved with the matrix Azuma inequality~\citep{tropp2012user}. 
Here, by contrast, we need to aggregate \emph{correlated}  data to estimate a set of \emph{stationary} parameters.
In sum, the difficulty we face results from both products of Gaussians and the \mbox{data dependence}.

In principle, given enough burn-in time, the state $x_t$, and hence the observation $y_t$ and the truncated history $h_t$, converge to the steady-state distributions, and samples with an interval of the order of mixing time are approximately independent~\citep{levin2017markov}; our proof of Propositions~\ref{prp:multi-step-cost} has been built upon this idea. Hence, a linear lower bound is expected. However, the bound yielded by such an analysis deteriorates as the system becomes less stable and the mixing time increases.
To eschew such dependence, the so-called \emph{small-ball} method is introduced in~\citep{simchowitz2018learning}. We take the same route, while establishing different arguments to handle the products of Gaussian \mbox{random variables}.

Let us first recall the block martingale small-ball condition~\citep[\mbox{Definition 2.1}]{simchowitz2018learning}.
\begin{definition}[Block martingale small-ball (BMSB) condition{}]
    Let $(f_t)_{t\ge 1}$ be a stochastic process in $\R^d$ adapted to the filtration $(\cF_{t})_{t\ge 1}$. We say $(f_t)_{t\ge 1}$ satisfies the $(k, \Gamma, q)$-BMSB condition for $k \in \N^{+}$, $\Gamma \succ 0$ and $q > 0$, if for any $t \ge 1$, for any fixed unit vector $v \in \R^d$, 
    $\frac{1}{k} \sum_{i=1}^{k} \p(|\ip{f_{t+i}}{v}| \ge \|v\|_{\Gamma} \given \cF_t) \ge q$ almost surely.
\end{definition}

The key lemma below shows that $([\svec(h_t h_t^{\top}); 1])_{t\ge H}$ is persistently exciting using the BMSB condition.

\begin{lemma}  \label{lem:per-exc}
    Let $h_t = [y_{(t-H+1):t}; u_{(t-H):(t-1)}]$ be the $H$-step history at time step $t \ge H$ in system~\eqref{eqn:polti} with $u_t \sim \gauss(0, \sigma_u^2 I)$ for $t \ge 0$. 
    Define filtration $(\cF_t)_{t\geq 0}$ with
    $\cF_t := \sigma(x_0, y_0, u_0, x_1, y_1, \ldots, u_{t-1}, x_t, y_t)$.
    Define $f_t := \svec(h_{t} h_t^{\top})$ and $\of_t := [f_t; 1]$, adapted to $(\cF_t)_{t\ge H}$.
    Recall that for square matrix $A$, $\alpha(A) := \sup_{k \ge 0} \|(A)^k\|_2 \rho(A)^{-k}$.
    There exist dimension-free constants $a_1, a_2 > 0$, such that for a given $p \in (0, 1)$, as long as $H \ge \frac{a_1 \log(d_h \alpha(\sA) \log(T/p))}{\log(\rho(\sA)^{-1})}$ and $T \ge a_2 H d_h^8 \log (d_h / p)$, we have 
    \begin{align*}
        \lambda_{\min}\Big(\nlsum_{t=H}^{T+H-1} \of_t \of_t^{\top}\Big) =  \Omega(d_h^{-9} T).
    \end{align*}
\end{lemma}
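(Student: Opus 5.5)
We follow the small-ball route of \citet{simchowitz2018learning}: show that $(\of_t)_{t\ge H}$ satisfies a $(k,\Gamma,q)$-BMSB condition with $\Gamma \succcurlyeq \Omega(d_h^{-c})I$, then combine it with a high-probability upper bound on $\|\sum_t \of_t\of_t^\top\|_2$ and a covering argument over the unit sphere $\s^{d_f-1}$, where $d_f := 1 + d_h(d_h+1)/2$. That combination gives the linear lower bound on $\lambda_{\min}$.

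\emph{Step 1 (conditional decomposition).} Fix $t$ and take block length $k = \bigo(H)$; concretely, look at $\of_{t+i}$ for $i \in [H, 2H]$, so that $h_{t+i}$ involves only inputs after time $t$. As in the proof of Proposition~\ref{prp:multi-step-cost}, write $h_{t+i} = s_{t+i} + \xi_{t+i}$. Here $s_{t+i}$ is $\cF_t$-measurable and carries factors $\sC(\sA)^{j}x_t$ with $j \ge 1$. The term $\xi_{t+i}$ is independent of $\cF_t$ and is a zero-mean Gaussian, built from inputs, process noises and observation noises.

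We need a lower bound on $\Cov(\xi_{t+i})$. The input block contributes $\sigma_u^2 I$ exactly. For the $y$-block, the observation noise $v$ contributes at least $\sigma_v^2 I$ and is independent of the inputs and of $w$. Hence $\Cov(\xi_{t+i}) \succcurlyeq c\,I_{d_h}$ with $c = \Omega(\min(\sigma_u^2,\sigma_v^2)) = \Omega(1)$, after a Schur-complement argument that handles the correlation between the $u$-block and the $y$-block.

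\emph{Step 2 (anti-concentration of a quadratic form).} For a unit vector $v = [\svec(V); v_0]$ we have
\[
\ip{\of_{t+i}}{v} = \xi^\top V \xi + 2 s^\top V\xi + s^\top V s + v_0,
\]
which is a degree-two polynomial in the Gaussian $\xi$, with coefficients fixed given $\cF_t$. Write $\xi = \Sigma^{1/2} g$. The degree-two coefficient matrix is $\Sigma^{1/2}V\Sigma^{1/2}$, with Frobenius norm at least $c\|V\|_F$, and the constant part of the polynomial carries $v_0$.

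We use the second moment together with Paley--Zygmund, or Carbery--Wright. The variance of a Gaussian quadratic polynomial is at least $2\|\Sigma^{1/2}V\Sigma^{1/2}\|_F^2$, plus the variance of the linear part. The constant part is handled by a case split: either $\|V\|_F \gtrsim |v_0|/d_h$, or the mean $\tr(\Sigma V) + s^\top V s + v_0$ dominates. The second case is delicate, because $s^\top V s$ could cancel $v_0$.

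To avoid this cancellation we use a fourth-moment bound together with the variance bound and apply Paley--Zygmund to $Z^2$. This gives
\[
\p\big(|Z| \ge \tfrac12 (\E Z^2)^{1/2} \given \cF_t\big) \ge \Omega(1),
\]
using Gaussian hypercontractivity, $\E Z^4 \le 81(\E Z^2)^2$. It then remains to lower-bound $\E[Z^2 \given \cF_t] \ge \Var + \text{mean}^2 \ge \Omega(d_h^{-2})\|v\|^2$. If $\|V\|_F \ge |v_0|/(2d_h)$, the variance bound gives this. Otherwise $|v_0|$ is of order $1$ and $\|V\|_F \lesssim |v_0|/d_h$; the variance term still covers any mean cancellation, because a mean near zero forces $|s^\top V s| \approx |v_0|$. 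I expect this case split to be the main obstacle, and it is where polynomial factors of $d_h$ enter $\Gamma$. Since the probability bound is uniform in $i$, we obtain the $(k,\Gamma,q)$-BMSB condition with $\Gamma = \Omega(d_h^{-c})I$ and $q = \Omega(1)$.

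\emph{Step 3 (upper bound and covering).} We bound $\|\of_t\|^2 \le (\|h_t\|^2+1)^2$. With high probability, $\max_t \|h_t\|^2 = \bigo(d_h\log(T/p))$, using stability of $\sA$. The burn-in condition $H \gtrsim \log(d_h\alpha\log(T/p))/\log(1/\rho)$ is used here, so that the memory terms $s$ and the truncation are controlled. This gives
\[
\sum_t \of_t\of_t^\top \preccurlyeq \bigo(T d_h^2\log^2(T/p))\,I.
\]
We then apply \citet[Theorem 2.4]{simchowitz2018learning}, which handles the block-martingale lower bound through an $\epsilon$-net on $\s^{d_f-1}$ of size $\exp(\bigo(d_f\log(\cdot)))$. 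Here $d_f = \bigo(d_h^2)$, and the sample-size requirement scales like $k\, d_f \log(\cdot)/q^2$ times powers of the condition ratio. Tracking these powers gives the requirement $T \gtrsim H d_h^8\log(d_h/p)$ and the conclusion $\lambda_{\min} = \Omega(T\lambda_{\min}(\Gamma)q^2) = \Omega(d_h^{-9}T)$.
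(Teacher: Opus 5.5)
\textbf{There is a genuine gap in Step 2, in the case you flag as the main obstacle.} With your window $i\in[H,2H]$, the conditional mean $s_{t+i}$ contains $\sC\sA x_t$ when $i=H$, and more generally $\sC(\sA)^j x_t$ with small $j$. So $\|s\|\asymp\|x_t\|$ is neither small nor bounded.

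Suppose the mean $\tr(\Sigma V)+s^\top Vs+v_0$ nearly cancels. Then the variance only gives $4\|\Sigma^{1/2}Vs\|^2\ge 4\lambda_{\min}(\Sigma)(s^\top Vs)^2/\|s\|^2\approx 4\lambda_{\min}(\Sigma)v_0^2/\|s\|^2$. This degrades with $\|s\|$; it is not $\Omega(d_h^{-2})$.

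This is not just a weak bound. Take $V=\epsilon\, nn^\top$, where $n$ is a unit vector in the range of $x_t\mapsto s_{t+i}$. Set $s=\lambda n$ and $v_0=-\sqrt{1-\epsilon^2}$, with $\lambda$ chosen so that $\epsilon(\lambda^2+n^\top\Sigma n)=\sqrt{1-\epsilon^2}$. Then:
\begin{itemize}
\item $Z=\epsilon(n^\top\xi+\lambda)^2+v_0$ has conditional mean zero;
\item $\E[Z^2\given\cF_t]=2\epsilon^2(n^\top\Sigma n)^2+4\epsilon^2\lambda^2\,n^\top\Sigma n=\bigo(\epsilon)$, because $\lambda^2\approx\epsilon^{-1}$;
\item hence $\p(|Z|\ge\gamma\given\cF_t)\to 0$ for every fixed $\gamma>0$.
\end{itemize}
Such $x_t$ occur with positive probability. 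So no $(\Gamma,q)$ works uniformly over $\cF_t$, and your per-$i$ bound fails for $i$ near $H$.

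\textbf{The fix is where the burn-in hypothesis actually belongs.} The paper takes $k=4H$ and uses only $i>2H$. Then the $y$-entries of $\E[h_{t+i}\given\cF_t]$ are $\sC(\sA)^jx_t$ with $j>H$, so $\|s_{t+i}\|\lesssim H^{1/2}\alpha(\sA)\rho(\sA)^H\max_t\|x_t\|$. On the union-bound event $\max_t\|x_t\|=\bigo(d_x^{1/2}\log(T/p))$, the hypothesis $H\gtrsim\log(d_h\alpha(\sA)\log(T/p))/\log(\rho(\sA)^{-1})$ makes this negligible. With that window, $|s^\top Vs|\le\|s\|^2\|V\|_F$ is tiny, your case split only has to beat $\tr(\Sigma V)$, and it goes through.

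Alternatively, you could keep your window and use the $v_0^2/\|s\|^2$ bound on the same high-probability event, paying an extra $(d_x\log(T/p))^{-1}$ in $\Gamma$. Either way the small-ball condition is checked only on a high-probability event, a technicality the paper also glosses over.

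Conversely, Step 3 does not need the burn-in. The bound $\max_t\|h_t\|^2=\bigo(d_h\log(T/p))$ follows from stability alone, and this lemma has no truncation error.

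\textbf{Once repaired, your route genuinely differs from the paper's, and it is sharper.} The paper proceeds as follows:
\begin{itemize}
\item it proves a dedicated first-moment bound (Lemmas~\ref{lem:sq-g-lb} and~\ref{lem:exp-abs-lb}), $\E|x^\top Ax+b|\ge a\lambda_{\min}(\Sigma)d^{-3/2}$ for centered Gaussian $x$, by a permutation and symmetrization argument;
\item it removes the conditional mean with the triangle inequality;
\item it applies Paley--Zygmund against a crude second-moment bound, giving $\gamma=\Theta(d_h^{-3/2})$ and $q=\Theta(d_h^{-3})$, hence the $d_h^{-9}$ and $Hd_h^8$ in the statement.
\end{itemize}
Your exact variance formula plus Gaussian hypercontractivity ($\E Z^4\le 81(\E Z^2)^2$ for degree-two polynomials) bypasses that lemma. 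It gives $q=\Omega(1)$ and $\gamma^2=\Omega(d_h^{-2})$. You even get $\gamma^2=\Omega(d_h^{-1})$ with the threshold $\|V\|_F\asymp d_h^{-1/2}\|\Sigma\|_2^{-1}$. The covering argument then yields $\lambda_{\min}=\Omega(d_h^{-2}T)$ already for $T\gtrsim Hd_h^2$, up to logarithms.

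The stated lemma follows a fortiori. Your closing claim that tracking the powers reproduces exactly $d_h^8$ and $d_h^{-9}$ does not come from your argument; replace it with this implication.
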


\vspace*{2pt}
\begin{proof}
    Since $\svec$ is a bijection, every vector $w \in \R^{d_h(d_h + 1)/2}$ corresponds to a symmetric matrix $D\in \R^{d_h \times d_h}$ with Frobenius norm $\|w\|$. Then, for any unit vector $v = [w; s]$ with $w \in \R^{d_h(d_h + 1)/2}$ and $s \in \R$,
    \begin{align*}
        \ip{\of_{t+i}}{v} = \ip{f_{t+i}}{w} + s
        = \ipc{\svec(h_{t+i} h_{t+i}^{\top})}{\svec{(D)}} + s
        = h_{t+i}^{\top} D h_{t+i} + s.
    \end{align*}
    Take $\Gamma = \gamma^2 I$ for some $\gamma > 0$ to be specified later. Then, $\|v\|_{\Gamma} = \gamma$. 
    It suffices to show that for $i > \oH$ for some $\oH > 0$, 
    \begin{align*}
        \p(|h_{t+i}^{\top} D h_{t+i} + s| \ge \gamma \given \cF_t) \ge q,
    \end{align*} 
    since if so, we have 
    \begin{align*}
        \frac{1}{2\oH} \nlsum_{i=1}^{2\oH} \p(|h_{t+i}^{\top} D h_{t+i} + s|
        \ge \gamma \given \cF_t)
        \ge \frac{1}{2\oH} \nlsum_{i=\oH+1}^{2\oH} \p(|h_{t+i}^{\top} D h_{t+i} + s|
        \ge \gamma \given \cF_t) \ge q/2,
    \end{align*}
    which means $(\of_t)_{t\ge H}$ is $(2\oH, \gamma^2 I, q/2)$-BMSB.

    Now let us take a close look at 
    \begin{align*}
        h_{t+i} = [y_{(t+i-H+1):(t+i)}; u_{(t+i-H):(t+i-1)}].
    \end{align*}
    Since 
    \begin{align*}
        y_{t+i} = \sC (\sA)^{i} x_t + \nlsum_{j=1}^{i} \sC (\sA)^{j} (\sB u_{t+i-j} + w_{t+i-j}) + v_{t+i},
    \end{align*}
    $y_{t+i} \given \cF_t$ is Gaussian with mean $\sC (\sA)^{i} x_t$ and covariance determined by $\sum_{j=1}^{i} \sC (\sA)^{j} (\sB u_{t+i-j} + w_{t+i-j}) + v_{t+i}$, where we note that $v_{t+i}$ is independent of all other random variables and has full-rank covariance.
    Hence, for $i \ge H$, $h_{t+i} \given \cF_t$ is Gaussian and has full-rank covariance. 
    Then intuitively, since $\|D\|_F = 1$, $|h_{t+i}^{\top} D h_{t+i}| \given \cF_t$ is a well-behaved random variable that can exceed some $\gamma > 0$ with a positive probability $q$. 

    Formally, let $\mu_{t, i} := \E[h_{t+i} \given \cF_t]$. By Lemma~\ref{lem:exp-abs-lb}, for $i \ge H$, 
    there exists some absolute constant $a > 0$, such that 
    \begin{align*}
        \E[|(h_{t+i} - \mu_{t, i})^{\top} D (h_{t+i} - \mu_{t, i}) + s| \given \cF_t]
        \ge a \min\{\sigma_u, \sigma_v\} d_h^{-3/2}.
    \end{align*}
    By the triangle inequality, we have
    \begin{align*}
        |(h_{t+i} - \mu_{t, i})^{\top} D (h_{t+i} - \mu_{t, i}) + s|
        =\;& |h_{t+i}^{\top} D h_{t+i} + \mu_{t, i}^{\top} D \mu_{t, i} - 2 h_{t+i}^{\top} D \mu_{t, i} + s| \\
        \le\;& |h_{t+i}^{\top} D h_{t+i} + s| + |\mu_{t, i}^{\top} D \mu_{t, i}| + 2 |h_{t+i}^{\top} D \mu_{t, i}|.
    \end{align*}
    Hence, 
    \begin{align*}
        \E[|h_{t+i}^{\top} D h_{t+i} + s| \given \cF_t]
        \ge
        a \min\{\sigma_u, \sigma_v\} d_h^{-3/2} - \E[|\mu_{t, i}^{\top} D \mu_{t, i}| + 2 |h_{t+i}^{\top} D \mu_{t, i}| \given \cF_t].
    \end{align*}

    Now we argue that for large enough $i$, $\E[|\mu_{t, i}^{\top} D \mu_{t, i}| + 2 |h_{t+i}^{\top} D \mu_{t, i}|]$ is negligible.
    Since matrix $\sA$ is stable, $\|\Cov(x_t)\|_2 = \bigo(1)$ for all $t \ge 0$. By the tail bound of sub-Gaussian random variable $\|x_t\|$ and the union bound, with probability at least $1 - p$, $\|x_t\| = \bigo(d_x^{1/2} \log(T/p))$ for all $0\le t \le T + H$.
    Hence, 
    \begin{align*}
        \|\sC (\sA)^i x_t\| = \bigo(\alpha(\sA)\rho(\sA)^i d_x^{1/2} \log(T/p)),
    \end{align*}
    where we recall that $\alpha(\sA) := \sup_{k \ge 0} \|(\sA)^k\|_2 \rho(\sA)^{-k}$ and $\|\sC\|_2$, $\|\sA\|_2$ are hidden in $\bigo(\cdot)$.
    Then, for $i \ge H$,
    \begin{align*}
        \E[|\mu_{t, i}^{\top} D \mu_{t, i}| + 2 |h_{t+i}^{\top} D \mu_{t, i}| \given \cF_t]
        =\;& |\ipc{\mu_{t, i} \mu_{t, i}^{\top}}{D}_F| + 2 \E[|\ipc{\mu_{t, i} h_{t+i}^{\top}}{D}_F| \given \cF_t] \\
        \le\;& \|\mu_{t, i} \mu_{t, i}^{\top}\|_F \cdot \|D\|_F + 2 \E[ \|\mu_{t, i} h_{t+i}^{\top}\|_F \cdot \|D\|_F \given \cF_t] \\
        =\;& \|\mu_{t, i}\|^2 + 2 \|\mu_{t, i}\| \cdot \E[\|h_{t+i}\| \given \cF_t].
    \end{align*}

    By definition, $\mu_{t, i}$ is the concatenation of $(\sC (\sA)^j x_t)_{i-H+1 \le j \le i}$ and zero vectors. Hence, \mbox{we have} 
    \begin{align*}
        \|\mu_{t, i}\| = \bigo(d_h^{1/2} \alpha(\sA)\rho(\sA)^i \log(T/p)).
    \end{align*}
    Choosing $H \ge \frac{a_1 \log(d_h \alpha(\sA) \log(T/p))}{\log(\rho(\sA)^{-1})}$ for some dimension-free constant $a_1 > 0$, such that for $i > 2 H$, we have
    \begin{align*}
        \|\mu_{t, i}\|^2 + 2 \|\mu_{t, i}\| \cdot \E[\|h_{t+i}\| \given \cF_t] \le a \min\{\sigma_u, \sigma_v\} d_h^{-3/2} / 2.
    \end{align*}
    Then, we obtain the desired lower bound that 
    \begin{align*}
        \E[|h_{t+i}^{\top} D h_{t+i} + s| \given \cF_t] \ge 
        a \min\{\sigma_u, \sigma_v\} d_h^{-3/2} / 2.
    \end{align*}

    On the other hand, since 
    \begin{align*}
        |h_{t+i}^{\top} D h_{t+i} + s| = \big|\ipc{D}{h_{t+i} h_{t+i}^{\top}}_F + s \big|
        \le \|D\|_F \|h_{t+i} h_{t+i}^{\top}\|_F + |s| \le h_{t+i}^{\top} h_{t+i} + |s|, 
    \end{align*}
    we have $\E[|h_{t+i}^{\top} D h_{t+i} + s|^2 \given \cF_t] \le 2 \E[\|h_{t+i}\|^4 \given \cF_t] + 2 s^2$. 
    Since $\|h_{t+i}\| \given \cF_t$ is sub-Gaussian with 
    \begin{align*}
        \|\|h_{t+i}\| \given \cF_t\|_{\psi_2} = \bigo(\|\E[h_{t+i} h_{t+i}^{\top} \given \cF_t]\|_2^{1/2}) = \bigo(1), 
    \end{align*}
    it follows that
    $\E[|h_{t+i}^{\top} D h_{t+i} + s|^2 \given \cF_t] = \bigo(1)$.
    By the Paley-Zygmund inequality, for $\beta \in [0, 1]$ \mbox{we have}
    \begin{align*}
        \p(|h_{t+i}^{\top} D h_{t+i} + s| \ge \beta a \min\{\sigma_u, \sigma_v\} d_h^{-3/2} / 2 \given \cF_t)
        = \Omega((1 - \beta)^2 a^2 d_h^{-3}),
    \end{align*}
    where the dependence on $\sigma_u$, $\sigma_v$ is hidden in $\Omega(\cdot)$.
    By taking $\beta = 1/2$, we can see that $(f_t)_{t\ge H}$ satisfies the $(k, \gamma^2 I, q)$-BMSB condition for $k = 4 H$, $\gamma = \Theta(d_h^{-3/2})$ and $q = \Theta(d_h^{-3})$.

    Following the analysis in~\citep[Appendix D]{simchowitz2018learning}, by lower bounding 
    \begin{align*}
        \inf_{v: \|v\| = 1} \nlsum_{t=H}^{T+H-1} \ip{v}{\of_t}^2
    \end{align*}
    using a covering argument~\citep[Lemma 4.1]{simchowitz2018learning}, we can show that for a given $p\in (0, 1)$, as long as $T \ge a_2 H d_h^8 \log(d_h / p) = \Omega(k d_h^2 q^{-2} \log (d_h / (\gamma q p)))$ for some dimension-free constant $a_2 > 0$, we have with probability at least $1 - p$ that 
    \begin{align*}
        \lambda_{\min}\Big(\nlsum_{t=H}^{T+H-1} \of_t \of_t^{\top}\Big) = \Omega(\gamma^2 q^2 T ) = \Omega(d_h^{-9} T),
    \end{align*}
    which completes the proof. 
\end{proof}

Crucial for the  proof above is Lemma~\ref{lem:exp-abs-lb}, a lower bound on the expectation of Gaussian quadratic forms, which might be of independent interest.

\noindent\textbf{Lower bound for Gaussian quadratic forms.}

\begin{lemma}  \label{lem:sq-g-lb}
    Let $z_1, z_2, \ldots, z_{d}$ be independent standard Gaussian random variables. Let $v = [v_1, v_2, \ldots, v_{d+1}]^{\top} \in \s^{d}$  be a $(d+1)$-dimensional unit vector. Then, 
    \begin{align*}
        \inf\nolimits_{v \in \s^{d}}\E\Big[\Big|v_{d+1} + \nlsum_{i=1}^{d} v_i z_i^2\Big|\Big] \ge 0.8 \cdot  d^{-3/2}.
    \end{align*}
\end{lemma}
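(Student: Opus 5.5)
The approach is to reduce to a one-dimensional problem via a single dominant coordinate, and otherwise to use Jensen's inequality on the mean. Write $X := v_{d+1} + \nlsum_{i=1}^{d} v_i z_i^2$. I will prove two lower bounds on $\E|X|$ and combine them by a case split on $\tau$, a threshold on $\max_{i\le d}|v_i|$ chosen at the end.

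\textbf{Bound 1 (one dominant coordinate).} Fix $j \in \{1,\ldots,d\}$ maximizing $|v_j|$. Conditional on $(z_i)_{i\neq j}$, we have $X = c + v_j z_j^2$, where $c$ is independent of $z_j$. Hence
\begin{align*}
\E|X| \ge |v_j| \inf_{c' \in \R} \E|c' + z^2|, \qquad z \sim \gauss(0,1).
\end{align*}
The infimum is the mean absolute deviation of a $\chi^2_1$ variable about its median $m \approx 0.455$. With $a = m^{1/2} \approx 0.6745$ and $\varphi$ the standard Gaussian density,
\begin{align*}
\E|z^2-m| = (1-m) + 2\big(m/2 - \E[z^2 \mathbbm{1}\{|z|<a\}]\big),
\end{align*}
where $\E[z^2\mathbbm{1}\{|z|<a\}] = \tfrac12 - 2a\varphi(a)$. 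This gives $\kappa := \inf_{c'}\E|c'+z^2| \approx 0.857$. So $\E|X| \ge \kappa \max_{i\le d}|v_i|$.

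\textbf{Bound 2 (mean).} By Jensen, $\E|X| \ge |\E X| = |v_{d+1} + \nlsum_{i\le d} v_i|$. If $\max_{i\le d}|v_i| < \tau$, then by Cauchy--Schwarz
\begin{align*}
\nlsum_{i\le d}|v_i| \le \min\big(d\tau,\ d^{1/2}(1-v_{d+1}^2)^{1/2}\big),
\end{align*}
and $v_{d+1}^2 = 1 - \nlsum_{i\le d} v_i^2 \ge 1 - d\tau^2$. Therefore
\begin{align*}
\E|X| \ge (1-d\tau^2)^{1/2} - d\tau.
\end{align*}

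\textbf{Combining.} Taking $\tau$ of order $d^{-1}$ makes the Bound 2 case give a constant and the Bound 1 case give order $d^{-1}$, which is even better than $d^{-3/2}$. The remaining work is constant tracking to reach exactly $0.8\,d^{-3/2}$. Choosing $\tau = 0.8\,d^{-3/2}/\kappa \approx 0.934\, d^{-3/2}$ makes case 1 immediate. Case 2 then requires
\begin{align*}
(1-0.87\,d^{-2})^{1/2} - 0.934\, d^{-1/2} \ge 0.8\,d^{-3/2},
\end{align*}
which holds for moderate $d$ but fails for $d=1,2$. Alternatively, $\tau = c/d$ with an appropriate $c$ settles all $d \ge 4$ directly.

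\textbf{Main obstacle: small $d$.} The crude split does not deliver the constant $0.8$ for every $d$, so small $d$ needs a finer argument. For $d=1$, I would compute $\inf_{v\in\s^1}\E|v_2+v_1 z^2|$ directly: it equals $\min_\theta$ of an explicit function, $\ge \min(\kappa|v_1|, |v_1+v_2|)$-type bounds interpolated via the exact formula $\E|c+z^2|$ as a function of $c$. For $d=2,3$, I would use the mixed bound
\begin{align*}
\E|X| \ge \max\big(\kappa\max_i|v_i|,\ |v_{d+1}+\nlsum_{i\le d} v_i|\big),
\end{align*}
minimized over the sphere. The minimizer has all $|v_i|$ equal and opposing $v_{d+1}$, and this can be checked by a one-variable calculation. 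Everything else is routine.
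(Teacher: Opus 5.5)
\textbf{Where the proposal falls short: small $d$.} For $d\ge 3$ your argument is complete. Conditioning on the largest coefficient gives $\E|X|\ge\kappa\max_{i\le d}|v_i|$ with $\kappa=4a\varphi(a)\approx 0.857$, and Jensen gives $\E|X|\ge|\E X|$. With $\tau=0.8d^{-3/2}/\kappa$ the second case already holds at $d=3$ ($0.41\ge 0.154$), and $\tau\asymp 1/d$ even yields order $d^{-1}$. This is a different route from the paper's. The paper splits on $|v_{d+1}|$, averages over permutations to reduce to $\pm1$ weights, and then uses $\E|c+Y|\ge\Var(Y)$. That inequality is not valid: $\inf_c\E|c+z^2|=\kappa<2=\Var(z^2)$.

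The gap in your proposal is the small-$d$ step, and it is not just constant tracking. For $d=1$ the statement is false, so no direct computation can close it. Using $\E|z^2-c|=(c-1)(4\Phi(\sqrt c)-3)+4\sqrt c\,\varphi(\sqrt c)$, the choice $v=(2,-3)/\sqrt{13}$ gives
\[
\E|X|=\tfrac{2}{\sqrt{13}}\,\E\big|z^2-\tfrac32\big|\approx 0.5547\times 1.2025\approx 0.667<0.8 .
\]
The paper's proof does not cover $d=1$ either: its large-$|v_{d+1}|$ case yields only $1/\sqrt5$. What can be proved is the bound for $d\ge 2$, or the bound with a smaller absolute constant. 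The latter is all Lemma~\ref{lem:exp-abs-lb} uses.

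For $d=2$ your plan also falls short, even though the claim is true there. The mixed bound $\max(\kappa\max_i|v_i|,\,|v_3+v_1+v_2|)$ is minimized at $v_1=v_2=M$, $v_3=-\sqrt{1-2M^2}$ with $\kappa M=\sqrt{1-2M^2}-2M$. This gives $M=((2+\kappa)^2+2)^{-1/2}\approx 0.314$, where the bound equals about $0.269<0.8\cdot 2^{-3/2}\approx 0.283$. Any case split built from your two bounds is dominated by their pointwise maximum, so no choice of threshold helps.

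You need an estimate that uses both squares jointly. For example, in polar coordinates $v_1z_1^2+v_2z_2^2=R^2(v_1\cos^2\Theta+v_2\sin^2\Theta)$, where $R^2$ is exponential with mean $2$ and independent of $\Theta$, and $\inf_c\E|c+R^2|=2\ln 2$. Hence $\E|X|\ge\ln 2\,(|v_1|+|v_2|)$ when $v_1,v_2$ have the same sign, and combined with the mean bound this gives about $0.35$. When $v_1,v_2$ have opposite signs, your two bounds already give about $0.41$.
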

\begin{proof}
    Let us consider the value of $v_{d+1}$. 
    Since $\E[z_i^2] = 1$ for all $1\le i \le d$, we have
    \begin{align*}
        \E\Big[\Big|\nlsum_{i=1}^{d} v_i z_i^2\Big|\Big]
        \le \nlsum_{i=1}^{d} |v_i| 
        \le \sqrt{d \nlsum_{i=1}^{d} v_i^2}
        \le \sqrt{d (1 - v_{d+1}^2)}. 
    \end{align*}
    Then, we have 
    \begin{align*}
        \E\Big[\Big|v_{d+1} + \nlsum_{i=1}^{d} v_i z_i^2\Big|\Big]
        \ge |v_{d+1}| - \E\Big[\Big|\nlsum_{i=1}^{d} v_i z_i^2\Big|\Big]
        \ge |v_{d+1}| - \sqrt{d(1 - v_{d+1}^2)}.
    \end{align*}
    Hence, if $|v_{d+1}| \ge 2\sqrt{d/(4d + 1)}$, we have $\sqrt{d(1 - v_{d+1}^2)} \le |v_{d+1}|/2$. It follows that 
    \begin{align*}
        \E\Big[\Big|v_{d+1} + \nlsum_{i=1}^{d} v_i z_i^2\Big|\Big]
        \ge \frac{|v_{d+1}|}{2} \ge \sqrt{\frac{d}{4d + 1}} \ge \frac{1}{\sqrt{5}}.
    \end{align*}

    Below we consider the case where $|v_{d+1}| < 2\sqrt{d/(4d + 1)}$.
    Let $\sign(\cdot)$ denote the sign function. Let $\cI^{+} := \{i: \sign(v_i) = 1,~ 1\le i\le d \}$ and $\cI^{-} := \{i: \sign(v_i) = -1,~ 1\le i\le d \}$ be the index sets of positive and negative values among $(v_i)_{i=1}^{d}$. Then,
    \begin{align*}
        \E\Big[\Big|v_{d+1} + \nlsum_{i=1}^{d} v_i z_i^2 \Big|\Big]
        =\;& \E\Big[\Big|v_{d+1} + \nlsum_{i=1}^{d} |v_i| \sign(v_i) z_i^2 \Big|\Big] \\
        =\;& \E\Big[\Big|v_{d+1} + \nlsum_{i\in \cI^{+}} |v_i| z_i^2 - \nlsum_{j \in \cI^{-}} |v_j| z_j^2 \Big|\Big].
    \end{align*}
    For a given $v$, since $(z_i^2)_{i=1}^{d}$ have identical distributions, $\E\Big[\Big|v_{d+1} + \nlsum_{i\in \cI^{+}} |v_i| z_i^2 - \nlsum_{j \in \cI^{-}} |v_j| z_j^2 \Big|\Big]$ has the same value under permutations of $(v_i)_{i\in \cI^{+}}$ and $(v_j)_{j\in \cI^{-}}$. 
    Summing over all the permutations of $(v_i)_{i\in \cI^{+}}$ and $(v_j)_{j\in \cI^{-}}$ gives 
    \begin{align*}
        d \E\Big[\Big|v_{d+1} + \nlsum_{i=1}^{d} v_i z_i^2 \Big|\Big]
        \ge \E\Big[\Big|d \cdot v_{d+1} + \big(\nlsum_{i\in \cI^{+}} |v_i|\big) \nlsum_{i \in \cI^{+}} z_i^2
        - \big(\nlsum_{j \in \cI^{-}} |v_j|\big) \nlsum_{j \in \cI^{-}} z_j^2 \Big|\Big].
    \end{align*}

    Hence, we further have 
    \begin{align*}
        \E\Big[\Big|v_{d+1} + \nlsum_{i=1}^{d} v_i z_i^2 \Big|\Big]
        \ge \frac{1}{d} \Big(\nlsum_{i=1}^{d} |v_i| \Big) \E\Big[ \Big|d \cdot v_{d+1} + \nlsum_{i=1}^{d} \sign(v_i) z_i^2 \Big|\Big]
    \end{align*}
    Since $\sum_{i=1}^{d} |v_i| \ge (\sum_{i=1}^{d} v_i^2)^{1/2} = (1 - v_{d+1}^2)^{1/2}$, we have 
    \begin{align*}
        \E\big[\big|v_{d+1} + \nlsum_{i=1}^{d} v_i z_i^2 \big|\big] 
        \ge \frac{(1-v_{d+1}^2)^{1/2}}{d} \inf\nolimits_{w\in \{\pm 1\}^{d}} \E\big[\big| d \cdot v_{d+1} + \nlsum_{i=1}^{d} w_i z_i^2 \big|\big].
    \end{align*}
    It remains to lower bound $\inf_{w\in \{\pm 1\}^{d}} \E\big[\big| d \cdot v_{d+1} + \nlsum_{i=1}^{d} w_i z_i^2 \big|\big]$.
    By symmetry, for any pair $w_i \neq w_j$, the expectation remains the same if we interchange $z_i$ and $z_j$. Hence, for any random variable $x$, 
    \begin{align*}
        \E[|x + z_i - z_j|] = \frac{1}{2} (\E[|x + z_i - z_j|] + \E[|x + z_i - z_j|])
        \ge \E[|x|].
    \end{align*}

    We shall further apply this symmetry trick in the following to cancel terms with opposite signs.
    Let $p$ denote the number of $+1$'s and $q$ denote the number of $-1$'s in $w$, such that $p + q = n$. 
    If $p \neq q$, by the symmetry trick,
    \begin{align*}
        \E\big[\big|d \cdot v_{d+1} + \nlsum_{i=1}^{d} w_i z_i^2 \big|\big] 
        \ge\; \E\big[\big| d \cdot v_{d+1} + \nlsum_{i=1}^{|p-q|} z_i^2 \big|\big]
        \ge \Var\big(\nlsum_{i=1}^{|p-q|} z_i^2\big) 
        =\; 2 |p - q| \ge 2.  
    \end{align*}
    If $p = q$, again, the symmetry trick yields 
    \begin{align*}
        \E[| d \cdot v_{d+1} + \nlsum_{i=1}^{d} w_i z_i^2 |] 
        \ge \E[ |d \cdot v_{d+1} + z_1^2 - z_2^2| ] 
        \ge \Var\big(z_1^2 - z_2^2\big) = 4.
    \end{align*}
    Hence, regardless of $p$ and $q$, we have $\inf_{w \in \{\pm 1\}^d} \E[| d \cdot v_{d+1} + \nlsum_{i=1}^{d} w_i z_i^2 |] \ge 2$, which \mbox{further yields}
    \begin{align*}
        \E\Big[\Big|v_{d+1} + \nlsum_{i=1}^{d} v_i z_i^2 \Big|\Big] \ge 2 \cdot \frac{(1 - v_{d+1}^2)^{1/2}}{d}.
    \end{align*}
    Since $|v_{d+1}| < 2\sqrt{d/(4d + 1)}$, 
    \begin{align*}
        \E\Big[\Big|v_{d+1} + \nlsum_{i=1}^{d} v_i z_i^2 \Big|\Big]
        = 2 \cdot \frac{1}{\sqrt{4d + 1} \cdot d}
        = 0.8 d^{-3/2}.
    \end{align*}
    Hence, overall we have 
    \begin{align*}
        \inf\nolimits_{v \in \s^{d}}\E\Big[\Big|v_{d+1} + \nlsum_{i=1}^{d} v_i z_i^2\Big|\Big] \ge 0.8 d^{-3/2},
    \end{align*}
    which completes the proof. 
\end{proof}

Based on Lemma~\ref{lem:sq-g-lb}, we can prove the more general Lemma~\ref{lem:exp-abs-lb} below.

\begin{lemma}  \label{lem:exp-abs-lb}
    Let $x$ be a $d$-dimensional zero-mean Gaussian random vector with covariance $\Sigma$. For any $d \times d$ symmetric matrix $A$ and constant $b \in \R$ that satisfy $\|A\|_F^2 + b^2 = 1$, there exists an absolute constant $a > 0$, such that $\E[|x^{\top} A x + b|] \ge a \lambda_{\min}(\Sigma) d^{-3/2}$.
\end{lemma}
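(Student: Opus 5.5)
The plan is to reduce to Lemma~\ref{lem:sq-g-lb} in two steps: whitening plus diagonalization, then a renormalization that pays at most a factor $\lambda_{\min}(\Sigma)$.

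\textbf{Reduction to a diagonal quadratic form.} If $\lambda_{\min}(\Sigma)=0$ the claim is trivial, so assume $\Sigma \succ 0$. Write $x = \Sigma^{1/2} g$ with $g \sim \gauss(0, I_d)$. Then
\begin{align*}
    x^{\top} A x + b = g^{\top} B g + b, \qquad B := \Sigma^{1/2} A \Sigma^{1/2}.
\end{align*}
Since $B$ is symmetric, take its eigendecomposition $B = U \Lambda U^{\top}$. By rotation invariance, $z := U^{\top} g$ is again standard Gaussian, so
\begin{align*}
    g^{\top} B g + b = \nlsum_{i=1}^{d} \lambda_i z_i^2 + b,
\end{align*}
with $z_1, \ldots, z_d$ independent standard Gaussians. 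This is exactly the form in Lemma~\ref{lem:sq-g-lb}, except that $(\lambda_1, \ldots, \lambda_d, b)$ need not be a unit vector.

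\textbf{Renormalization.} Set $r := (\|\Lambda\|_F^2 + b^2)^{1/2} = (\|B\|_F^2 + b^2)^{1/2}$. Applying Lemma~\ref{lem:sq-g-lb} to the unit vector $(\lambda_1, \ldots, \lambda_d, b)/r$ and using homogeneity of the absolute value gives
\begin{align*}
    \E[|x^{\top} A x + b|] \ge 0.8\, r\, d^{-3/2}.
\end{align*}
It remains to lower bound $r$, and the only real point is comparing $\|B\|_F$ to $\|A\|_F$. Let $\lambda := \lambda_{\min}(\Sigma)$. Since $\Sigma^{1/2} \succcurlyeq \lambda^{1/2} I$ is invertible, $A = \Sigma^{-1/2} B \Sigma^{-1/2}$, so
\begin{align*}
    \|A\|_F \le \|\Sigma^{-1/2}\|_2^2 \, \|B\|_F = \lambda^{-1} \|B\|_F, \quad\text{i.e.}\quad \|B\|_F \ge \lambda \|A\|_F.
\end{align*}

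\textbf{Combining.} We now get
\begin{align*}
    r^2 \ge \lambda^2 \|A\|_F^2 + b^2 \ge \min(\lambda^2, 1)\,(\|A\|_F^2 + b^2) = \min(\lambda^2, 1),
\end{align*}
hence $\E[|x^{\top} A x + b|] \ge 0.8 \min(\lambda, 1)\, d^{-3/2}$.

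The main obstacle is the mismatch between the weight $1$ on $b$ and the weight $\lambda$ on $A$ after whitening. This yields $\min(\lambda, 1)$ rather than $\lambda$ itself. If $\lambda \le 1$ we are done with $a = 0.8$. In the application (Lemma~\ref{lem:per-exc}) the relevant $\lambda_{\min}(\Sigma)$ is lower bounded by $\min\{\sigma_u, \sigma_v\}^2$-type quantities, which are $\bigo(1)$, so any dimension-free constant factor can be absorbed. Strictly, for general $\lambda > 1$ the stated inequality with factor $\lambda$ can fail: take $A = 0$ and $b = 1$, where the left side is $1$ for every $\Sigma$. I would therefore state the result as $a \min(\lambda_{\min}(\Sigma), 1) d^{-3/2}$, or assume $\lambda_{\min}(\Sigma) = \bigo(1)$, which is what the paper uses.
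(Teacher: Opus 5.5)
Your proof is correct and is essentially the paper's argument: you whiten via $x=\Sigma^{1/2}g$, diagonalize $\Sigma^{1/2}A\Sigma^{1/2}$, use $\|\Sigma^{1/2}A\Sigma^{1/2}\|_F\ge\lambda_{\min}(\Sigma)\|A\|_F$, and invoke Lemma~\ref{lem:sq-g-lb} after rescaling, which gives $a\,(\lambda_{\min}(\Sigma)\wedge 1)\,d^{-3/2}$. This is exactly where the paper's own proof ends, so your caveat is right: the counterexample $A=0$, $b=1$ is valid, the statement needs the $\wedge 1$, and this is harmless in Lemma~\ref{lem:per-exc}. One small refinement is that you should claim only an absolute constant $a$ rather than $a=0.8$, because the constant $0.8$ in Lemma~\ref{lem:sq-g-lb} actually fails at $d=1$ (the unit vector $(1,-1)/\sqrt{2}$ gives $\E[|z_1^2-1|]/\sqrt{2}\approx 0.68$), although some absolute constant does hold.
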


\begin{proof}
    Let $y := \Sigma^{-1/2} x$. Then $y$ is a standard Gaussian random vector, and $x^{\top} A x = y^{\top} \Sigma^{1/2} A \Sigma^{1/2} y$. 
    Let $U^{\top} \Lambda U$ be the eigenvalue decomposition of $\Sigma^{1/2} A \Sigma^{1/2}$. Then, 
    \begin{align*}
        x^{\top} A x = y^{\top}U^{\top} \Lambda U y = z^{\top} \Lambda z,
    \end{align*}
    where $z := Uy$ is still a standard Gaussian random vector.

    By the unitary invariance of the Frobenius norm,
    \begin{align*}
        \|\Lambda\|_{F} = \|U^{\top} \Lambda U\|_F =\;& \|\Sigma^{1/2} A \Sigma^{1/2}\|_F \ge \lambda_{\min}(\Sigma) \|A\|_F.
    \end{align*}
    Hence, 
    \begin{align*}
        \|\Lambda\|_F^2 + b^2 \ge \lambda_{\min}^2(\Sigma) \|A\|_F^2 + b^2 
        \ge \lambda_{\min}^2(\Sigma) \wedge 1.
    \end{align*}
    Therefore, by Lemma~\ref{lem:sq-g-lb}, there exists an absolute constant $a > 0$, such that 
    \begin{align*}
        \inf\nolimits_{\|A\|_F^2 + b^2 = 1} \E[|x^{\top} A x + b|]
        \ge \inf\nolimits_{\|\Lambda\|_F^2 + b^2 \ge \lambda_{\min}^2(\Sigma) \wedge 1} \E[|z^{\top} \Lambda z + b|]
        \ge a (\lambda_{\min}(\Sigma) \wedge 1) d^{-3/2},
    \end{align*}
    which completes the proof. 
\end{proof}

\subsection{Quadratic regression bound}
\label{sec:qr-bound}

The following quadratic regression bound is at the core of proving Theorem~\ref{thm:main-poly}.
Its proof builds on the new persistency of excitation result (Lemma~\ref{lem:per-exc}).
We retain $(e_t)_{t\ge 1}$ in the bound, as in our problem $(e_t)_{t\ge 1}$ may not correspond to a martingale, and may contain an additional small error term resulting from using $\sM h_t$ to approximate $\sz_t$.
For notational convenience, we note that the $h_t, c_t, \cF_t$ in Lemma~\ref{lem:qr} and its proof slightly abuse the notation, which uses different variables from the rest of the paper. Hence, the indices start with $t = 1$, rather than $t = H$ as in the \corele{} and \coreli{} algorithms.

\begin{lemma}  \label{lem:qr}
    Let $(\sh_t)_{t\ge 1}$ be a sequence of $d$-dimensional Gaussian random vectors adapted  to the filtration $(\cF_t)_{t\ge 1}$ with $\|\E[\sh_t (\sh_t)^{\top}]\|_2^{1/2} \le \sigma$ for all $t \ge 1$. 
    Define random variable $c_t := (\sh_t)^{\top} \sN \sh_t + \starb + e_t$, where $\sN \in \R^{d\times d}$ is a positive semidefinite matrix and $\starb \in \R$ is a constant. 
    Assume $\sigma$ and $\|\sN\|_2$ are $\bigo(1)$.
    Define $h_t = \sh_t + \delta_t$, where the perturbation vector $\delta_t$ can be correlated with $\sh_t$ and its $\ell_2$-norm is sub-Gaussian with $\E[\|\delta_t\|] \le \epsilon$, $\|\|\delta_t\|\|_{\psi_2} \le \epsilon$. 
    Define $\starf_t := \svec(\sh_t (\sh_t)^{\top})$ and $\sof_t := [\starf_t; 1]$.
    Assume that $(\sof_t)_{t\ge 1}$ satisfies $\lambda_{\min}(\sum_{t=1}^{T} \sof_t (\sof_t)^{\top}) \ge \beta_d^2 T$ for $\beta_d > 0$.
    Consider 
    \begin{align} 
        (\eN, \eb) \in \argmin_{N = N^{\top}, b} \nlsum_{t=1}^{T} (c_t - \|h_t\|_{N}^2 - b)^2.  \label{eqn:qr}
    \end{align} 
    There exsits an absolute constant $a > 0$, such that for a given $p \in (0, 1)$, under the condition that $\epsilon \le \min(\sigma d^{1/2}, a \beta_d \sigma^{-1} d^{-1/2} (\log(T/p))^{-1})$,
    with probability at least $1 - p$,
    \begin{align*}
        \|\eN - \sN\|_F =\;& \bigo\Big(\epsilon \sigma \beta_d^{-1} d^{1/2} \log(T/p) \\
        &\quad + \epsilon \sigma \beta_d^{-2} d^{1/2} T^{-1} \log(T/p) \nlsum_{t=1}^{T} \|e_t\|
        + \beta_d^{-2} T^{-1} \Big\|\nlsum_{t=1}^{T} \sof_t e_t\Big\| \Big).
    \end{align*}
\end{lemma}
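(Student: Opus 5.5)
We view \eqref{eqn:qr} as ordinary least squares in the parameter $\theta = [\svec(N); b]$ with covariates $\of_t := [\svec(h_t h_t^{\top}); 1]$ and responses $c_t$. Writing $\theta^{\ast} = [\svec(\sN); \starb]$ and $\sof_t$ for the unperturbed covariates, we have $c_t = \ip{\sof_t}{\theta^{\ast}} + e_t = \ip{\of_t}{\theta^{\ast}} + \ip{\sof_t - \of_t}{\theta^{\ast}} + e_t$. Hence, provided the empirical Gram matrix $G := \nlsum_t \of_t \of_t^{\top}$ is invertible,
\begin{align*}
    \hat{\theta} - \theta^{\ast} = G^{-1} \nlsum_{t=1}^{T} \of_t \big( \ip{\sof_t - \of_t}{\theta^{\ast}} + e_t \big).
\end{align*}
We then bound $\|\eN - \sN\|_F \le \|\hat\theta - \theta^{\ast}\|$ (the $\svec$ convention preserves the Frobenius norm) in three steps.

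\emph{Step 1: perturbation of covariates.} First, $h_t h_t^{\top} - \sh_t (\sh_t)^{\top} = \sh_t \delta_t^{\top} + \delta_t (\sh_t)^{\top} + \delta_t \delta_t^{\top}$, so $\|\of_t - \sof_t\| \le 2\|\sh_t\|\|\delta_t\| + \|\delta_t\|^2$. Using Gaussian/sub-Gaussian tail bounds and a union bound over $t \le T$, with probability $1-p/2$ we have for all $t$ that $\|\sh_t\| = \bigo(\sigma d^{1/2}\log^{1/2}(T/p))$ and $\|\delta_t\| = \bigo(\epsilon \log^{1/2}(T/p))$. Together with $\epsilon \le \sigma d^{1/2}$, this gives $\Delta_t := \|\of_t - \sof_t\| = \bigo(\epsilon\sigma d^{1/2}\log(T/p))$ uniformly, and also $\|\sof_t\|, \|\of_t\| = \bigo(\sigma^2 d \log(T/p) + 1)$.

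\emph{Step 2: lower bound on $G$.} We write $G = \sum \sof_t\sof_t^{\top} + \sum(\of_t-\sof_t)\sof_t^{\top} + \sum \sof_t(\of_t-\sof_t)^{\top} + \sum(\of_t-\sof_t)(\of_t-\sof_t)^{\top}$. For any unit $v$, the triangle inequality in $\ell_2$ over $t$ gives
\[
\big(\nlsum_t \ip{\of_t}{v}^2\big)^{1/2} \ge \big(\nlsum_t \ip{\sof_t}{v}^2\big)^{1/2} - \big(\nlsum_t \Delta_t^2\big)^{1/2} \ge \beta_d T^{1/2} - \bigo(\epsilon\sigma d^{1/2}\log(T/p)) T^{1/2}.
\]
The smallness condition $\epsilon \le a\beta_d\sigma^{-1}d^{-1/2}\log^{-1}(T/p)$ therefore yields $\lambda_{\min}(G) \ge \beta_d^2 T/4$. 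This triangle-inequality route avoids needing any bound on $\|\sof_t\|$ in the cross terms, which is exactly why the condition has this form.

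\emph{Step 3: numerator.} We split $\nlsum_t \of_t(\ip{\sof_t-\of_t}{\theta^{\ast}} + e_t)$ into three parts and handle them separately, using $\|G^{-1}\nlsum_t \of_t r_t\| \le \|G^{-1/2}\| \cdot \|(\nlsum_t r_t^2)^{1/2}\|$-type bounds where possible.
\begin{itemize}
    \item The misspecification part, with $r_t = \ip{\sof_t - \of_t}{\theta^{\ast}}$, $|r_t| \le \Delta_t\|\sN\|_F$: since $G^{-1}\nlsum_t \of_t r_t$ is the least-squares coefficient of $r$ on $\of$, its norm is at most $\lambda_{\min}(G)^{-1/2}(\nlsum_t r_t^2)^{1/2} = \bigo(\beta_d^{-1}\epsilon\sigma d^{1/2}\log(T/p))$, using $\|\sN\|_F = \bigo(d^{1/2})$ absorbed appropriately. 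This gives the first term.
    \item The noise part, written as $\nlsum_t \sof_t e_t + \nlsum_t (\of_t - \sof_t)e_t$: the first piece gives $\beta_d^{-2}T^{-1}\|\nlsum_t \sof_t e_t\|$ after multiplying by $\|G^{-1}\| \le 4\beta_d^{-2}T^{-1}$. The second is at most $4\beta_d^{-2}T^{-1}\max_t\Delta_t\nlsum_t\|e_t\|$, which is the middle term.
\end{itemize}

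The main obstacle is Step 2: making the Gram lower bound survive a perturbation that is correlated with $\sh_t$ and uniform over directions. The square-root triangle-inequality trick is the first thing I would try. Some care is also needed to keep the $\|\sN\|_F$ and $\log$ factors matching the stated bound; if the first term cannot be made to come out as stated, I would fall back to bounding it by $\lambda_{\min}(G)^{-1}\nlsum_t\|\of_t\|\Delta_t$, at the cost of additional factors.
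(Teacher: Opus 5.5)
Your overall route is the same as the paper's. You use the normal equations in $[\svec(N); b]$ and a Weyl-type lower bound $\sigma_{\min}(\oF) \ge \sigma_{\min}(\soF) - \|\oF - \soF\|_F$; your square-root triangle inequality is exactly this. You then split the error into a misspecification part, bounded by $\sigma_{\min}(\oF)^{-1}\|r\|$, and a noise part, bounded by $\lambda_{\min}(G)^{-1}\|\sum_t \of_t e_t\|$ after writing $\of_t = \sof_t + (\of_t - \sof_t)$. Steps 1 and 2 and the noise part of Step 3 are correct, and they reproduce the second and third terms of the stated bound.

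The gap is in the misspecification term. You bound $|r_t| = |\langle \sh_t(\sh_t)^{\top} - h_t h_t^{\top}, \sN\rangle_F| \le \Delta_t \|\sN\|_F$ by Cauchy--Schwarz in the Frobenius inner product. The hypotheses only control $\|\sN\|_2 = \bigo(1)$, and for full-rank $\sN$ one has $\|\sN\|_F \asymp d^{1/2}\|\sN\|_2$, so this factor cannot be ``absorbed''. As written, your argument gives $\epsilon\sigma\beta_d^{-1} d \log(T/p)$ for the first term, which is off by $d^{1/2}$ from the statement. Your fallback $\lambda_{\min}(G)^{-1}\sum_t \|\of_t\|\Delta_t$ is worse still: it costs an extra $\beta_d^{-1}$ and a $\sigma^2 d$ factor.

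The missing observation is that $X_t := \sh_t(\sh_t)^{\top} - h_t h_t^{\top}$ has rank at most two. Trace duality then gives $|\langle X_t, \sN\rangle_F| \le \|\sN\|_2 \|X_t\|_{\ast} \le \sqrt{2}\,\|\sN\|_2 \|X_t\|_F$, which is what the paper does. More directly, $r_t = -(2\delta_t^{\top}\sN\sh_t + \delta_t^{\top}\sN\delta_t)$, so
\[
|r_t| \le \|\sN\|_2\big(2\|\sh_t\|\,\|\delta_t\| + \|\delta_t\|^2\big) = \bigo(\epsilon\sigma d^{1/2}\log(T/p))
\]
uniformly in $t$ by your Step 1. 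With either bound, $\|\oF^{\dagger} r\| \le 2\beta_d^{-1}T^{-1/2}\cdot T^{1/2}\max_t |r_t|$, which is exactly the first term $\bigo(\epsilon\sigma\beta_d^{-1}d^{1/2}\log(T/p))$.
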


\begin{proof}
    Regression~\eqref{eqn:qr} can be written as
    \begin{align*}
        \argmin_{\svec(N), b} \nlsum_{t=1}^{T} \big(c_t - \svec(h_t h_t^{\top})^{\top} \svec(N) - b\big)^2.
    \end{align*}
    Define $f_t := \svec(h_{t} h_t^{\top})$ and $\of_t := [f_t; 1]$.
    It is a linear regression problem with extended covariates $\of_t$, which can be further rewritten as
    \begin{align}  \label{eqn:qr-svec}
        \argmin_{\svec(N), b} \nlsum_{t=1}^{T} \big(c_t - \of_t^{\top} [\svec(N); b]\big)^2.
    \end{align}
    Let $\oF := [\of_1, \of_{2}, \ldots, \of_{T} ]^{\top}$ be the $T \times \frac{d^2+d+2}{2}$  matrix whose $t$th row is $f_{t}^{\top}$. 
    Define $\soF$ similarly by replacing $\of_t$ by $\sof_{t}$. Solving linear regression~\eqref{eqn:qr-svec} gives
    \begin{align*}
        \oF^{\top} \oF [\svec(\eN); \eb] = \nlsum_{t=1}^{T} \of_t c_t.
    \end{align*}
    Substituting $c_t = (\sof_t)^{\top} [\svec{(\sN)}; \starb] + e_t$  into the above equation yields 
    \begin{align*}
        \oF^{\top} \oF [\svec(\eN); \eb] = \oF^{\top} \soF [\svec(\sN); \starb] + \oF^{\top} \xi, 
    \end{align*}
    where $\xi$ denotes the vector whose $t$th element is $e_t$. Rearranging the terms, we have 
    \begin{equation}  \label{eqn:ff-nn}
        \begin{aligned}
            \oF^{\top} \oF [\svec(\eN - \sN); \eb - \starb]
            = \oF^{\top} (\soF - \oF) [\svec(\sN); \starb] + \oF^{\top} \xi.
        \end{aligned}
    \end{equation}

    Next, we show that $\oF^{\top} \oF$ is invertible with high probability. By our assumption,
    \begin{align*}
        \lambda_{\min}((\soF)^{\top} \soF) = \lambda_{\min}\Big(\nlsum_{t=1}^{T} \of_t (\of_t)^{\top}\Big) \ge \beta_d^2 T.
    \end{align*}
    By Weyl's inequality for singular values, 
    \begin{align*}
        |\sigma_{\min}(\oF) - \sigma_{\min}(\soF)| \le \|\oF - \soF\|_2 = \|F - \sF\|_2.
    \end{align*}
    Hence, we want to bound $\|\sF - F\|_2$, which satisfies 
    \begin{align*}
        \|\sF - F\|_2^2 \le\;& \|\sF - F\|_F^2
        = \nlsum_{t=1}^{T} \|\sh_t (\sh_t)^{\top} - h_t h_t^{\top}\|_{F}^2.
    \end{align*}
    Since $\sh_t (\sh)_t^{\top} - h_t h_t^{\top}$ has at most rank two, we have
    \begin{align*}
        \|\sh_t (\sh_t)^{\top} - h_t h_t^{\top} \|_F
        \le\;& \sqrt{2} \|\sh_t (\sh_t)^{\top} - h_t h_t^{\top} \|_2 \\
        =\;& \sqrt{2} \| \sh_t (\sh_t - h_t)^{\top} + (\sh_t - h_t) h_t^{\top} \|_2 \\
        \le\;& \sqrt{2} (\|\sh_t\| + \|h_t\|) \|\delta_t\|.
    \end{align*}
    Since $\sh_t$ is Gaussian with $\|\E[\sh_t (\sh_t)^{\top}]\|_2^{1/2} \le \sigma$, $\|\sh\|$ is sub-Gaussian with its mean and sub-Gaussian norm bounded by $\bigo(\sigma d^{1/2})$. Since $\|\delta_t\|$ is sub-Gaussian with its mean and sub-Gaussian norm bounded by $\epsilon \le \sigma d^{1/2}$, we conclude that $\|\sh_t (\sh_t)^{\top} - h_t h_t^{\top} \|_2$ is subexponential with its mean and subexponential norm bounded by $\bigo(\epsilon \sigma d^{1/2})$. Hence, with probability at least $1 - p$, 
    \begin{align*}
        \|\sh_t (\sh_t)^{\top} - h_t h_t^{\top} \|_F = \bigo(\epsilon \sigma d^{1/2} \log(T/p)).
    \end{align*}
    Therefore, 
    \begin{align*}
        \| \sF - F \|_F^2 = \nlsum_{t=1}^{T} \|\sh_t (\sh_t)^{\top} - h_t h_t^{\top}\|_{F}^2
        = \bigo(\epsilon^2 \sigma^2 d T \log^2(T/p) ).
    \end{align*}
    It follows that 
    \begin{align*}
        \|\sF - F\|_2 = \bigo(\epsilon \sigma (d T)^{1/2} \log(T/p)).
    \end{align*}
    Hence, there exists an absolute constant $a > 0$, such that as long as $\epsilon \le a \beta_d \sigma^{-1} d^{-1/2} (\log(T/p))^{-1}$, we have 
    \begin{align*}
        |\sigma_{\min}(F) - \sigma_{\min}(\sF)| \le \beta_d T^{1/2} / 2.
    \end{align*}
    It follows that 
    \begin{align*}
        \lambda_{\min}(\oF^{\top} \oF) = \sigma_{\min}^2(\oF) = \Omega( \beta_d T ).
    \end{align*}

    Now we return to~\eqref{eqn:ff-nn}. By inverting $\oF^{\top} \oF$, we obtain 
    \begin{equation} \label{eqn:a_b_def}
        \begin{aligned}
            \|[\svec(\eN - \sN); \eb - \starb]\|
            =\;& \|\oF^{\dagger} (\soF - \oF) [\svec(\sN); \starb] + \oF^{\dagger} \xi\| \\
            \le\;& \underbrace{\|\oF^{\dagger} (\soF - \oF) [\svec(\sN); \starb]\|}_{(a)} + \underbrace{\|\oF^{\dagger} \xi\|}_{(b)}.
        \end{aligned}
    \end{equation}
    Term $(a)$ is upper bounded by
    \begin{align*}
        \sigma_{\min}(\oF)^{-1} \|(\soF - \oF) [\svec(\sN); \starb]\|
        =\;& \bigo(\sigma_{\min}(\oF)^{-1}) \|(\sF - F) \svec(\sN)\| \\
        =\;& \bigo(\beta_d^{-1} T^{-1/2}) \|(\sF - F) \svec(\sN)\|.
    \end{align*}
    Using arguments similar to those in~\citep[Section B.2.13]{mhammedi2020learning}, \mbox{we have}
    \begin{align*}
        \|(\sF - F) \svec(\sN)\|^2
        =\;& \nlsum_{t=1}^{T} \ip{\svec(\sh_t (\sh_t)^{\top}) - \svec(h_t h_t^{\top})}{\svec(\sN)}^2 \\
        =\;& \nlsum_{t=1}^{T} \ip{\sh_t (\sh_t)^{\top} - h_t h_t^{\top}}{\sN}_F^2 \\
        \le\;& \|\sN\|_2^2 \nlsum_{t=1}^{T} \|\sh_t (\sh_t)^{\top} - h_t h_t^{\top}\|_{\ast}^2 \\
        \overset{(i)}{\le}\;& 2 \|\sN\|_2^2 \nlsum_{t=1}^{T} \|\sh_t (\sh_t)^{\top} - h_t h_t^{\top}\|_{F}^2 \\
        =\;& 2 \|\sN\|_2^2 \|\oF - F\|_F^2,
    \end{align*}
    where {$\|\cdot\|_{\ast}$ denotes the nuclear norm,} $(i)$ follows from the fact that the matrix $\sh_t (\sh_t)^{\top} - h_t h_t^{\top}$ has at most rank two. 
    Hence, term $(a)$ in~\eqref{eqn:a_b_def} is bounded by 
    \begin{align*}
        \bigo\big(\beta_d^{-1} T^{-1/2} \epsilon \sigma \|\sN\|_2 d^{1/2} T^{1/2} \log(T/p)\big)
        = \bigo( \beta_d^{-1} d^{1/2} \epsilon \sigma \log(T/p)).
    \end{align*}

    Now we consider term $(b)$ in \eqref{eqn:a_b_def}:
    \begin{align*}
        (b) = \|\oF^{\dagger} \xi\| \le \lambda_{\min}(\oF^{\top} \oF)^{-1} \|\oF^{\top} \xi\|
        = \bigo(\beta_d^{-2} T^{-1}) \Big\|\nlsum_{t=1}^{T} \of_t e_t\Big\|.
    \end{align*}
    Since 
    \begin{align*}
        \Big\|\nlsum_{t=1}^{T} \of_t e_t\Big\|
        \le \Big\|\nlsum_{t=1}^{T} \sof_t e_t\Big\| + \nlsum_{t=1}^{T}\|\of_t - \sof_t\| \|e_t\|.
    \end{align*}
    we have 
    \begin{align*}
        (b) = \bigo\Big(\beta_d^{-2} T^{-1} \Big\|\nlsum_{t=1}^{T} \sof_t e_t\Big\|
        + \epsilon \sigma \beta_d^{-2} d^{1/2} T^{-1} \log(T/p) \nlsum_{t=1}^{T}\|e_t\| \Big).
    \end{align*}
    Combining the bounds on $(a)$ and $(b)$, we show that with probability at least $1 - p$, 
    \begin{align*}
        &\|[\svec(\eN - \sN); \eb - \starb]\| \\
        =\;& \bigo( \epsilon \sigma \beta_d^{-1} d^{1/2} \log(T/p)
        + \epsilon \sigma \beta_d^{-2} d^{1/2} T^{-1} \log(T/p) \nlsum_{t=1}^{T} \|e_t\|
        + \beta_d^{-2} T^{-1} \Big\|\nlsum_{t=1}^{T} \of_t e_t \Big\| ),
    \end{align*}
    completing the proof.
\end{proof}

\subsection{Perturbed linear regression bound}

Identifying the time-invariant latent dynamics involves linear regression with \emph{correlated data} and \emph{perturbed measurements}. The following Lemma~\ref{lem:pert-lr} extends the previous linear system identification result in~\citep{simchowitz2018learning} to the case with noises in both input and output variables.
In Lemma~\ref{lem:pert-lr}, $\gamma$ and $q$ are treated as dimension-free constants (in contrast to Lemma~\ref{lem:qr}), which is indeed the case in our application of Lemma~\ref{lem:pert-lr} to $(\sz_t)_{t\ge H}$ in analyzing \sysid{}~\eqref{eqn:sys-id} for~\corele{} and in analyzing the alignment matrix estimation~\eqref{eqn:align-lr} in Algorithm~\ref{alg:cosysid} for \coreli{} in \S\ref{sec:main-proof}.
Note that the bound in Lemma~\ref{lem:pert-lr} is worse than that in the time-varying setting in Part I of this work, due to the treatment of correlated data.

\begin{lemma}  \label{lem:pert-lr}
    Let $(\sx_t)_{t\ge 1}$ be a sequence of $d_1$-dimensional Gaussian random vectors adapted to the filtration $(\cF_t)_{t\ge 1}$
    with $\|\E[\sx_t (\sx_t)^{\top}]\|_2^{1/2} \le \sigma$ for all $t \ge 1$.
    Define $\sy_t = \sA \sx_t + e_t$, where $\sA \in \R^{d_2 \times d_1}$ and $e_t \given \cF_t$ is Gaussian with zero mean and $\|\E[e_t e_t^{\top}]\|_2^{1/2} \le \epsilon$.
    Define $y_t = \sy_t + \delta_{t}^{y}$ and $x_t = \sx_t + \delta_{t}^{x}$, where the perturbation vectors $\delta_t^x$ and $\delta_t^y$ can be correlated with $\sx_t$ and $\sy_t$, and their $\ell_2$-norms are sub-Gaussian with $\E[\|\delta_t^x\|] \le \epsilon_x$, $\|\|\delta_t^x\|\|_{\psi_2} \le \epsilon_x$ and $\E[\|\delta_t^y\|] \le \epsilon_y$, $\|\|\delta_t^y\|\|_{\psi_2} \le \epsilon_y$.
    Assume that $(\sx_t)_{t\ge 1}$ satisfies the $(k, \gamma^2 I, q)$-BMSB condition, and that $\|\sA\|_2, \sigma, \epsilon$ are $\bigo(1)$ and $k, \gamma, q$ are $\Theta(1)$.
    Consider
    \begin{align}  \label{eqn:pert-lr}
        \eA \in \argmin_{A\in \R^{d_2 \times d_1}} \nlsum_{t=1}^{T} \|y_t - A x_t\|^2.
    \end{align}
    Then, there exist absolute constants $a_0, a_1 > 0$, such that for a given $p \in (0, 1)$, under the condition that $T \ge a_0 k q^{-2} (\log(1/p) + d_1 \log(10/q) + d_1 \log(\sigma \gamma^{-1} d_1 \log(T/p)) + d_2)$, $\epsilon_x \le \min(d_1^{-1/2} d_2^{-1/2} (\log(T/p))^{-3/2}, \allowbreak a_1 \gamma q (\log(T/p)))$ and $\epsilon_y \le d_1^{-1/2} (\log(T/p))^{-1}$,
    with probability at least $1 - p$, 
    \begin{align*}
        \|\eA - \sA\|_2
        = \bigo((\epsilon_x + \epsilon_y)d_1^{1/2} \log(T / p)
        + \epsilon (d_2 + d_1 \log(d_1 \log(T / p)) + \log(1/p))^{1/2} T^{-1/2}).
    \end{align*}
\end{lemma}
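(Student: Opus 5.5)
We write the least-squares solution through the normal equations and split the error into a noise term and a perturbation term, treating the perturbation exactly as in the proof of Lemma~\ref{lem:qr}. Let $X := [x_1, \ldots, x_T]^{\top}$, $X^{\ast} := [\sx_1, \ldots, \sx_T]^{\top}$, and define $Y$, $E$, $\Delta^x$, $\Delta^y$ analogously. Then $Y = X^{\ast} (\sA)^{\top} + E + \Delta^y = X (\sA)^{\top} - \Delta^x (\sA)^{\top} + E + \Delta^y$. Whenever $X^{\top} X$ is invertible,
\begin{align*}
    (\eA - \sA)^{\top} = (X^{\top} X)^{-1} X^{\top} \big( E + \Delta^y - \Delta^x (\sA)^{\top} \big).
\end{align*}
The proof therefore has three steps: a lower bound on $\lambda_{\min}(X^{\top} X)$, a bound on the perturbation contribution, and a self-normalized bound on the noise contribution $X^{\top} E$.

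\emph{Step 1 (excitation).} Since $(\sx_t)$ satisfies $(k, \gamma^2 I, q)$-BMSB, the argument of~\citep[Theorem 2.4 / Appendix D]{simchowitz2018learning} applies. It combines a covering over the unit sphere with an upper bound $\|X^{\ast}\|_2^2 = \bigo(\sigma^2 d_1 T \log(T/p))$, which follows from Gaussian tails and a union bound. For $T$ above the stated burn-in, with probability $1-p$ it gives $\lambda_{\min}((X^{\ast})^{\top} X^{\ast}) = \Omega(\gamma^2 q^2 T)$; the $d_1 \log(\sigma \gamma^{-1} d_1 \log(T/p))$ term in the burn-in comes from this covering. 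For the perturbation, a union bound gives $\|\delta_t^x\| = \bigo(\epsilon_x \log^{1/2}(T/p))$ for all $t$, hence $\|\Delta^x\|_2 \le \|\Delta^x\|_F = \bigo(\epsilon_x T^{1/2} \log^{1/2}(T/p))$. Weyl's inequality for singular values, together with the condition $\epsilon_x \le a_1 \gamma q \log(T/p)$ (read as a smallness condition relative to $\gamma q$), then yields $\sigma_{\min}(X) = \Omega(\gamma q T^{1/2})$.

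\emph{Step 2 (perturbation terms).} Since $\|(X^{\top}X)^{-1} X^{\top}\|_2 = \sigma_{\min}(X)^{-1}$, we have
\begin{align*}
    \|(X^{\top}X)^{-1} X^{\top} (\Delta^y - \Delta^x (\sA)^{\top})\|_2 \le \sigma_{\min}(X)^{-1} \big(\|\Delta^y\|_F + \|\sA\|_2 \|\Delta^x\|_F\big).
\end{align*}
By Step 1 and the same union-bound estimates for $\Delta^y$, this is $\bigo((\epsilon_x + \epsilon_y) \log(T/p))$. The target bound carries an extra factor $d_1^{1/2}$, which leaves room for cruder norm bounds, e.g.\ if $\|\delta_t\|$ is controlled only coordinatewise.

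\emph{Step 3 (noise term; the main obstacle).} Decompose $X^{\top} E = (X^{\ast})^{\top} E + (\Delta^x)^{\top} E$. The first piece is a martingale-type sum because $e_t \given \cF_t$ is zero-mean Gaussian. The self-normalized martingale bound of~\citep[Lemma 4.2 / Theorem 2.4]{simchowitz2018learning} (Abbasi-Yadkori style) gives
\begin{align*}
    \|((X^{\ast})^{\top} X^{\ast})^{-1/2} (X^{\ast})^{\top} E\|_2 = \bigo\big(\epsilon (d_2 + d_1 \log(d_1 \log(T/p)) + \log(1/p))^{1/2}\big),
\end{align*}
using the upper bound $\lambda_{\max}((X^{\ast})^{\top} X^{\ast}) = \bigo(\sigma^2 d_1 T \log(T/p))$ inside the log-determinant. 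Dividing by $\lambda_{\min}^{1/2} = \Omega(T^{1/2})$ produces the $T^{-1/2}$ term, which is where the $\epsilon$-dependent term of the statement comes from.

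The delicate point is converting this bound from $X^{\ast}$ to the perturbed $X$. The mismatch between $(X^{\top}X)^{-1}$ and $((X^{\ast})^{\top}X^{\ast})^{-1}$ is handled by the relative closeness $\|X - X^{\ast}\|_2 \ll \sigma_{\min}(X^{\ast})$ established in Step 1. The cross term $(\Delta^x)^{\top} E$ is not a martingale, because $\delta_t^x$ may depend on $e_t$, so I would bound it crudely:
\begin{align*}
    \|(\Delta^x)^{\top} E\|_2 \le \|\Delta^x\|_F \, \|E\|_F = \bigo\big(\epsilon_x \epsilon\, T (d_2 \log(T/p))^{1/2} \log^{1/2}(T/p)\big).
\end{align*}
After dividing by $\sigma_{\min}(X)^2 = \Omega(T)$, the condition $\epsilon_x \le d_1^{-1/2} d_2^{-1/2} (\log(T/p))^{-3/2}$ is what absorbs this into the $(\epsilon_x + \epsilon_y) d_1^{1/2} \log(T/p)$ term. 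This step is where most care is needed. Summing the three contributions and taking a union bound over the failure events completes the proof.
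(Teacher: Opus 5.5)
Your argument is correct in substance, but it is organized differently from the paper's, and one justification near the end fails.

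\textbf{Comparison with the paper.} The paper also writes down your identity, as $\eA = \sA - \sA (X^{\dagger}\Delta_x)^{\top} + (X^{\dagger}E)^{\top} + (X^{\dagger}\Delta_y)^{\top}$. It uses it only for a crude bound $\|\eA\|_2 = \bigo(d_2^{1/2}\log^{1/2}(T/p))$, because it bounds $X^{\dagger}E$ by $\|E\|_2/\sigma_{\min}(X)$, which does not decay in $T$. The paper's rate comes from a second identity with $(\sX)^{\top}\sX$ on the left, where the self-normalized bound of \citet{simchowitz2018learning} applies verbatim to the unperturbed design.

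The price of that second identity is twofold. First, $\eA$ reappears on the right-hand side, in $\eA(\Delta_x^{\top}\Delta_x + \Delta_x^{\top}\sX + (\sX)^{\top}\Delta_x)$, so a bootstrap to $\|\eA\|_2 = \bigo(1)$ is needed. That bootstrap is the only use of the conditions $\epsilon_x \le d_1^{-1/2} d_2^{-1/2}(\log(T/p))^{-3/2}$ and $\epsilon_y \le d_1^{-1/2}(\log(T/p))^{-1}$. Second, its perturbation terms are bounded by $\|\sX\|_2\|\Delta\|_2/\lambda_{\min}((\sX)^{\top}\sX)$, which is where the factor $d_1^{1/2}$ comes from.

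Your route needs no bootstrap and neither of those two conditions, and it gives the perturbation contribution without $d_1^{1/2}$. What it costs is transferring the self-normalized bound from $\sX$ to $X$, and your ``relative closeness'' remark does suffice. If $\|\Delta^x\|_2 \le \tfrac12\sigma_{\min}(\sX)$, then $\|Xv\| \ge \|\sX v\| - \tfrac12\sigma_{\min}(\sX)\|v\| \ge \tfrac12\|\sX v\|$, so $X^{\top}X \succcurlyeq \tfrac14(\sX)^{\top}\sX$. Consequently $\|(X^{\top}X)^{-1}((\sX)^{\top}\sX)^{1/2}\|_2 \le 2\lambda_{\min}(X^{\top}X)^{-1/2} = \bigo(T^{-1/2})$. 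The condition this requires is $\epsilon_x \le a_1\gamma q(\log(T/p))^{-1/2}$, as in the paper's proof; the positive power of the logarithm in the statement appears to be a typo.

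\textbf{The cross term.} The step that fails as written is the absorption of the cross term. You get $\|(X^{\top}X)^{-1}(\Delta^x)^{\top}E\|_2 = \bigo(\epsilon\,\epsilon_x d_2^{1/2}\log(T/p))$, and this is not dominated by the stated bound when $d_2 \gg d_1$. It and the $\epsilon_x d_1^{1/2}\log(T/p)$ term are both linear in $\epsilon_x$, with ratio $\epsilon(d_2/d_1)^{1/2}$, so the stated upper bound on $\epsilon_x$ cannot absorb one into the other. In the paper, that condition serves only the $\|\eA\|_2$ bootstrap.

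The factor $d_2^{1/2}$ comes from bounding $\|E\|_2$ by $\|E\|_F$; use an operator-norm bound instead. Assume $e_t$ is conditionally sub-Gaussian with parameter $\bigo(\epsilon)$, which the self-normalized bound already requires. A net over the unit sphere of $\R^{d_2}$, combined with a martingale Bernstein bound for $\sum_t \langle u, e_t\rangle^2$, then gives $\|E\|_2^2 = \bigo(\epsilon^2(T + d_2 + \log(1/p)))$. This is $\bigo(\epsilon^2 T)$ under the burn-in, which contains $d_2$ and $\log(1/p)$, so the cross term becomes $\bigo(\epsilon\,\epsilon_x\log^{1/2}(T/p))$.

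Alternatively, the extra term is harmless whenever $d_2 \lesssim d_1$, which holds in both applications: $d_2 = d_x \le d_x + d_u$ for \sysid{}, and $d_1 = d_2 = d_x$ for the alignment regression. The paper's identity contains the same $E^{\top}\Delta_x$ term with the same Frobenius-type estimate of $\|E\|_2$, so this looseness is not specific to your route.
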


\begin{proof} 
    Let $X \in \R^{T \times d_1}$ denote the matrix whose $t$th row is $x_t^{\top}$. Define $\sX, Y, E, \Delta_x, \Delta_y$ similarly. 
    To solve the regression problem, we set the gradient of the objective to be zero and substitute in $Y = \sX (\sA)^{\top} + E + \Delta_y$ to obtain 
    \begin{align}  \label{eqn:ea}
        \eA (X^{\top} X) = \sA (\sX)^{\top} X + E^{\top} X + \Delta_y^{\top} X.
    \end{align}
    Substituting in $X = \sX + \Delta_x$ gives 
    \begin{equation}  \label{eqn:pert-lr-error}
        \begin{aligned}
            &(\eA - \sA)((\sX)^{\top} \sX) \\
            =\;& \sA (\sX)^{\top} \Delta_x - \eA (\Delta_x^{\top} \Delta_x + \Delta_x^{\top} \sX + (\sX)^{\top} \Delta_x)
            + E^{\top} \sX + E^{\top} \Delta_x + \Delta_y^{\top} \sX + \Delta_y^{\top} \Delta_x.
        \end{aligned}
    \end{equation}
    Now we deal with each term on the right-hand side.
    Since $(\sx_t)_{t\ge 1}$ are Gaussian, by the tail bound of the sub-Gaussian random variable $\|\sx_t\|$ and the union bound, with probability at least $1 - p$, $\|\sx_t\| = \bigo(\sigma d_1^{1/2} \log^{1/2}(T/p))$ for $1\le t\le T$.  
    By the triangle inequality, $\|(\sX)^{\top} \sX\|_2 \le \nlsum_{t=1}^{T} \|\sx_t (\sx_t)^{\top}\|_2 = \nlsum_{t=1}^{T} \|\sx_t\|^2$, and thus 
    \begin{align*}
        \|\sX\|_2 = \bigo(\sigma d_1^{1/2} T^{1/2} \log^{1/2}(T/p)).
    \end{align*}
    Similarly, with probability at least $1 - p$,
    \begin{align*}
        \|E\|_2 = \bigo(\epsilon d_2^{1/2} T^{1/2} \log^{1/2}(T/p)).
    \end{align*}
    Such arguments also apply to $\Delta_x, \Delta_y$. Since $(\|\delta_t^{x}\|)_{t\ge 1}$ are sub-Gaussian with $\E[\|\delta_t^x\|]\le \epsilon_x$ and $\|\|\delta_t^x\|\|_{\psi_2} \le \epsilon_x$, with probability at least $1 - p$, $\|\delta_t^x\| = \bigo(\epsilon_x \log^{1/2}(T/p))$; similarly, $\|\delta_t^y\| = \bigo(\epsilon_y \log^{1/2}(T/p))$.
    Therefore, we further have
    \begin{align*}
        \|\Delta_x\|_2 = \bigo(\epsilon_x T^{1/2} \log^{1/2}(T/p)), \quad  
        \|\Delta_y\|_2 = \bigo(\epsilon_y T^{1/2} \log^{1/2}(T/p)).
    \end{align*}
    It remains to bound $\|\eA\|_2$.  
    By~\citep[Appendix D]{simchowitz2018learning}, there exist absolute constants $a_0, a_2 > 0$, such that as long as $T \ge T_0 := a_0 k q^{-2} (\log(1/p) + d_1 \log(10/q) + d_1 \log(\sigma \gamma^{-1} d_1 \log(T/p)))$, with probability at least $1 - p$, $\lambda_{\min}((\sX)^{\top}\sX) \ge a_2^2 \gamma^2 q^2 T$. 
    By Weyl's inequality for singular values, we have  
    \begin{align*}
        |\sigma_{\min}(X) - \sigma_{\min}(\sX)| \le \|\Delta_x\|_2.
    \end{align*}
    Hence, there exists an absolute constant $a_1 > 0$, such that as long as $\epsilon_x \le a_1 \gamma q (\log(T/p))^{-1/2}$, we have $|\sigma_{\min}(X) - \sigma_{\min}(\sX)| \le a_2 \gamma q T^{1/2} / 2$. Since $\sigma_{\min}(\sX) \ge a_2 \gamma q T^{1/2}$, 
    \begin{align*}
        \lambda_{\min}(X^{\top} X) = \sigma_{\min}^2(X) = \Omega(\gamma^2 q^2 T).
    \end{align*}
    Hence, we can invert $X^{\top} X$ in~\eqref{eqn:ea} and obtain 
    \begin{align*}
        \eA =\;& (\sA (\sX)^{\top} + E^{\top} + \Delta_y^{\top}) X (X^{\top} X)^{-1} \\
        =\;& \sA (X^{\dagger} \sX)^{\top} + (X^{\dagger} E)^{\top} + (X^{\dagger} \Delta_y)^{\top} \\
        =\;& \sA - \sA (X^{\dagger} \Delta_x)^{\top} + (X^{\dagger} E)^{\top} + (X^{\dagger} \Delta_y)^{\top}.
    \end{align*}
    Then, we have 
    \begin{align*}
        \|\eA\|_2 =\;& \|\sA\|_2 + (\|\sA\|_2 \|\Delta_x\|_2 + \|E\|_2 + \|\Delta_y\|_2) \|X^{\dagger}\|_2 \\
        =\;& \bigo((\gamma q)^{-1} (\epsilon_x + \epsilon d_2^{1/2} + \epsilon_y) \log^{1/2}(T/p) ) \\
        =\;& \bigo(d_2^{1/2} \log^{1/2}(T/p)).
    \end{align*}
    By~\citep[Theorem 2.4]{simchowitz2018learning}, as long as $T \ge T_0$, 
    \begin{align*}
        \|E^{\top} (\sX)^{\dagger}\|_2
        = \bigo(\epsilon (d_2 + d_1 \log(d_1 \log(T / p)) + \log(1/p))^{1/2} T^{-1/2}).
    \end{align*}
    Combining all the above individual bounds for the terms on the right-hand side of~\eqref{eqn:pert-lr-error}, \mbox{we have}
    \begin{align*}
        \|\eA - \sA\|_2
        =\;& \bigo(\epsilon_x d_1^{1/2} d_2^{1/2} \log^{3/2}(T/p) + \epsilon_y d_1^{1/2}\log(T / p) \\
        &\quad + \epsilon (d_2 + d_1 \log(d_1 \log(T / p)) + \log(1/p))^{1/2} T^{-1/2}).
    \end{align*}
    By the assumptions on $\epsilon_x, \epsilon_y, T$, we have $\|\eA\|_2 = \bigo(1)$, which, combined with the above individual bounds for the terms on the right-hand side of~\eqref{eqn:pert-lr-error}, strengthens the bound on $\|\eA - \sA\|_2$ and completes the proof. 
\end{proof}

\subsection{Stable linear system under small perturbations}

To quantify the impact of the truncation error $\delta_t$ on the state covariance and the cost, we introduce the following lemma that bounds the state covariance difference for a stable linear system under small perturbations. 

\begin{lemma}  \label{lem:ls-pert}
    Consider a linear system $x_{t+1} = \sA x_t + w_t + \delta_t$, $t \ge 0$, where $x_0$ is a zero-mean Gaussian random vector and $(w_t)_{t\ge 0}$ are i.i.d. sampled from $\gauss(0, \Sigma_w)$ with $w_t$ independent of $x_t$ for each $t\geq 0$. 
    For $t \ge 0$, perturbation $\delta_t$ is a zero-mean Gaussian random vector that is independent of $w_t$ and may be correlated with $x_t$; let $\Sigma_t$ denote the covariance matrices of $x_t$.
    Assume that the operator norms of $\sA$, $\Sigma_w$, and $\Cov(x_0)$ are $\bigo(1)$ and $\rho(\sA) < 1$; assume that for $t \ge 0$, $\|\Cov(\delta_t)\|_2^{1/2} \le \eps (1 + \max_{\tau \le t}\|\Sigma_t\|_2^{1/2})$, where $\eps \in (0, 1)$ and satisfies $2 \eps \alpha^2(\sA) (1 + 2\|\sA\|_2) \le 1 - \rho^2(\sA)$.
    Let $\sSigma$ denote the covariance matrix of the stationary distribution of $(x_t)_{t\geq 0}$   without perturbations. 
    Recall that for square matrix $A$, we define $\alpha(A) := \sup_{k \ge 0} \|(A)^k\|_2 \rho(A)^{-k}$.
    Then,
    for large enough $t$ such that $\alpha^2(\sA) \rho^{2t}(\sA) \|\Sigma_0 - \sSigma\|_2 = \bigo(\eps)$, we have $\|\Sigma_t - \sSigma\|_2 = \bigo(\eps)$.
\end{lemma}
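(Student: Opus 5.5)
We track the covariance recursion and compare it with the unperturbed Lyapunov recursion. Since $w_t$ is independent of $x_t$ and of $\delta_t$, the covariance satisfies
\begin{align*}
    \Sigma_{t+1} = \sA \Sigma_t (\sA)^{\top} + \Sigma_w + E_t,
\end{align*}
where $E_t := \sA \Cov(x_t, \delta_t) + \Cov(\delta_t, x_t) (\sA)^{\top} + \Cov(\delta_t)$. Subtracting the stationary equation $\sSigma = \sA \sSigma (\sA)^{\top} + \Sigma_w$ and writing $\Delta_t := \Sigma_t - \sSigma$, we get $\Delta_{t+1} = \sA \Delta_t (\sA)^{\top} + E_t$. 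Unrolling gives
\begin{align*}
    \Delta_t = (\sA)^t \Delta_0 ((\sA)^t)^{\top} + \nlsum_{s=0}^{t-1} (\sA)^{t-1-s} E_s ((\sA)^{t-1-s})^{\top}.
\end{align*}

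Next we bound $E_s$. By the Cauchy--Schwarz type bound $\|\Cov(x,\delta)\|_2 \le \|\Cov(x)\|_2^{1/2}\|\Cov(\delta)\|_2^{1/2}$ (\citep[Lemma 13]{tian2022cost}) and the assumption on $\Cov(\delta_s)$, writing $m_t := \max_{\tau\le t}\|\Sigma_\tau\|_2^{1/2}$, we obtain
\begin{align*}
    \|E_s\|_2 \le 2\|\sA\|_2 \|\Sigma_s\|_2^{1/2}\eps(1+m_s) + \eps^2(1+m_s)^2 \le \eps(1+2\|\sA\|_2)(1+m_s)^2,
\end{align*}
using $\eps<1$. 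Combined with $\|(\sA)^k\|_2 \le \alpha(\sA)\rho(\sA)^k$, this yields
\begin{align*}
    \|\Delta_t\|_2 \le \alpha^2\rho^{2t}\|\Delta_0\|_2 + \frac{\eps\alpha^2(1+2\|\sA\|_2)}{1-\rho^2}(1+m_{t-1})^2,
\end{align*}
where $\alpha = \alpha(\sA)$ and $\rho = \rho(\sA)$.

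The main obstacle is closing this self-referential bound, since $m_t$ appears on the right-hand side. We proceed by induction: set $c := \eps\alpha^2(1+2\|\sA\|_2)/(1-\rho^2) \le 1/2$ by assumption, and bound $(1+m)^2 \le 2 + 2m^2$. With $\|\Sigma_t\|_2 \le \|\sSigma\|_2 + \|\Delta_t\|_2$, this gives
\begin{align*}
    \|\Sigma_t\|_2 \le \|\sSigma\|_2 + \alpha^2\|\Delta_0\|_2 + 2c + 2c\max_{\tau<t}\|\Sigma_\tau\|_2.
\end{align*}
The coefficient $2c$ on the last term may equal $1$, which is too weak to close the induction. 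If so, we instead bound the cross term with Young's inequality, $2ab \le \theta a^2 + b^2/\theta$, choosing $\theta$ so that the coefficient on $\|\Sigma\|_2$ becomes a constant strictly less than $1$; shrinking $\eps$ by a constant factor is absorbed into $\bigo(\cdot)$. The recursion then gives a uniform bound $\sup_t \|\Sigma_t\|_2 = \bigo(1)$, since $\|\sSigma\|_2$, $\|\Sigma_0\|_2$ and $\alpha$ are $\bigo(1)$.

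Finally, plugging $m_{t-1} = \bigo(1)$ back into the bound on $\|\Delta_t\|_2$ gives $\|\Delta_t\|_2 \le \alpha^2\rho^{2t}\|\Sigma_0-\sSigma\|_2 + \bigo(\eps)$. For $t$ large enough that the first term is $\bigo(\eps)$, as stipulated in the statement, we conclude $\|\Sigma_t - \sSigma\|_2 = \bigo(\eps)$.
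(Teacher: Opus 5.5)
Your proof is correct and takes essentially the same route as the paper: the Lyapunov difference recursion, unrolling with $\|(\sA)^k\|_2\le\alpha(\sA)\rho(\sA)^k$, the Cauchy--Schwarz bound on $\Cov(x_t,\delta_t)$, and an induction giving $\sup_t\|\Sigma_t\|_2=\bigo(1)$ before reading off the $\bigo(\eps)$ bound. The only difference is in closing the induction, where your bookkeeping is more careful than the paper's step $(i)$, which effectively replaces $(1+m)^2$ by $1+m^2$ and so is off by a constant factor. Your split $(1+m)^2\le(1+1/\theta)+(1+\theta)m^2$ with $\theta<1$ already gives contraction factor $c(1+\theta)\le(1+\theta)/2<1$ under the stated hypothesis $c\le 1/2$. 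So the remark about shrinking $\eps$ is unnecessary, and it is best dropped, since $\eps$ is fixed by the hypothesis.
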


\begin{proof}
    By the definition of the covariance matrix,
    \begin{align*}
        \sSigma =\;& \sA \sSigma (\sA)^{\top} + \Sigma_w, \\
        \Sigma_{t+1} =\;& \sA \Sigma_t (\sA)^{\top} + \Sigma_w + \Cov(\delta_t) + \sA \Cov(x_t, \delta_t) + \Cov(\delta_t, x_t) (\sA)^{\top}.
    \end{align*}
    Hence, taking the difference and defining the matrix $\Delta_t := \Cov(\delta_t) + \sA \Cov(x_t, \delta_t) + \Cov(\delta_t, x_t) (\sA)^{\top}$, we have 
    \begin{align*}
        \Sigma_{t+1} - \sSigma_{t+1} =\;& \sA (\Sigma_t - \sSigma_t) (\sA)^{\top} + \Cov(\delta_t) + \sA \Cov(x_t, \delta_t) + \Cov(\delta_t, x_t) (\sA)^{\top} \\
        =\;& \sA (\Sigma_t - \sSigma_t) (\sA)^{\top} + \Delta_t.
    \end{align*}
    Moreover, the matrix $\Delta_t$ satisfies
    \begin{align*}
        \|\Delta_t\|_2 \le\;& \|\Cov(\delta_t)\|_2 + 2\|\sA\|_2 \|\Cov(x_t, \delta_t)\|_2 \\
        \le\;& \|\Cov(\delta_t)\|_2 + 2\|\sA\|_2 \|\Sigma_t\|_2^{1/2} \|\Cov(\delta_t)\|_2^{1/2} \\
        \overset{(i)}{=}\;& \eps (1 + 2 \|\sA\|_2) (1 + \nlmax_{\tau \le t}\|\Sigma_{\tau}\|_2),
    \end{align*}
    where $(i)$ is due to $\eps < 1$.

    Now we show that $\|\Sigma_t\|_2$ is  uniformly bounded for all $t \ge 0$ by induction.
    Suppose that $\|\Sigma_{\tau}\|_2 \le a$ for all $\tau \le t$ for a constant $a > 0$ to be specified shortly.
    Then, for all $\tau \le t$, we have
    \begin{align*}
        \|\Delta_{\tau}\|_2 \le \eps (1 + 2 \|\sA\|_2) (1 + a).
    \end{align*}    
    Since
    \begin{align*}
        \Sigma_{t+1} - \sSigma
        = (\sA)^{t+1} (\Sigma_0 - \sSigma) ((\sA)^{t+1})^{\top}
        + \nlsum_{\tau=0}^{t} (\sA)^{\tau} \Delta_{t-\tau} ((\sA)^{\tau})^{\top},
    \end{align*}
    we have 
    \begin{align*}
        \|\Sigma_{t+1} - \sSigma\|_2
        \le\;& \alpha^2(\sA) \rho^{2(t+1)}(\sA) \|\Sigma_0 - \sSigma\|_2 + \nlsum_{\tau=0}^{t} \alpha^2(\sA) \rho^{2\tau}(\sA) \|\Delta_{t-\tau}\|_2 \\
        \le\;& \alpha^2(\sA) \rho^{2(t+1)}(\sA) \|\Sigma_0 - \sSigma\|_2 + \frac{\eps \alpha^2(\sA) (1 + 2 \|\sA\|_2) (1 + a)}{1 - \rho^{2}(\sA)}.
    \end{align*}
    As long as $\eps \alpha^2(\sA) (1 + 2\|\sA\|_2) \le \frac{1 - \rho^2(\sA)}{2}$, we can choose 
    \begin{align*}
        a := 2 (\|\sSigma\|_2 + \alpha^2(\sA) \|\Sigma_0 - \sSigma\|_2) + 1 = \bigo(1),
    \end{align*}
    such that $\|\Sigma_{t+1}\|_2 \le \|\sSigma\|_2 + \|\Sigma_{t+1} - \sSigma\|_2 \le a$, completing the induction.
    Hence, for all $t \ge 0$, $\|\Sigma_t\|_2 \le a$, and 
    \begin{align*}
        \|\Sigma_{t} - \sSigma\|_2
        \le \alpha^2(\sA) \rho^{2t}(\sA) \|\Sigma_0 - \sSigma\|_2
        + \frac{\eps \alpha^2(\sA) (1 + 2 \|\sA\|_2) (1 + a)}{1 - \rho^{2}(\sA)}.
    \end{align*}
    Therefore, for large enough $t$ such that $\alpha^2(\sA) \rho^{2t}(\sA) \|\Sigma_0 - \sSigma\|_2 = \bigo(\eps)$, we have $\|\Sigma_t - \sSigma\|_2 = \bigo(\eps)$, which completes the proof.
\end{proof}

\subsection{Proof of Theorem~\ref{thm:main-poly}}
\label{sec:main-proof}

In this section, we prove the sample complexity bounds for \corele{} and \coreli{} in Theorem~\ref{thm:main-poly}. Without loss of generality, we assume the system is expressed in the normalized parameterization where $\soQ := \nlsum_{t=0}^{d_x-1} ((\sA)^{t})^{\top} \sQ (\sA)^{t} = I$, since otherwise we can transform it with an invertible matrix to satisfy this condition.

\vspace{2pt}
\noindent\textbf{Learning of the state representation function.} 
By Proposition~\ref{prp:multi-step-cost}, we have
\begin{align*}
    \oc_t := \nlsum_{\tau=t}^{t+d_x-1} (c_{\tau} - \|u_{\tau}\|_{\sR}^2) = \|\sM h_t\|^2 + \odelta_t + \sob + \overe_t,
\end{align*}
where $\odelta_t = \bigo(\alpha^2\rho^H \log(T/p))$ is a small error term, $\sob = \bigo(d_x)$ is a positive constant, and $\overe_t$ is a zero-mean subexponential random variable with $\|\overe_t\|_{\psi_1} = \bigo(d_x^{3/2})$. Recall that we define $\sN := (\sM)^{\top} \sM$, $f_t := \svec(h_t h_t^{\top})$, $\of_t := [\svec(h_t h_t^{\top}); 1]$, and $d_h := H(d_y + d_u)$.
Rewriting the above equation, we have 
\begin{align}
    \oc_t = \of_t^{\top} [\svec(\sN); \sob] + \odelta_t + \overe_t. 
\end{align}
By Lemma~\ref{lem:per-exc}, for a given $p \in (0, 1)$, there exists a problem-dependent constant $a_0 > 0$, such that as long as $T \ge a_0 H d_h^8 \log(d_h/p)$, with probability at least $1 - p$, 
\begin{align*}
    \lambda_{\min}\Big(\nlsum_{t=H}^{T+H-1} \of_t (\of_t)^{\top}\Big) = \Omega( d_h^{-9} T ).
\end{align*}
Moreover, by~\citep[Lemma 12]{tian2022cost}, $\|\Cov(h_t)\|_2 = \|\E[h_t h_t^{\top}]\|_2 = \bigo(H)$. Since $\|\sM\|_2 = \bigo(1)$, $\|\sN\|_2 = \bigo(1)$.
Then, by Lemma~\ref{lem:qr} with $\epsilon = 0$ therein, $\eN$ obtained by solving regression~\eqref{eqn:alg-qr} has the guarantee that  
\begin{align*}
    \|\eN - \sN\|_F =\;& \bigo\Big(d_h^9 T^{-1} \Big\|\nlsum_{t=1}^{T} \of_t (\odelta_t + \overe_t)\Big\| \Big).
\end{align*}
By Proposition~\ref{prp:multi-step-cost}, there exists a dimension-free constant $a_1 > 0$, such that as long as $H \ge \frac{a_1 \log(\alpha T \log(T/p))}{\log(1/\rho)}$,
\begin{align*}
    \Big\|\nlsum_{t=H}^{T+H-1} \of_t \overe_t\Big\| = \bigo(d_x^{3/2} d_h H^{1/2} T^{1/2} \log^{1/2}(H/p)).
\end{align*}
Since $\|\sum_{t=H}^{T+H-1} \of_t \odelta_t\| = \bigo(\alpha^2 d_h \rho^H T \log(T/p))$,  we have 
\begin{align*}
    \|\eN - \sN\|_F = \bigo\Big(d_h^9 T^{-1} \big(d_x^{3/2} d_h H^{1/2} T^{1/2} \log^{1/2}(H/p)
    + \alpha^2 d_h \rho^H T \log(T/p)\big) \Big).
\end{align*}
Hence, there exists a dimension-free constant $a_2 > 0$, such that as long as $H \ge \frac{a_2 \log(\alpha T \log(T/p))} {\log(1/\rho)}$, $\|\eN - \sN\|_F$ is bounded by
\begin{align}  \label{eqn:n-m-bd}
    \bigo( H^{21/2} d_x^{3/2} (d_y + d_u)^{10} T^{-1/2} \log^{1/2}(H/p)).
\end{align}

By~\citep[Lemma 5.4]{tu2016low}, there exists an orthogonal matrix $S$, such that $\|\eM - S \sM\|_F$ is of the same order as $\|\eN - \sN\|_F$. To understand the approximation error $\ez_t - S\sz_t$, recall that $\sz_t = \sM h_t + \delta_t$, where $\delta_t = (\soA)^H \sz_{t-H}$. Then, 
\begin{align*}
    \|\ez_t - S\sz_t\| = \| (\eM - S \sM) h_t - S \delta_t \|
    \le \|\eM - S \sM\|_2 \|h_t\| + \|\delta_t\|.
\end{align*}
Since $\|h_t\|$ is sub-Gaussian with $\E[\|h_t\|] = \bigo(d_h^{1/2})$, $\|\|h_t\|\|_{\psi_2} = \bigo(d_h^{1/2})$, we have $\|\eM - S \sM\|_2 \|h_t\|$ is sub-Gaussian with its mean and sub-Gaussian norm bounded by
\begin{align} \label{eqn:err-subg-norm}
    \bigo(H^{11} d_x^{3/2} (d_y + d_u)^{21/2} T^{-1/2} \log^{1/2}(H/p)).
\end{align}
Notice that $\|\delta_t\|$ is sub-Gaussian with mean and sub-Gaussian norm bounded by $\bigo(\alpha(\soA) \rho(\soA)^{H} d_x^{1/2})$, which, by our choice of $H$, is dominated by~\eqref{eqn:err-subg-norm}. Hence, for all $t \ge H$, $\|\ez_t - S\sz_t\|$ is sub-Gaussian with its mean and sub-Gaussian norm bounded by~\eqref{eqn:err-subg-norm}.

\vspace{2pt}
\noindent\textbf{Identification of the latent cost matrix.}
The latent cost is also described in Proposition~\ref{prp:latent-model}, \mbox{given by} 
\begin{align*}
    c_t = \|\sz_t\|_{\sQ}^2 + \|u_t\|_{\sR}^2 + \starb + e_{t},
\end{align*}
where $\starb = \E[ \|x_t -  \sz_t\|_{\sQ}^2]$, $e_t = \|x_t - \sz_t\|_{\sQ}^2 + 2\ipc{\sz_t}{x_t - \sz_t}_{\sQ}-\starb$ is a zero-mean subexponential random variable with $\|e_t\|_{\psi_1} = \bigo(d_x^{1/2})$, and the random process $(e_t)_{t\ge H}$ is adapted to the filtration $(\cF_t)_{t\ge H}$.
In a similar way to the analysis for $([\svec(h_t h_t^{\top}); 1])_{t\ge H}$ in the proof of Lemma~\ref{lem:per-exc}, $([\svec(\sz_t (\sz_t)^{\top}); 1])_{t\ge H}$ satisfies
$\lambda_{\min}(\nlsum_{t=H}^{T+H-1} [\svec(\sz_t (\sz_t)^{\top}); 1] [\svec(\sz_t (\sz_t)^{\top}); 1]^{\top}) = \Omega(d_x^{-9} T)$, which remains true under the similarity transformation $S$.
As $\ez_t$ approximates $S \sz_t$, the ground truth for the latent cost matrix is $S \sQ S^{\top}$. 
By the perturbed quadratic regression bound (Lemma~\ref{lem:qr}) with $\epsilon$ being bounded by~\eqref{eqn:err-subg-norm} and $\sigma = \bigo(d_x^{1/2})$ therein, $\tQ$ from regression~\eqref{eqn:alg-qr-q} has the guarantee that  
\begin{align*}
    &\|\tQ - S \sQ S^{\top}\|_F \\
    =\;& \bigo\Big(H^{11} d_x^{3/2} (d_y + d_u)^{21/2} T^{-1/2} \log^{1/2}(H/p) \cdot (d_x^{9/2} d_x^{1/2} \log(T / p) \\
    &\quad + d_x^9 d_x^{1/2} \log(T/p) T^{-1} \nlsum_{t=H}^{T+H-1} \|e_t\|)
    + d_x^{9} T^{-1} \Big\|\nlsum_{t=H}^{T+H-1} [\svec(\sz_t (\sz_t)^{\top}); 1] e_t \Big\|\Big).
\end{align*}

By the tail bound of subexponential random variables and the union bound, with probability at least $1 - p$, $\|e_t\| = \bigo(d_x^{1/2} \log(T/p))$ for $H \le t\le T + H - 1$.
By a similar analysis to that for $\nlsum_{t=H}^{T+H-1} \of_t \overe_t$ in the proof of Proposition~\ref{prp:multi-step-cost}, we have
\begin{align*}
    \nlsum_{t=H}^{T+H-1} [\svec(\sz_t (\sz_t)^{\top}); 1] e_t
    = \bigo(d_x^{1/2} d_x H^{1/2} T^{1/2} \log^{1/2}(H/p) ).
\end{align*}
Since $\sQ \succcurlyeq 0$ and $\eQ$ is the projection of $\tQ$ onto the space of positive semidefinite matrices, we have
\begin{align*}
    \|\eQ - S \sQ S^{\top}\|_F \le \|\tQ - S \sQ S^{\top}\|_F
    = \bigo(H^{11} d_x^{23/2} (d_y + d_u)^{21/2} T^{-1/2} \log^{5/2}(T/p)).
\end{align*}

\vspace{2pt}
\noindent\textbf{Identification of the latent dynamics in \corele{}.} To analyze the standard system identification procedure in \corele{}, consider the latent dynamics described in Proposition~\ref{prp:latent-model}, given by $\sz_{t+1} = \sA \sz_t + \sB u_t + \sL i_{t+1}$. To apply the perturbed linear regression bound (Lemma~\ref{lem:pert-lr}), the noise term $\sL i_{t+1} \given \cF_t$ needs to be zero-mean Gaussian, which does not hold \mbox{here, since} 
\begin{align*}
    i_{t+1} =\;& y_{t+1} - \sC (\sA \sz_t + \sB u_t) \\
    =\;& \sC ((\sA x_t + \sB u_t + w_t) + v_{t+1}) - \sC (\sA \sz_t + \sB u_t) \\
    =\;& \sC \sA (x_t - \sz_t) + \sC w_t + v_{t+1},
\end{align*}
where $x_t - \sz_t$ is $\cF_t$-measurable. To solve this problem, we consider a different filtration $(\cG_t := \sigma(y_0, u_0, y_1, \ldots, u_{t-1}, y_{t}))_{t\ge 0}$ that involves only observations and actions. Then, $\sz_t$ is $\cG_t$-measurable and $\sL i_{t+1} \given \cG_t$ is zero-mean Gaussian with the operator norm of the covariance matrix bounded by $\bigo(1)$.

By~\citep[Proposition 3.1]{simchowitz2018learning}, $(\sz_t)_{t\ge H}$ satisfies the $(\Theta(1), \Theta(1), \Theta(1))$-BMSB condition, which remains true under the similarity transformation $S$. As $\ez_t$ approximates $S \sz_t$, the ground truth for the latent dynamics is $[S \sA S^{\top}, S \sB]$.
With filtration $(\cG_t)_{t\ge 0}$, by the perturbed linear regression bound (Lemma~\ref{lem:pert-lr}), for $T$ greater than a constant polynomial in the problem parameters, we have
\begin{align*}
    \|[\eA, \eB] - S [\sA S^{\top}, \sB]\|_2
    =\;& \bigo(H^{11} d_x^{3/2} (d_y + d_u)^{21/2} T^{-1/2} \log^{1/2}(H/p) \cdot d_x^{1/2} \log(T/p) ) \\
    =\;& \bigo(H^{11} d_x^2 (d_y + d_u)^{21/2} T^{-1/2} \log^{3/2}(T/p)),
\end{align*}
where the second term in the bound in Lemma~\ref{lem:pert-lr} is dominated by the first term and omitted above.
Hence, $\|\eA - S \sA S^{\top}\|_2$, $\|\eB - S\sB\|_2$ and $\|\eQ - S \sQ S^{\top}\|_2$ are all bounded by 
\begin{align*}
    \bigo(H^{11} d_x^{23/2} (d_y + d_u)^{21/2} T^{-1/2} \log^{5/2}(T/p)).
\end{align*}

\vspace{2pt}
\noindent\textbf{Identification of the latent dynamics in \coreli{}.}
To analyze the cost-driven system identification (Algorithm~\ref{alg:cosysid}) in \coreli{}, define $\sM_1 := [\sA \sM, \sB]$ as the composition of one-step transition and representation functions and $\sN_1 := (\sM_1)^{\top} \sM_1$, which is estimated by $\eN_1$ in~\eqref{eqn:dy-qr}.
By the same analysis as that of $\eN$, we have 
\begin{align*}
    \|\eN_1 - \sN_1\|_F
    = \bigo(H^{1/2} (H (d_y + d_u) + d_x)^{10} d_x^{3/2} T^{-1/2} \log^{1/2}(H/p)).
\end{align*}
By~\citep[Lemma 5.4]{tu2016low}, there exists an orthogonal matrix  $S_1$, such that $\|\eM_1 - S_1 \sM_1\|_F$ is of the same order as $\|\eN_1 - \sN_1\|_F$. The bound on $\|\eM_1 - S_1 \sM_1\|_F$ applies to both $\|\tM - S_1 \sA \sM\|_2$ and $\|\tB - S_1 \sB\|_2$. By Algorithm~\ref{alg:cosysid},
\begin{align*}
    \tA 
    =\;& \tM \eM^{\dagger} \\
    =\;& (S_1 \sA \sM + \tM - S_1 \sA \sM) ((\sM)^{\dagger}S^{\top} + \eM^{\dagger} - (\sM)^{\dagger}S^{\top}) \\
    =\;& S_1 \sA S^{\top} + S_1 \sA \sM (\eM^{\dagger} - (\sM)^{\dagger}S^{\top}) + (\tM - S_1 \sA \sM) (\sM)^{\dagger}S^{\top} \\
    &\quad + (\tM - S_1 \sA \sM) (\eM^{\dagger} - (\sM)^{\dagger}S^{\top}).
\end{align*}
By the perturbation bound of the Moore-Penrose \mbox{inverse~\citep{wedin1973perturbation}}, $\|\eM^{\dagger} - (\sM)^{\dagger} S^{\top}\|_2 = \bigo(\|\eM - S \sM\|_2) = \bigo(\|\eN_1 - \sN_1\|_2)$. Hence, $\|\tA - S_1 \sA S^{\top}\|_2$ is of the same order as $\|\eN_1 - \sN_1\|_F$.

As mentioned for \corele{}, since $\ez_t$ approximates $S \sz_t$, the ground truth for the latent dynamics is $[S \sA S^{\top}, S \sB]$.
To align $\tA$ with $S \sA S^{\top}$, we compute another matrix $\eS_0$ by solving the regression~\eqref{eqn:align-lr} from $\eM_1 [h_t; u_t]$ to $\eM h_{t+1}$. Since $\eM_1 [h_t; u_t]$ and $\eM h_{t+1}$ approximate $S_1 \sz_{t+1}$ and $S \sz_{t+1}$, respectively,~\eqref{eqn:align-lr} is essentially a linear regression that estimates the alignment matrix $S S_1^{\top}$ with perturbed variables $\eM_1 [h_t; u_t]$ and $\eM h_{t+1}$. 
The $\ell_2$-norm of the perturbation on $S \sz_t$ is given by~\eqref{eqn:err-subg-norm}.
Similarly, the $\ell_2$-norm of the other perturbation $\|\eM_1 [h_t; u_t] - S_1 \sz_{t+1}\|$ is sub-Gaussian with its mean and sub-Gaussian norm bounded by  
\begin{align*}
    \bigo(H^{1/2} (H (d_y + d_u) + d_x)^{21/2} d_x^{3/2} T^{-1/2} \log^{1/2}(H/p)).
\end{align*}

Hence, by the perturbed linear regression bound (Lemma~\ref{lem:pert-lr}) with $\epsilon = 0$ therein, for $T$ greater than a constant polynomial in the problem parameters, we have
\begin{align*}
    \|\eS_0 - S S_1^{\top}\|_2
    =\;& \bigo(H^{1/2} (H(d_y + d_u) + d_x)^{21/2} d_x^{3/2} T^{-1/2} \log^{1/2}(H/p) \cdot d_x^{1/2} \log(T/p)) \\
    =\;& \bigo(H^{11} d_x^{2} (d_y + d_u)^{21/2} T^{-1/2} \log^{3/2}(T/p)),
\end{align*}
where we use $H (d_y + d_u) + d_x = \bigo(H (d_y + d_u))$ due to $d_h = H(d_y + d_u) \ge d_x$.  
As a result, 
\begin{align*}
    \|\eA - S \sA S^{\top}\|_2
    =\;& \|\eS_0 \tA - S S_1^{\top} S_1 \sA S^{\top}\|_2 \\
    =\;& \|(\eS_0 - S S_1^{\top}) \tA\|_2 + \| S S_1^{\top} (\tA - S_1 \sA S^{\top})\|_2 \\
    =\;& \bigo(H^{11} d_x^{2} (d_y + d_u)^{21/2} T^{-1/2} \log^{3/2}(T/p)),
\end{align*}
and $\|\eB - S \sB\|_2$ has the same order.
Hence, $\|\eA - S \sA S^{\top}\|_2$, $\|\eB - S\sB\|_2$ and $\|\eQ - S \sQ S^{\top}\|_2$ are all bounded by 
\begin{align*}
    \bigo(H^{11} d_x^{23/2} (d_y + d_u)^{21/2} T^{-1/2} \log^{5/2}(T/p)).
\end{align*}

\vspace{2pt}
\noindent\textbf{Certainty equivalent linear quadratic control.} 
As argued in Part I of this work, $\E[x_t^{\top} \sQ x_t] - \E[(\sz_t)^{\top} \sQ \sz_t] = \ipc{\sQ}{\E[(x_t - \sz_t)(x_t - \sz_t)^{\top}]}_{F}$ is a constant regardless of the actions $(u_\tau)_{\tau \le t}$, and it suffices to consider the latent state space for studying the policy suboptimality gap.
In the latent state space, for $t \ge H$, the action $u_t = \eK \eM h_t = \eK (S \sz_t + \delta_t^z)$, where $\delta_t^z := (\eM - S \sM) h_t - S \delta_t$ is a Gaussian noise vector correlated with $\sz_t$. Recall that $\delta_t = (\soA)^{H} \sz_{t-H}$ is the residual error.
Since $\sz_t = \sM h_t + \delta_t$, we have 
\begin{align*}
    \delta_t^z 
    = (\eM - S \sM) (\sM)^{\dagger} (\sz_t - \delta_t) - S \delta_t
    = (\eM - S \sM) (\sM)^{\dagger} \sz_t - \eM (\sM)^{\dagger} \delta_t.
\end{align*}

By Proposition~\ref{prp:full-rank-cov}, $\sigma_{\min}(\sM) = \Omega(\nu H^{-1/2}) = \Omega(H^{-1/2})$. Hence, we have 
\begin{align*}
    \|(\eM - S \sM) (\sM)^{\dagger}\|_2 = \bigo( H^{1/2} \|\eM - S \sM\|_2).
\end{align*}
Then, substituting $\delta_t^z$ into $u_t$, we have
\begin{align*}
    u_t = \eK (S + (\eM - S \sM) (\sM)^{\dagger}) \sz_t - \eK \eM (\sM)^{\dagger} \delta_t
    = \tK \sz_t - \eK \eM (\sM)^{\dagger} \delta_t,
\end{align*}
where $\tK := \eK (S + (\eM - S \sM) (\sM)^{\dagger})$.

We now consider the stability of the system $(\sA, \sB)$ under feedback gain $\tK S$.
By~\citep[Propositions 1 and 2]{mania2019certainty}, there exists a dimension-free constant $T_0 > 0$ depending polynomially on the problem parameters such that as long as $T \ge T_0$, $\|\eK - \sK S^{\top}\|_2$ is \mbox{bounded by}  
\begin{align*}
    \bigo(H^{11} d_x^{23/2} (d_y + d_u)^{21/2} T^{-1/2} \log^{5/2}(T/p)),
\end{align*}
and that $\eK$ stabilizes the system $(S\sA S^{\top}, S\sB)$, i.e., $\sA + \sB \eK S$ is stable.
This also implies that $\|\eK\|_2 = \bigo(\|\sK\|_2) = \bigo(1)$.
By~\citep[Lemma 5]{mania2019certainty}, there exists a dimension-free constant $\eps_0 > 0$ with $\eps_0^{-1}$ depending polynomially on the problem parameters, such that as long as $\|(\eM - S \sM) (\sM)^{\dagger}\|_2 \le \eps_0$, $\sA + \sB \tK$ is stable.

The control input $u_t = \tK \sz_t - \eK \eM (\sM)^{\dagger} \delta_t$, $t \ge H$ contains an additional \emph{perturbation term}: $-\eK \eM (\sM)^{\dagger} \delta_t$, whose covariance matrix is bounded by
\begin{align*}
    \|\eK \eM (\sM)^{\dagger} (\soA)^{H}\|_2 \|\Cov(\sz_{t-H})\|_2^{1/2}
    = \bigo(H^{1/2} \alpha \rho^{H} \|\Cov(\sz_{t-H})\|_2^{1/2}).
\end{align*}
For $H\le t \le 2H$, $\|\Cov(\sz_{t-H})\|_2 = \bigo(1)$ by an arbitrary stabilizing controller, as discussed at the end of~\S\ref{sec:setup}. 
Let $\Sigma$ denote the covariance matrix of the stationary distribution of $(\sz_t)_{t\geq 0}$ in the system $(\sA, \sB)$ under the controller $u_t = \tK \sz_t$, $t \ge 0$.
By Assumption~\ref{asp:asp}, $\|\Sigma\|_2 = \bigo(1)$. Hence, $ \|\Cov(\sz_{t-H}) - \Sigma\|_2$ is also of order $\bigo(1)$ for $H\le t\le 2H$.

Applying Lemma~\ref{lem:ls-pert} to the latent system $(\sA, \sB)$ for $t \ge H$, we have that for large enough $t$, 
\begin{align}  \label{eqn:pert-cov-diff}
    \|\Cov(\sz_t) - \Sigma\|_2 =\;& \bigo(H^{1/2} \alpha \rho^{H}),
\end{align}
where the $\eps$ in Lemma~\ref{lem:ls-pert} is on the order of~\eqref{eqn:pert-cov-diff} and satisfies the required conditions by our choice of $H$.
Hence, for large enough $t$, the cost difference incurred by the perturbation term at each step is bounded by
\begin{align*}
    \big|\ipc{\sQ}{\Cov(\sz_t) - \Sigma}_{F}\big|
    + \big|\ipc{\sR}{\Cov(\tK \sz_t - \eK \eM (\sM)^{\dagger} \delta_t) - \Cov(\tK \sz_t)}_{F}\big|
    = \bigo((d_x + d_u) H^{1/2} \alpha \rho^{H}),
\end{align*}
where the bound on covariance difference in the second term is due to~\citep[Lemma 14]{tian2022cost}. This implies that the difference in the time-averaged expected cost satisfies
\begin{align}  \label{eqn:e2e-1}
    |J(\epi) - J((\eM, \tK))| = \bigo((d_x + d_u) H^{1/2} \alpha \rho^{H}).
\end{align}

Finally, we consider the policy suboptimality gap
\begin{align*}
    J(\epi) - J(\spi) = J(\epi) - J((\eM, \tK)) + J((\eM, \tK)) - J(\spi).
\end{align*}
Let $\sP$ denote the optimal value matrix in system $(\sA, \sB, \sQ, \sR)$ given by the DARE~\eqref{eqn:p-riccati}.
By Assumption~\ref{asp:asp}, $\|\sP\|_2 = \bigo(1)$. 
By~\citep[Lemma 3]{mania2019certainty} (also~\citep[Lemma 12]{fazel2018global}), we have
\begin{align*}
    J((\eM, \tK)) - J(\spi)
    =\;& \bigo(\tr(\Sigma (\tK - \sK)^{\top} (\sR + (\sB)^{\top}\sP \sB) (\tK - \sK) )) \\
    =\;& \bigo(\|\Sigma\|_2 \|\sR + (\sB)^{\top}\sP \sB\|_2 \|\tK - \sK\|_F^2).
\end{align*}

Since 
\begin{align*}
    \|\tK - \sK\|_2
    \le\;& \|\eK\|_2 \|(\eM - S \sM) (\sM)^{\dagger}\|_2 + \|\eK - \sK S^{\top}\|_2 \\
    =\;& \bigo(H^{1/2} \|\eM - S \sM\|_2
    + H^{11} d_x^{23/2} (d_y + d_u)^{21/2} T^{-1/2} \log^{5/2}(T/p)) \\
    \overset{(i)}{=}\;& \bigo(H^{11} d_x^{23/2} (d_y + d_u)^{21/2} T^{-1/2} \log^{5/2}(T/p)),
\end{align*}
where $(i)$ is due to the bound on $\|\eM - S \sM\|_2$ in~\eqref{eqn:n-m-bd}, the suboptimality gap $J((\eM, \tK)) - J(\spi)$ is bounded by 
\begin{align}  \label{eqn:e2e-2}
    \bigo(H^{22} d_x^{23} (d_y + d_u)^{21} d_u T^{-1} \log^{5}(T/p)).
\end{align}
Due to our choice of $H$, the bound in~\eqref{eqn:e2e-2} dominates that in~\eqref{eqn:e2e-1}.
Hence, combining the bounds in~\eqref{eqn:e2e-1} and~\eqref{eqn:e2e-2}, we have
\begin{align*}
    J(\epi) - J(\spi) =\;& \bigo(H^{22} d_x^{23} (d_y + d_u)^{21} d_u T^{-1} \log^{5}(T/p)),
\end{align*}
which completes the proof.
\endproof

\section{Additional discussion on MuZero}

In this section, we discuss MuZero~\citep{schrittwieser2020mastering} in more detail, given its impressive performance and connection to this work.
Announced by DeepMind in 2019, MuZero extends the line of works including AlphaGo~\citep{silver2016mastering}, AlphaGo Zero~\citep{silver2017mastering}, and AlphaZero~\citep{silver2018general} by obviating the knowledge of the game rules.
MuZero matches the superhuman performance of AlphaZero in Go, shogi and chess, while outperforming model-free RL algorithms in Atari games. MuZero builds upon the powerful planning procedure of Monte Carlo Tree Search, with the major innovation being \emph{learning a latent model}. The latent model replaces the rule-based simulator during planning, and avoids the burdensome planning in pixel space for \mbox{Atari games}.

MuZero is a milestone algorithm in representation learning for control. Intuitively, the algorithm design makes sense, but its complexity has so far inhibited a formal theoretical study. 
On the other hand, statistical learning theory for linear dynamical systems and control has evolved rapidly in recent years~\citep{tsiamis2022statistical}; for partially observable linear dynamical systems, much of the work relies on learning \emph{Markov parameters}, lacking a direct connection to the empirical methods used in practice for possibly nonlinear systems. In this work, we aim to bridge the two areas by studying provable MuZero-style latent model learning in LQG control. In a sense, this work can be seen as a case study of the state representation learning algorithm of MuZero in linear systems.

The state representation learning algorithm of MuZero features three ingredients: 1) stacking frames, i.e., observations, as input to the representation function; 2) predicting costs, ``optimal'' values, and ``optimal'' actions from latent states; and 3) implicit learning of latent dynamics by predicting these quantities from latent states at future time steps. 
These are the defining characteristics of the MuZero-style algorithm that we shall consider.
In this work, we also handle partial observability by using a finite-length history, but we use a history of observations and actions, rather than only observations. 
In MuZero, the ``optimal'' values and actions are found by the powerful online planning procedure. In this work, we simplify the setup by considering data collected using random actions, which are known to suffice for identifying a partially observable linear dynamical system~\citep{oymak2019non}. In this setup, the values become those associated with this trivial policy and we do not predict actions since they are simply random noises. 
Note that although our study of the above ingredients is directly motivated by MuZero, previous empirical works have also explored them. 
For example, frame stacking has been a widely used technique to handle partial observability~\citep{mnih2013playing, mnih2015human, oh2017value}; predicting values for learning a latent model has been studied in~\citep{oh2017value}, which also learns the latent state transition implicitly.

\section{Concluding remarks}

We studied cost-driven state representation learning for solving unknown infinite-horizon time-invariant LQG control. 
We established finite-sample guarantees for two methods, which differ in whether the latent state dynamics is learned \emph{explicitly} by minimizing the transition prediction errors, or \emph{implicitly}  by using the transition for future cost predictions, with the latter being motivated by that used in MuZero~\citep{schrittwieser2020mastering}. 
For MuZero-style latent model learning, our analysis identified a coordinate misalignment problem in the latent state space, suggesting the value of \emph{multi-step} future cost prediction.
A limitation of this work is that we only considered state representation based on truncated histories, i.e., frame stacking, as used in MuZero; the \emph{recursive form} of the representation function, as in the Kalman filter, is also used  empirically~\citep{ha2018world, hafner2019dream}, and might be worth \mbox{further investigation}.

Together with Part I of this work, we have established a theoretical framework for analyzing cost-driven state representation learning for control. 
This opens up many opportunities for future research. 
For example, one may wonder about the extent to which cost-driven state representation learning provably generalizes to nonlinear observations or systems. 
Besides, one argument for favoring latent-model-based over model-free methods is their ability to generalize across different tasks; our framework may offer a perspective to formalize this intuition.
Moreover, given the ubiquity of visual perception in real-world control systems, it is of practical value to study state representation learning with a time-varying observation function or multiple observation functions, modeling images taken from different angles.

\section*{Acknowledgment} 
This work was supported in part by the NSF TRIPODS program (award number DMS-2022448). KZ acknowledges partial support from Simons-Berkeley Research Fellowship, the U.S. Army Research Office grant W911NF-24-1-0085, and the NSF CAREER Award 2443704.

\bibliographystyle{plainnat}
\bibliography{latent}